\documentclass[twoside,11pt]{article}

%

\usepackage{jmlr2e}
\synctex=1
\expandafter\let\csname equation*\endcsname\relax
\expandafter\let\csname endequation*\endcsname\relax
\usepackage{bm}
\usepackage{enumerate}
\usepackage{mathrsfs,color}
\usepackage{amsfonts}\usepackage{multirow}
\usepackage{amssymb}
\usepackage{dsfont}
\usepackage{blkarray}
\usepackage{amsmath}
\usepackage{float}
\usepackage{amssymb}
\usepackage[toc,page]{appendix}
\usepackage{graphicx,afterpage}
\usepackage{epstopdf}
\usepackage{cancel}
\usepackage{mathdots}
\usepackage{natbib}

\usepackage{mhequ}
\numberwithin{equation}{section}
\numberwithin{figure}{section}

\newcommand\tabcaption{\def\@captype{table}\caption}

\newtheorem{thm}{Theorem}[section]

\newtheorem{lem}[thm]{Lemma}
\newtheorem{prop}[thm]{Proposition}
\newtheorem{defn}[thm]{Definition}

\newtheorem{aspt}[thm]{Assumption}
\newtheorem{rem}[thm]{Remark}



\newcommand{\tr}{\textbf{tr}}

\newcommand{\E}{\mathbb{E}}
\newcommand{\Prob}{\mathbb{P}}

\newcommand{\reals}{\mathbb{R}}
\newcommand{\norm}[1]{\| #1 \|}
\newcommand{\unit}{\mathds{1}}

\newcommand{\blue}{}
\newcommand{\red}{}


\newcommand{\region}{\mathcal{R}}
\newcommand{\mat}[1]{#1}

\newcommand{\poly}{\text{poly}}
\newcommand{\abs}[1]{\left|#1\right|}

\newcommand{\calA}{\mathcal{A}}



\firstpageno{1}
\usepackage{lastpage}



\ShortHeadings{Stationary-Point Hitting Time and Ergodicity of SGLD}{Chen, Du, and Tong}

\begin{document}
	
	\title{On Stationary-Point Hitting Time and Ergodicity of Stochastic Gradient Langevin Dynamics}
	
	\author{\name Xi Chen \email xchen3@stern.nyu.edu\\
		\addr Stern School of Business\\
		New York University, New York, NY 10012, USA
		\AND
		\name Simon S. Du \email ssdu@ias.edu\\
		\addr School of Mathematics\\
		Institute for Advanced Study, Princeton, NJ 08540, USA	
		\AND
		\name Xin T. Tong \email mattxin@nus.edu.sg \\
		\addr Department of Mathematics\\
		National University of Singapore, Singapore 119076 , Singapore}
	
	\editor{}
	
	\maketitle
	
	\begin{abstract}
		\label{sec:abs}
Stochastic gradient Langevin dynamics (SGLD) is a fundamental algorithm in stochastic optimization.
Recent work by \citet{zhang2017hitting} presents an analysis for the hitting time of SGLD for the first and second order stationary points. The proof in  \citet{zhang2017hitting} is a two-stage procedure through bounding the Cheeger's constant, which is rather complicated and leads to loose bounds.  
In this paper, 
using intuitions from stochastic differential equations, 
we provide a \emph{direct} analysis for the hitting times of SGLD to the first and second order stationary points.  
Our analysis is straightforward. It only relies on basic linear algebra and probability theory tools.
Our direct analysis also leads to tighter bounds comparing to \cite{zhang2017hitting} and shows the explicit dependence of the hitting time on different factors, including dimensionality, smoothness, noise strength, and step size effects.  Under suitable conditions,  we show that the hitting time of SGLD to first-order stationary points can be dimension-independent. Moreover, we apply our analysis to study several important online estimation problems in machine learning, including linear regression, matrix factorization, and online PCA.

	\end{abstract}

	
	\section{Introduction}
	\label{sec:intro}

	Adding noise to the stochastic gradient descent algorithm has been found helpful in training deep neural networks~\citep{neelakantan2015adding} and in turn improving performance in many applications~\citep{kaiser2015neural,kurach2015neural,neelakantan2015neural,zeyer2017comprehensive}. 
For example, \citet{kurach2015neural} proposed a model named Neural Random-Access Machine to learn basic algorithmic operations like permutation, merge, etc.
In their experiments, they found adding noise can significantly improve the success rate.
However, theoretical understanding of gradient noise is still limited. 
In this paper, we study a particular noisy gradient-based algorithm, Stochastic Gradient Langevin Dynamics (SGLD). 
This algorithm intends to minimize a nonnegative objective function of form
\begin{equation}\label{eq:pop}
F(X)=\E_{\omega\sim \pi} f(X,\omega),\quad X\in \reals^d.
\end{equation}

Often, the distribution $\pi$  describes either a population distribution or an empirical distribution over a given dataset. At each iteration, the SGLD is updated by
\begin{align} 
X_{n+1}=X_n-\eta_{n+1} \nabla f(X_n, \omega_{n+1})+\delta_0\sqrt{\eta_{n+1}} \zeta_{n+1}. \label{eqn:sgld}
\end{align}
Here $\nabla f(X_n, \omega_{n+1})$ is the stochastic gradient of the objective function, $\omega_n$ are i.i.d. samples from $\pi$, $\zeta_{n+1}\sim \mathcal{N}(0, \mathbf{I}_d)$  is a standard $d$-dimensional Gaussian random vector, and $\eta_{n+1}$ is the step size parameter. 
As compared to the stochastic gradient descent (SGD), the SGLD imposes a larger step size for the noise term (i.e., $\sqrt{\eta_{n+1}}$ instead of $\eta_{n+1}$ for SGD \red{and $\eta_{n+1}$ will decrease to zero as $n\to \infty$}), which allows the SGLD to aptly navigate a landscape containing multiple critical points.
SGLD obtains its name because it is a discrete approximation to the continuous Langevin diffusion process, which can be described by the following stochastic differential equation (SDE)\begin{align}
d X_t = -\nabla F(X_t) dt + \delta_0 dW_t \label{eqn:langevin_diffusion},
\end{align}
\red{On the other hand, it is worthwhile noting that SGLD is different from the direct discretization of \eqref{eqn:langevin_diffusion}, which can be written as
\begin{equation}
\label{eqn:ULA}
X_{n+1}=X_n-\eta_{n+1} \nabla F(X_n)+\delta_0\sqrt{\eta_{n+1}} \zeta_{n+1}.
\end{equation} 
Algorithm \eqref{eqn:ULA} is known as the Unadjusted Langevin Algorithm (ULA). It is used in machine learning for sampling tasks, with some of its theoretical properties studied by \citet{dalalyan2017theoretical} (often assuming $F$ is convex). 
But the implementation of ULA requires the evaluation of the population gradient $\nabla F(X_n)$, which is often not available for machine learning application as the distribution $\pi$ in \eqref{eq:pop} is unavailable. 
}


Theoretically, SGLD has been studied from various perspectives.
Statistically, SGLD has been shown to have better generalization ability than the simple stochastic gradient descent  (SGD) algorithm~\citep{mou2017generalization,tzen2018local}.
From the optimization point of view, it is well known that SGLD traverses all stationary points asymptotically.
More recently, quantitative characterizations of the mixing-time are derived~\citep{raginsky2017non,xu2017global}.
However, bounds in these papers often depend on a quantity called the spectral gap of Langevin diffusion process (Equation~\eqref{eqn:langevin_diffusion}), 
	which in general has an exponential dependence on the dimension. 
We refer readers to Section~\ref{sec:rel} for more discussions.

While these bounds are pessimistic,  in many machine learning applications, 
finding a local minimum has already been useful. In other words, we only need the critical point hitting time  bound instead of mixing time bound.\footnote{See Section~\ref{sec:pre} for the precise definition.}
To the best of our knowledge, \citet{zhang2017hitting} is the first work studying the hitting time property of SGLD.
The analysis of \cite{zhang2017hitting} consists of two parts.
First, they defined a geometric quantity called Cheeger's constant of the target regions, and showed that
the Cheeger's constant of certain regions (e.g. the union of all approximate local minima) can be estimated.
Next, they derived a generic bound that relates the hitting time of SGLD and this Cheeger's constant.
	Through these two steps, they showed the hitting time of critical points only has a polynomial dependence on the dimension. 

However, due to this two-step analysis framework, the hitting time bound derived is often not tight. 
Technically, it is very challenging to accurately estimate the Cheeger's constant of the region of interest.
In particular, in many machine learning problems, useful structures such as low-rank and sparsity are available, which can be potentially exploited by SGLD to achieve faster convergence.
	Therefore, a natural research question is that: instead of using a two-stage approach, is there a direct method to obtain tighter hitting time bounds of SGLD that can incorporate underlying structural assumptions?

In this paper, we first consider the hitting time of SGLD to  first order and second order approximation stationary points.
For both types of stationary points, we provide a simple analysis of the hitting time of SGLD, which is motivated from the succinct continuous-time analysis.
Notably, our analysis only relies on basic real analysis, linear algebra, and probability tools.
In contrast to the indirect approach adopted by \citet{zhang2017hitting}, we directly estimate the hitting time of SGLD and thus obtain tighter bounds in terms of the dimension, error metric, and other problem-dependent quantities such as smoothness.
Comparing our results with  \cite{zhang2017hitting}, we have two main advantages.
First, our results are applicable to \emph{decreasing step sizes}---the setting widely used in practice. 
While previous analysis (including  \citet{zhang2017hitting})  mainly considers the constant step size setting, which limits its potential applications.
Second, in certain scenarios (see Section \ref{sec:examples}), we can obtain \emph{dimension independent} hitting time bounds, whereas bounds in \cite{zhang2017hitting} all require at least a polynomial dependence of dimension.

\red{In addition to hitting time, we further establish the ergodicity of SGLD, which is a unique property of SGLD and does not hold for classical stochastic gradient nor perturbed gradient methods (see, e.g., \cite{jin2017escape} and references therein). Roughly speaking, We show that SGLD can reach any given point in the state space given enough iterations. Please see Section \ref{sec:ergodic} for more details.
}

\subsection{Preliminaries and Problem Setup}
\label{sec:pre}
We use $\norm{\cdot}$ to denote the Euclidean norm of a finite-dimensional vector. We also use
$\langle u, v\rangle=u^Tv$ to denote the inner product of two vectors.
For a real symmetric matrix $\mat{A}$, we use $\lambda_{\max}\left(\mat{A}\right)$ to denote its largest eigenvalue and $\lambda_{\min}\left(\mat{A}\right)$ its smallest eigenvalue.
Let $O(\cdot)$ denote standard Big-O  notation, only hiding absolute constants.


In this paper we use either symbol $B$, $C$ or $D$ to denote problem dependent parameters.
The difference is, 
the $C$-constants can often be picked independently of the dimension, but the $B$ and $D$-constants usually increase with the dimension. 
Typical example can be, the spectral norm of the $d$-dimensional identity matrix remains $1$ for any $d$, but its trace increases linearly with $d$. 
On the other hand, the $B$ constants are in practice controlled by the batch sizes.
By writing two types of constants differently, it helps us to interpret the performance of SGLD in high dimensional settings. 
On the other hand, our results hold even if the $C$-constants increase with $d$, which is possible in certain scenarios. 
We denote $\eta_{1:n} = \sum_{i=1}^{n}\eta_i$, 
$\eta_{o:n} = \sum_{i=o}^{n}\eta_{i}$
	, $\eta_{1:n}^2 = \sum_{i=1}^{n}\eta_i^2$.
Throughout the paper, we use $0 < c < 1$ to denote an absolute constant, which may change from line to line.

In this paper we focus on SGLD defined in Equation~\eqref{eqn:sgld}.
Note that the stochastic gradient $\nabla f(X_n, \omega_{n+1})$ can be decomposed into two parts: one part is its mean $\nabla F(X_n)$, the other part is the difference between the stochastic gradient and its mean:
\begin{equation}
\label{sys:GDanom}
\xi_{n+1}:=\nabla f(X_n, \omega_{n+1})-\nabla F(X_n).
\end{equation}
We note that $\xi_{n+1}$  is a martingale difference series. 
With these notations, we can write the SGLD iterates as 
\begin{equation}
\label{sys:SGLD}
X_{n+1}=X_n-\eta_{n+1} (\nabla F(X_n)+\xi_{n+1})+\delta_0 \sqrt{\eta_{n+1}}\zeta_{n+1}
\end{equation}
Note that the step size $\eta_n$ is not a constant. This is crucial, since we know for SGD to converge, the step size needs to converge to zero. In practice, it is often picked as an inverse of polynomial, 
\begin{equation}
\label{sys:eta}
\eta_n=\eta_0 n^{-\alpha}, \quad \alpha \in [0,1),  
\end{equation}
where $\alpha=0$ leads to a constant step size.

We study the hitting time of SGLD.
For a region $\region \subseteq \mathbb{R}^d$ of interest, the hitting time to $\region$ is defined as the first time that the SGLD sequence $X_t$ falls into the region $\region$:
\begin{align}
	\tau_{\region} = \arg\min_{n \ge 0} \left\{X_n \in \region \right\} \label{eqn:hitting_time_defn}.
\end{align}


In this paper we are interested in finding approximate first order stationary points (FOSP) and second order stationary points (SOSP) of the objective function. 
Note that finding a FOSP or SOSP is sufficient for many machine learning applications. 
For convex problems, an FOSP is already a global minimum and for certain problems like matrix completion, an SOSP is a global minimum and enjoys good statistical properties as well~\citep{ge2017no}.
Please see Section~\ref{sec:rel} for discussions.

\begin{defn}[Approximate First Order Stationary Points (FOSP)]
	\label{defn:fosp}
Given $\epsilon > 0$, we define approximate first order stationary points as
	\begin{align*}
		\region_{fosp}(\epsilon) = \left\{X \in \mathbb{R}^d, \norm{\nabla F(X)} \le \epsilon\right\},
	\end{align*}
and denote by $\tau_{fosp} = \arg\min_{n\ge 0} \left\{X_n \in \region_{fosp}(\epsilon) \right\}$ the corresponding hitting time.
\end{defn} 

\begin{defn}[Approximate Second Order Stationary Points (SOSP)]
	\label{defn:sosp}
Given $\epsilon, \lambda_{\epsilon} > 0$, we define approximate second order stationary points as
\begin{align*}
		\region_{sosp}(\epsilon,\lambda_\epsilon) = \left\{X \in \mathbb{R}^d, \norm{\nabla F(X)} \le \epsilon, \lambda_{\min}\left(\nabla^2 F(X)\right) \ge -\lambda_\epsilon\right\},
\end{align*}
and denote by $\tau_{sosp} = \arg\min_{n\ge 0} \left\{X_n \in \region_{sosp}(\epsilon) \right\}$ the corresponding hitting time.
\end{defn}
We note that in Definition \ref{defn:sosp}, $\lambda_\epsilon$ is a function of $\epsilon$, and
it is often chosen as $\sqrt{\epsilon}$ in existing works~\citep{ge2017no}.
The main objective of this paper is showing that $\tau_{fosp}$ and $\tau_{sosp}$ are bounded in probability for SGLD sequence, and show how they depend on the problem parameters.

\subsection{Organization}
\label{sec:org}
This paper is organized as follows.
In Section~\ref{sec:rel}, we review related works.
In Section~\ref{sec:first_order}, we present our hitting time analysis of SGLD to the first order stationary points.
In Section~\ref{sec:second_order}, we extend our analysis to second order stationary points.
In Section~\ref{sec:ergodic}, we establish the ergodicity of SGLD.
In Section~\ref{sec:examples}, we provide three examples to illustrate the convergence  of SGLD for machine learning applications.
We conclude and discuss future directions in Section~\ref{sec:con}.
Proofs for technical lemmas and the results in Section~\ref{sec:examples} are deferred to the appendix.
%
%
%

	\section{Related Works}
	\label{sec:rel}


From the optimization point of view, \cite{raginsky2017non} gave a non-asymptotic bound showing that SGLD finds an approximate global minimizer in $\poly\left(d,1/\epsilon,1/\lambda^*\right)$ under certain conditions. 
In particular, this bound is a mixing-time bound and it depends on the inverse of the uniform spectral gap parameter $\lambda^*$ of the Langevin diffusion dynamics (Equation~\eqref{eqn:langevin_diffusion}), which is in general $O(e^{-d})$.
More recently, \citet{xu2017global} improved the dimension dependency of the bound and analyzed the finite-sum setting and the variance reduction variant of SGLD.
All these bounds  depend on the spectral gap parameter which scales exponentially with the dimension.
\citet{tzen2018local} gave a finer analysis on the recurrence time and escaping time of SGLD through empirical metastability.
While this result does not depend on the spectral gap, there is not much discussion on how does SGLD escape from saddle points. 
Therefore, finding a global minimum for a general non-convex objective with a good dimension dependence might be too ambitious.

On the other hand,  in many machine learning problems, finding an FOSP or an SOSP is sufficient.
Recently, a line of work shows FOSP or SOSP has already provided good statistical properties for achieving desirable prediction performance.
Examples include matrix factorization, neural networks, dictionary learning, e.t.c.~\citep{hardt2016identity,ge2015escaping,sun2017complete,ge2017no,park2017non,bhojanapalli2016global,du2018power,du2017spurious,ge2017learning,du2018algorithmic,mei2017solving}.

These findings motivate the research on designing provably algorithms to find FOSP and SOSP.
For FOSP, it is well known that stochastic gradient descent finds an FOSP in polynomial time~\citep{ghadimi2016accelerated} and recently improved by~\citet{allen2016variance,reddi2016stochastic} in the finite-sum setting.
For finding SOSP, \citet{lee2016gradient} showed if all SOSPs are local minima, randomly initialized gradient descent with a fixed step size  also converges to local minima almost surely. 
The classical cubic-regularization~\citep{nesterov2006cubic} and trust region~\citep{curtis2014trust} algorithms find SOSP in polynomial time if full Hessian matrix information is available.
Later, \citet{carmon2016accelerated, agarwal2016finding, carmon2016gradient} showed that the requirement of full Hessian access can be relaxed to Hessian-vector products.
When only gradient information is available, a line of work shows noise-injection helps escape from saddle points and find an SOSP~\citep{jin2017escape,du2017gradient,allen2017natasha,jin2017accelerated,levy2016power}.
If we can only access to stochastic gradient, 
\citet{ge2015escaping} showed that adding perturbation in each iteration suffices to escape saddle points in polynomial time.
The convergence rates are improved later by \citet{allen2017neon2,allen2017natasha,xu2017first,yu2017third,daneshmand2018escaping,jin2019nonconvex,fang2019sharp}.

\red{
Here, we want to emphasize these perturbed gradient methods are fundamentally different from SGLD considered in the current paper.
The noise injected in perturbed gradient methods is of order $O\left(\eta_{n}\right)$ and for SGLD, the noise is of order $O\left(\sqrt{\eta_n}\right)$, which is much bigger than $O\left(\eta_n\right)$.
While both methods can find an SOSP in polynomial time, the larger noise enables SGLD to traverse the entire state space, as we will demonstrate in Section~\ref{sec:ergodic}.
	}

Theoretically, however, there is a significant difference between SGD-based algorithms and SGLD.
In SGD, the squared norm of the noise scales quadratically with the step size, which has a smaller order than the true gradient.
On the other hand, in SGLD,  the squared norm of the noise scales linearly with the step size, which has the same order as the true gradient.
In this case, the noise in SGLD is lower bounded away from zero which enables SGLD to escape saddle points. 
Nevertheless, this escape mechanism is subtle, and it requires careful balances of hyper parameters and sophisticated analyses.
To our knowledge, \cite{zhang2017hitting} is the only work studied the hitting time of stationary points.
As we discussed in Section~\ref{sec:intro}, their analysis is indirect, which leads to loose bounds.
In this paper, we directly analyze the hitting time of SGLD and obtain tighter bounds.

Finally, beyond improving the training process for non-convex learning problems, SGLD and its extensions have also been widely used in Bayesian learning~\citep{welling2011bayesian,chen2015convergence,ma2015complete,dubey2016variance,ahn2012bayesian} and approximate sampling~\citep{brosse2017sampling,bubeck2018sampling,durmus2017nonasymptotic,dalalyan2017further,dalalyan2017theoretical,dalalyan2017user}.
These directions are orthogonal to ours because their primary goal is to characterize the probability distribution induced from SGLD.
	
	\section{Hitting Time to First Order Stationary Points}
	\label{sec:first_order}

As the first step, in this section we analyze the hitting time to FOSP by SGLD.

\subsection{Warm Up: A Continuous Time Analysis}
We will first use a continuous time analysis to illustrate the main proof idea.
Recall that if we let the step size $\eta_n \rightarrow 0$, the dynamics of SGLD can be characterized by an SDE 
\begin{align*}
d X_t = -\nabla F(X_t) dt + \delta_0 dW_t .
\end{align*}
Using Ito's formula, we obtain the dynamics of $F(X_t)$:
\[
dF(X_t)=-\|\nabla F(X_t)\|^2 dt + 
\frac12\delta_0^2 \text{tr}(\nabla^2 F(X_t)) dt + \delta_0 \langle  \nabla F(X_t),  dW_t\rangle.
\] 
Now given $\epsilon > 0$ and recall $\tau_{fosp}$ is the hitting time of the first order stationary points (FOSP). 
By  Dynkin's formula, for any $T>0$, we have
\begin{align}
0\leq \E F(X_{\tau_{fosp}\wedge T})&=F(X_0)+\E \int^{\tau_{fosp}\wedge T}_0   \left(\frac12\delta_0^2 \text{tr}(\nabla^2 F(X_t))-\|\nabla F(X_t)\|^2\right) dt.
\label{eqn:cont_first_order}
\end{align}
Note that before $\tau_{fsop}$, the gradient satisfies $\|\nabla F(X_t)\|\geq \epsilon$. 
If we assume the Hessian satisfies $\|\nabla^2 F(x)\|\leq C_2,$ then $\text{tr}(\nabla^2 F(x))\leq C_2d$.  Using the assumption that $F(\cdot)$ is non-negative, we can obtain the following estimate
\begin{align*}
F(X_0) \ge &\E \int^{\tau_{fosp}\wedge T}_0   \left(-\delta_0^2 \text{tr}(\nabla^2 F(X_t))+\|\nabla F(X_t)\|^2\right) dt \\
\ge &\E \int^{\tau_{fosp}\wedge T}_0   \left(\epsilon^2-\delta_0^2 \text{tr}(\nabla^2 F(X_t))\right) dt  \\
\ge & (\epsilon^2-C_2d \delta_0^2)\E \tau_{fosp}\wedge T.
\end{align*}
Therefore, re-arranging terms, we have
\[
\E \tau_{fosp}\wedge T\leq \frac{F(X_0)}{\epsilon^2-C_2d \delta_0^2}.
\]
Applying Markov's inequality, we have \[
\Prob(\tau_{fosp}>T)\leq \frac{F(X_0)}{T(\epsilon^2-C_2d \delta_0^2)}.
\]
The above derivations show if we pick a small $\delta_0\leq \frac{\epsilon}{\sqrt{2C_2 d}}$ and $T$ is large enough, we know SGLD hits an approximate first order stationary point in time less than $T$ with high probability.
In the next section, we use this insight from the continuous time analysis to derive hitting time bound of the discrete time SGLD algorithm.

\subsection{Discrete Time SGLD Analysis}
We first list technical assumptions for bounding the hitting time.
The first assumption is on the objective function.
\begin{aspt}
	\label{aspt:critical_obj}
	There  exists $C_2 > 0$ such that for all $x$, the objective function satisfies  $\|\nabla^2 F(x)\|\leq C_2$. 
\end{aspt}

This condition assumes  the spectral norm of the Hessian is bounded. It is a standard smoothness assumption, which guarantees the gradient descent algorithm can hit an approximate first order stationary.

Our second assumption is on the noise from the stochastic gradient.
\begin{aspt}
\label{aspt:critical_noise}
There exists $B_1 > 0, B_2 \geq 1$  such that for all $x$ and any $n \geq 0$, the gradient noise $\xi_{n+1}$ defined in \eqref{sys:GDanom} satisfies,
\[
\E_n \xi_{n+1}^T \nabla^2 F(x)\xi_{n+1}\leq B_1,\quad \E_n \|\xi_{n+1}\|^4\leq B^2_2.
\] 
In the sequel, we consider the natural filtration that describes the information up to the $n$-th iteration, 
\[
\mathcal{F}_n=\sigma\{X_i,\omega_i,\xi_i,\zeta_i,i=0,1,\cdots, n\},
\]
while $\Prob_n$ and $\E_n$ denote the conditional probability and expectation with respect to $\mathcal{F}_n$.
\end{aspt}
This assumption states that the noise has bounded moments.
Such an assumption is necessary for guaranteeing the convergence even for SGD.
Note here $\E_n \|\xi_{n+1}\|^4 \leq B^2_2$ also implies $\E_n \|\xi_{n+1}\|^2\leq B_2$ by Cauchy-Schwartz inequality.
Furthermore, using the property of the spectral norm, we know $\E_n \|\xi_{n+1}\|^2\leq B_2$ also implies $B_1 \le C_2B_2$.
Here, we explicit assume $\E_n \xi_{n+1}^T \nabla^2 F(x)\xi_{n+1}\leq B_1$ for some $B_1$ in order to exploit certain finer properties of the problem.
Now we are ready to state our first main theorem. Please recall the definition of $\eta_0$ from \eqref{sys:SGLD}, and $\eta_0$ and $\alpha$ from \eqref{sys:eta}.


\begin{thm}
	\label{thm:first}
Let  $\epsilon > 0$ be the desired accuracy and $0 < \rho <1 $ be the failure probability.  
Suppose we set $
\delta_0\leq \frac{\epsilon \sqrt{\rho}}{2\sqrt{3dC_2}},\text{ and } \eta_0\leq (6C_2)^{-1}$.
{Then there is an absolute constant $C_\alpha$ that depends only on $\alpha$ such that}
\begin{itemize}
\item for $\alpha \ge \frac{1}{2}$, if $N\geq C_\alpha\left(\frac{F(X_0)+B_1C_2\eta_0^2 }{\rho \eta_0\epsilon^2}\right)^\frac1{1-\alpha}$, or,
\item for $0<\alpha \le \frac12$, if $N\geq C_\alpha\max\left\{
\left(\frac{F(X_0)}{\rho \eta_0\epsilon^2}\right)^\frac{1}{1-\alpha},
\left(\frac{B_1C_2\eta_0}{\rho\epsilon^2}\right)^\frac{1}{\alpha}
\right\}$, or,
\item for $\alpha=0$, if $\eta_0\leq \frac{\epsilon^2\rho}{24 B_1C_2}$ and $N\geq \frac{8F(X_0)}{\epsilon^2 \rho\eta_0}$, 
\end{itemize}
we have \[
\Prob(\tau_{fsop}\geq N)\leq \rho. 
\]
\end{thm}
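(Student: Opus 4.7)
\emph{Proof proposal.} The plan is to transplant the continuous-time argument just given to the discrete setting, replacing Ito's formula by a Taylor expansion and converting the resulting descent inequality into a hitting-time bound via a standard first-exit argument. The starting point is
\begin{align*}
F(X_{n+1}) \le F(X_n) + \langle \nabla F(X_n), X_{n+1}-X_n\rangle + \tfrac12(X_{n+1}-X_n)^\top \nabla^2 F(X_n)(X_{n+1}-X_n) + R_n,
\end{align*}
where $R_n$ is a third-order remainder controlled uniformly by a constant multiple of $C_2\norm{X_{n+1}-X_n}^3$. Plugging in the SGLD update $X_{n+1}-X_n = -\eta_{n+1}(\nabla F(X_n)+\xi_{n+1})+\delta_0\sqrt{\eta_{n+1}}\zeta_{n+1}$ and taking the conditional expectation $\E_n$ annihilates every term linear in $\xi_{n+1}$ or $\zeta_{n+1}$, since $\xi_{n+1}$ is a martingale difference and $\zeta_{n+1}$ is a mean-zero Gaussian independent of $\mathcal{F}_n$. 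Because the quadratic form is evaluated at $\nabla^2 F(X_n)$ and not merely bounded by $C_2\norm{\cdot}^2$, I can invoke the sharper bound $\E_n\xi_{n+1}^\top \nabla^2 F(X_n)\xi_{n+1}\le B_1$ from Assumption \ref{aspt:critical_noise} together with the identity $\E_n\zeta_{n+1}^\top \nabla^2 F(X_n)\zeta_{n+1} = \tr(\nabla^2 F(X_n))\le C_2 d$. After absorbing the $\eta_{n+1}^2\norm{\nabla F(X_n)}^2$ contribution back into the leading descent term using $\eta_0\le 1/(6C_2)$, the result is the one-step bound
\begin{align*}
\E_n F(X_{n+1}) \le F(X_n) - \tfrac{\eta_{n+1}}{2}\norm{\nabla F(X_n)}^2 + \tfrac{C_2 B_1}{2}\eta_{n+1}^2 + \tfrac{C_2 d\delta_0^2}{2}\eta_{n+1}.
\end{align*}

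Summing from $n=0$ to $N-1$, taking total expectation, and using $F\ge 0$ yields
\begin{align*}
\tfrac12\sum_{n=0}^{N-1}\eta_{n+1}\,\E\norm{\nabla F(X_n)}^2 \le F(X_0) + \tfrac{C_2 B_1}{2}\eta_{1:N}^2 + \tfrac{C_2 d\delta_0^2}{2}\eta_{1:N}.
\end{align*}
The inclusion $\{\tau_{fosp}>N\}\subseteq\{\norm{\nabla F(X_n)}>\epsilon\}$ for every $n\le N-1$ gives the unconditional lower bound $\E\norm{\nabla F(X_n)}^2\ge \epsilon^2\Prob(\tau_{fosp}>N)$ for each such $n$. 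Multiplying by $\eta_{n+1}$, summing, and dividing through by $\tfrac12\epsilon^2\eta_{1:N}$ then produces
\begin{align*}
\Prob(\tau_{fosp}>N) \le \frac{2F(X_0)}{\epsilon^2\eta_{1:N}} + \frac{C_2 B_1\,\eta_{1:N}^2}{\epsilon^2\eta_{1:N}} + \frac{C_2 d\delta_0^2}{\epsilon^2}.
\end{align*}
The stipulated bound on $\delta_0$ pushes the last term below $\rho/12$, reducing the task to choosing $N$ so that each of the first two terms is at most a constant multiple of $\rho$.

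The remaining step is purely computational: insert the elementary asymptotics $\eta_{1:N}\asymp \eta_0 N^{1-\alpha}/(1-\alpha)$ and the corresponding behaviour of $\eta_{1:N}^2$, which is uniformly bounded when $\alpha>\tfrac12$, grows like $\log N$ at $\alpha=\tfrac12$, and behaves like $\eta_0^2 N^{1-2\alpha}/(1-2\alpha)$ when $\alpha<\tfrac12$. For $\alpha\ge\tfrac12$ both error terms decay as $N^{-(1-\alpha)}$ so that a single condition on $N$ suffices; for $0<\alpha<\tfrac12$ the ratio $\eta_{1:N}^2/\eta_{1:N}$ only decays as $N^{-\alpha}$, producing the additional $1/\alpha$-exponent requirement; and for $\alpha=0$ this ratio does not decay in $N$ at all, which is precisely why the separate condition $\eta_0\le \epsilon^2\rho/(24 B_1 C_2)$ has to be imposed. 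All the $\alpha$-dependent prefactors are absorbed into $C_\alpha$.

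The main obstacle, in my view, is the sharpening step that produces $B_1$ in place of the cruder $B_2$: it requires a genuine third-order Taylor expansion at $X_n$ together with careful bookkeeping of the remainder $R_n$, rather than a blanket application of the $C_2$-smoothness inequality. Handling the stopping time $\tau_{fosp}$ during telescoping is by contrast painless, because the one-step descent holds pathwise regardless of whether a stationary point has already been visited, and the stopping event enters only through the inclusion used to lower-bound $\E\norm{\nabla F(X_n)}^2$.
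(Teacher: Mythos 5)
Your telescoping-plus-Markov skeleton and the treatment of the stopping event are both correct and match the paper's strategy. The genuine problem is the step you yourself flag as the main obstacle: the claim that the remainder $R_n$ in
\[
F(X_{n+1}) \le F(X_n) + \langle \nabla F(X_n), \Delta\rangle + \tfrac12\Delta^\top \nabla^2 F(X_n)\Delta + R_n,\qquad \Delta := X_{n+1}-X_n,
\]
is ``controlled uniformly by a constant multiple of $C_2\|\Delta\|^3$.'' Under the hypotheses actually in force for Theorem~\ref{thm:first} --- only Assumptions~\ref{aspt:critical_obj} and~\ref{aspt:critical_noise}, i.e.\ $\|\nabla^2 F\|\le C_2$ and moment bounds on $\xi$ --- the exact remainder is
\[
R_n=\int_0^1 (1-s)\,\Delta^\top\bigl[\nabla^2 F(X_n+s\Delta)-\nabla^2 F(X_n)\bigr]\Delta\,ds,
\]
which is only $O\bigl(C_2\|\Delta\|^2\bigr)$, not cubic. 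A cubic bound requires Lipschitz continuity of the Hessian, and that is Assumption~\ref{aspt:saddlepoint} (with constant $C_3$, not $C_2$), which the paper deliberately does not invoke until the SOSP analysis. So your claimed estimate on $R_n$ does not hold here, and without it the ``sharpening'' collapses: $\E_n R_n$ re-introduces $C_2\,\E_n\|\Delta\|^2$, hence $C_2 B_2$-type terms, swamping whatever you gained by isolating $\E_n\xi^\top\nabla^2 F(X_n)\xi\le B_1$. (Even granting a Lipschitz Hessian, the cubic piece $\eta_{n+1}^3\|\nabla F(X_n)\|^3$ of $\E_n\|\Delta\|^3$ does not compare cleanly against the descent term $\eta_{n+1}\|\nabla F(X_n)\|^2$ and would need an additional a-priori bound on the gradient.)

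The paper's proof avoids all of this precisely by \emph{not} separating out the Hessian at $X_n$. It writes the exact second-order Taylor remainder in mean-value form, $\tfrac12\Delta^\top\nabla^2 F(X_n+\psi_{n+1}\Delta)\Delta$ for some $\psi_{n+1}\in[0,1]$, and immediately bounds this by $C_2\E_n\|\Delta\|^2$, then expands $\E_n\|\Delta\|^2$ via H\"older. No third-order term ever appears, no Hessian--Lipschitz assumption is needed, and the quadratic form is never evaluated at $X_n$, so $B_1$ enters only through $\E_n\|\xi_{n+1}\|^2$ (the paper's coefficient on this term is a matter of bookkeeping between $B_1$ and $B_2$, not of the argument's structure). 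The fix to your proposal is therefore to drop the attempt to evaluate the Hessian at $X_n$ and instead bound $\Delta^\top\nabla^2 F(\cdot)\Delta\le C_2\|\Delta\|^2$ at the intermediate point, after which the rest of your write-up goes through essentially unchanged.
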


This theorem states that SGLD can easily hit a first order stationary point.
As compared with \cite{zhang2017hitting}, we provide an explicit hitting time estimate since we use a more direct analysis.
Note for different $\alpha$, we have different hitting time estimates.
The reason will be clear in the following proof. 
Also note that the number of iterations can be independent of the dimension, as long as $B_1$ and $C_2$ are dimension independent, and the volatility parameter $\delta_0$ uses the correct scaling. 

	
\red{It is worth noting that there is no lower bound requirement on the volatility parameter $\delta_0$. Indeed, when $\delta_0=0$, the SGLD degenerates as the SGD. We also note that SGD can find an FOSP in finite iterations (see, Section B in \cite{allen2017natasha}). The capability of finding FOSP is not a feature that is unique to SGLD.  
}


\begin{proof}[Proof of Theorem \ref{thm:first}]
Our proof follows closely to the continuous time analysis in the previous section.
Denote $\Delta=X_{n+1}-X_n=-\eta_{n+1} (\nabla F(X_n)+\xi_{n+1})+\delta_{n+1}\zeta_{n+1}$, where $\delta_{n+1}:=\delta_0\sqrt{\eta_{n+1}}$. 
This quantity corresponds to the $dX_t$ quantity in the continuous time analysis.
To proceed, we expand one iteration
	\begin{align}
	\notag
	F(X_{n+1})&=F(X_n)+\int^1_0 \langle \nabla F(X_n+s\Delta), \Delta \rangle ds\\
	\label{tmp:decom1}
	&=F(X_n)+\int^1_0 \langle \nabla F(X_n), \Delta \rangle ds+\int^1_0 ds \int^s_0 \Delta^T \nabla^2 F(X_n+t\Delta) \Delta  dt\\
	\notag
	&=F(X_n)- \eta_{n+1} \|\nabla F(X_n) \|^2+ \langle \delta_{n+1}\zeta_{n+1}-\eta_{n+1} \xi_{n+1}, \nabla F(X_n)\rangle\\
	\notag
	&\quad +\frac12 \Delta^T \nabla^2 F(X_n+\psi_{n+1}\Delta)\Delta. 
	\end{align}
Where $\psi_{n+1}$ is some number in $[0,1]$. The last term can be bounded by 
\[
\E_n\Delta^T \nabla^2 F(X_n+\psi_{n+1}\Delta)\Delta\leq  C_2\E_n\|\Delta\|^2,
\]
and furthermore by Holder's inequality,  i.e. $\E\|X+Y+Z\|^2\leq 3\E (\|X\|^2+\|Y\|^2+\|Z\|^2)$, 
\begin{align}
\notag
\E_n \|\Delta\|^2&\leq  3\eta^2_{n+1}\|\nabla F(X_n)\|^2+3\eta^2_{n+1}\E_n \|\xi_{n+1}\|^2+3\delta^2_{n+1}\E_n \|\zeta_{n+1}\|^2\\
&\leq 3\eta^2_{n+1} \|\nabla F (X_n)\|^2+3\eta_{n+1}^2 B_1 +3\delta_{n+1}^2 d. \label{tmp:second_order_perb}
\end{align}
So taking the expectation of Equation~\eqref{tmp:decom1} and plugging in Equation~\eqref{tmp:second_order_perb}, we have
	\begin{align}
	\notag
	\E_n F(X_{n+1})&\leq F(X_n)-(\eta_{n+1}-3\eta_{n+1}^2 C_2)\|\nabla F(X_n)\|^2+3C_2\eta_{n+1}^2 B_1+3C_2d\delta_{n+1}^2 \\
	\label{tmp:1step}
	&\leq F(X_n)-\frac12\eta_{n+1}\|\nabla F(X_n)\|^2+3C_2\eta_{n+1}^2 B_1+3C_2d\delta_{n+1}^2.
	\end{align}
	Summing this bound over all $n=0,\cdots,N-1$, apply total expectation, we obtain
	\begin{align*}
	\E F(X_N)&\leq F(X_0)-\frac12\E \sum^{N}_{k=1} \eta_k\|\nabla F(X_k)\|^2 +3C_2B_1\sum_{k=1}^{N}\eta_k^2 + 3C_2d\sum_{k=1}^{N} \delta_k^2 \\
	&\leq F(X_0)-\frac12\E \sum^{N}_{k=1} \eta_k\|\nabla F(X_k)\|^2 +3C_2B_1\eta_{1:N}^2+ 3C_2\delta_0^2 \eta_{1:N} d.
	\end{align*}
	Rearranging terms, recall that $F$ is nonnegative, we have
	\begin{align}
	\notag
	\E \sum^{N}_{k=1} \eta_k\|\nabla F(X_k)\|^2
	&\leq 2\left(F(X_0)-\E F(X_N)+3B_1C_2\eta_{1:N}^2+3C_2d\delta_0^2 \eta_{1:N}\right)\\
	\label{tmp:sumofgradient}
	&\leq 2\left(F(X_0)+3B_1C_2 \eta_{1:N}^2+3dC_2\delta_0^2  \eta_{1:N}\right).
	\end{align}	
Combining this bound and Markov's inequality, we have
\begin{align*}
	\Prob(\|\nabla F(X_k) \|\geq \epsilon,\, \forall k\leq N) \le & \frac{\E \sum^{N}_{k=1} \eta_k \|\nabla f(X_k)\|^2}{\eta_{1:N} \epsilon^2} \\
	\le & \frac{2F(X_0)+6B_1C_2 \eta_{1:N}^2}{\epsilon^2 \eta_{1:N}}+\frac{6C_2\delta_0^2}{\epsilon^2}d. 
\end{align*}
Lastly, note that 
\begin{equation}
\label{eqn:etasum}
\eta_{1:N}\geq \eta_0\int_1^N t^{-\alpha}dt=\eta_0O(N^{1-\alpha}),\quad
\eta^2_{1:N}\leq \eta_0^2+\eta^2_0\int_1^N t^{-2\alpha}dt=\eta_0^2O(\max\{1,N^{1-2\alpha}\}).
\end{equation}
Plugging in our choice $N$ and $\delta_0$, we have that the right hand side is smaller than $\rho$.
\end{proof}

	\section{Hitting Time Analysis to Second Order Stationary Points}
	\label{sec:second_order}
	
	\red{
Recall that without noise-injection, gradient descent cannot escape saddle points.
Since gradient descent can be viewed as a special case of SGD with $\nabla f(x,\omega)=\nabla F(x)$, it is clear that SGD in general cannot escape saddle points.
In contrast, SGLD is capable to escape saddle points and find second order stationary points.} In this section we analyze the hitting time of second order stationary points.
The key insight here is that because we add Gaussian noise at each iteration, the accumulative noise together with the negative eigenvalue of the Hessian will decrease the function value.
Again, we first use a  continuous time analysis on a simple example to illustrate the main idea.

\subsection{Warm Up: A Continuous Time Analysis for Escaping Saddle Points}
\label{sec:LDescape}
To motivate our analysis, we demonstrate how does the Langevin dynamics escape a strict saddle point. For this purpose,  we assume  $X_0=0$, and $F(x)=x^T Hx$ with $H$ being a symmetric matrix with $\lambda_{\min}(H)<0$. This example characterizes the situation when the SGLD starts at a saddle point. The resulting Langevin diffusion is actually an Ornstein Uhlenbeck (OU) process, 
\begin{equation}
\label{eqn:OU}
dX_t=-2 HX_t dt +\delta_0 dW_t. 
\end{equation}
Knowing it is an OU process, it can be written through an explicit formula,
\[
X_t= \delta_0\int^t_0e^{-2H (t-s)} dW_s. 
\]
Thus by It\^o's isometry, if $H$ has eigenvalues $\lambda_1\leq \lambda_2\leq\cdots\leq \lambda_d$,
\[
\E F(X_t)=  \delta_0^2 \int^t_0\text{tr}(e^{-4H(t-s)} H )ds= \frac {\delta_0^2} 4 \text{tr}(I-e^{-4 Ht })=\frac{\delta_0^2}4 \sum_{i=1}^d (1-e^{-4\lambda_i t})
\leq \frac {d\delta_0^2}  4- \frac{\delta_0^2}{4}e^{-4\lambda_1 t}.
\]
Since $x=0$ is a strict saddle point, i.e., $\lambda_1 < 0$,  we can pick  $t=\frac{\log 2d}{-4\lambda_1}$ to make $\E F(X_t)\leq -\frac {d\delta_0^2}  4<0$, which indicates that $X_t$ has escape the saddle point. 

\subsection{Escaping Saddle Points}
In this section, we provide theoretical justifications on why SGLD is able to escape strict saddle points.
In addition to Assumptions~\ref{aspt:critical_obj} and~\ref{aspt:critical_noise} made in Section~\ref{sec:first_order}, we also need some additional regularity conditions to guarantee that SGLD escape from strict saddle points.
\begin{aspt}
\label{aspt:saddlepoint}
We assume the following hold for the objective function $f$.
\begin{itemize}
\item There exist $C_3 \geq C_2>0$ such that for all pairs of $x,x' $, $\norm{\nabla^2 F(x) - \nabla^2 F(x')} \le C_3\|x-x'\|$.  Note that $C_2$ is defined in Assumption \ref{aspt:critical_obj}.
\item There exists $C_0$ such that $\abs{F(x)} \le C_0$ for all $x$.
\item There exists $D_4 > 0$ such that for all $x$, we assume the $\sum_{i=1}^d \lambda_i (\nabla^2 F(x))1_{\lambda_i>0}\leq D_4$.
\end{itemize}
\end{aspt}
The first assumption states that the Hessian is Lipschitz  and the second condition states the function value is bounded.
These two conditions are widely adopted in papers on analyzing how the first order methods can escape saddle points~\citep{ge2015escaping,jin2017escape}.
The third condition states that the sum of positive eigenvalues is bounded $D_4$.
Note there is a na\"ive upper bound $D_4 \le d C_2$.
However, in many cases $D_4$ can be much smaller than $d C_2$. We provide several examples in Section \ref{sec:examples}. So we take $D_4$ as a separate parameter in order to exploit more refined properties of the problem.

The following lemma characterizes the behavior of SGLD around a strict saddle point. 
It can be viewed as a discrete version of the discussions  in Section~\ref{sec:LDescape}.
\begin{lem}[Escaping saddle point]
\label{lem:escape}
Assume Assumptions ~\ref{aspt:critical_obj},~\ref{aspt:critical_noise} and~\ref{aspt:saddlepoint}. If the current iterate is $X_o$, for any fixed $q>0$ 
and $\lambda_H=\lambda_{\text{max}}(-\nabla^2 F(X_o))>0$. Denote
\[
\tau_b=\min\{n>o, \|X_n-X_o\|\geq b\},\quad b=\tfrac{q\lambda_H}{C_3}. 
\]
Denote $D_5=\frac{2}{\lambda_H}\log \frac{16D_4+40}{\lambda_H}$. There is a constant  $C_6=O(\max\{C_3q^{-1.5},\lambda_H^{-1},1, D_5\})$. So that if
\[
2D_5\leq \eta_{o+1:n}\leq 3D_5,
\]
\[
\delta_0 \leq d^{-\frac32} C_6^{-\frac72}\exp(-(9+18q) D_5\lambda_H),
\]
\[
\eta_o\leq B_2^{-1}\delta_0^2,\quad \|\nabla F(X_o)\|\leq\delta_0\sqrt{d\min\{\lambda_H,1,\eta_{o+1:n}^{-1}\}}. 
\]
Then
\[
\E_o F(X_{n\wedge \tau_b})\leq F(X_o)- \eta_{o+1:n} \delta_0^2. 
\]
\end{lem}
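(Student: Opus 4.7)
The strategy mimics the continuous-time argument of Section~\ref{sec:LDescape}: couple the SGLD displacement $Y_n := X_n - X_o$ to a linearized reference process, then read off the expected decrease via a one-shot Taylor expansion of $F$ around $X_o$. Setting $H := -\nabla^2 F(X_o)$, so $\lambda_{\max}(H) = \lambda_H$, and substituting $\nabla F(X_n) = \nabla F(X_o) - H Y_n + R_n$ (with Taylor remainder $\|R_n\|\le \tfrac{C_3}{2}\|Y_n\|^2$ by the Lipschitz Hessian assumption) into the SGLD recursion motivates the discrete OU-like reference process
\[
\widetilde{Y}_o = 0,\qquad \widetilde{Y}_{n+1} = (I + \eta_{n+1} H)\widetilde{Y}_n + \delta_0\sqrt{\eta_{n+1}}\,\zeta_{n+1},
\]
obtained by discarding the gradient bias $\nabla F(X_o)$, the remainder $R_n$, and the stochastic gradient noise $\xi_{n+1}$. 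Taylor expanding $F$ once around $X_o$ yields
\[
F(X_{n\wedge\tau_b}) - F(X_o) \le \langle \nabla F(X_o), Y_{n\wedge\tau_b}\rangle - \tfrac{1}{2} Y_{n\wedge\tau_b}^{\!\top} H\,Y_{n\wedge\tau_b} + \tfrac{C_3}{6}\|Y_{n\wedge\tau_b}\|^3,
\]
reducing the claim to a lower bound of the form $\E_o\,Y_{n\wedge\tau_b}^{\!\top} H\,Y_{n\wedge\tau_b} \gtrsim 2\eta_{o+1:n}\delta_0^2 + (\text{slack})$. The linear term is absorbed via Cauchy--Schwarz using the hypothesis $\|\nabla F(X_o)\|\le \delta_0\sqrt{d\min\{\lambda_H,1,\eta_{o+1:n}^{-1}\}}$ and a crude moment bound on $\|Y\|$, while the cubic term is $O(C_3 b^3)=O(q^3\lambda_H^3/C_3^2)$ on $\{n\le\tau_b\}$, which the constant $C_6$ is tuned to absorb.

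For the reference process I would compute the covariance explicitly,
\[
\E\,\widetilde{Y}_n\widetilde{Y}_n^{\!\top} = \delta_0^2\sum_{k=o+1}^{n}\eta_k\Bigl(\prod_{j=k+1}^{n}(I+\eta_j H)\Bigr)^{\!2},
\]
and diagonalize in the eigenbasis of $H$. Writing the eigenvalues as $\mu_1=\lambda_H\ge\cdots\ge\mu_d$ with $\sum_{\mu_i<0}|\mu_i|\le D_4$, and approximating each factor $(1+\eta_j\mu_i)^2 \approx e^{2\eta_j\mu_i}$ (valid since $\eta_o\lambda_H \ll 1$ by the smallness hypothesis on $\eta_o$), the trace against $H$ becomes essentially $\sum_i \delta_0^2 (e^{2\mu_i\eta_{o+1:n}}-1)/2$. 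The top eigendirection contributes $\delta_0^2(e^{2\lambda_H\eta_{o+1:n}}-1)/2$, while directions with $\mu_i<0$ contribute at least $-\delta_0^2 D_4\,\eta_{o+1:n}$ in total via $1-e^{-x}\le x$. With $\eta_{o+1:n}\ge 2D_5 = \tfrac{4}{\lambda_H}\log\tfrac{16D_4+40}{\lambda_H}$, the exponential $e^{2\lambda_H\eta_{o+1:n}}\ge((16D_4+40)/\lambda_H)^{8}$ dwarfs both $D_4\eta_{o+1:n}$ and the target $\eta_{o+1:n}\delta_0^2\le 3D_5\delta_0^2$ by polynomial factors, leaving a large margin for the error analysis.

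The heart of the proof is controlling the coupling error $E_n := Y_n - \widetilde{Y}_n$, which satisfies $E_o=0$ and $E_{n+1} = (I+\eta_{n+1}H)E_n - \eta_{n+1}(\nabla F(X_o)+R_n+\xi_{n+1})$. A discrete Gronwall argument amplifies each driving contribution by at most $\prod_j(1+\eta_j\lambda_H)\le e^{\lambda_H\eta_{o+1:n}}\le((16D_4+40)/\lambda_H)^{6}$. Taking second moments, the three sources contribute (i) $\|\nabla F(X_o)\|^2\,\eta_{o+1:n}^2$ from the gradient bias, (ii) $O(C_3^2 b^4 \eta_{o+1:n}^2)$ from $R_n$ on $\{n\le\tau_b\}$, and (iii) $O(B_2\eta_o\eta_{o+1:n})$ from the martingale $\xi$, where (iii) is tamed by $\eta_o\le B_2^{-1}\delta_0^2$. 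The exponentially small hypothesis $\delta_0\le d^{-3/2}C_6^{-7/2}\exp(-(9+18q)D_5\lambda_H)$ is calibrated precisely so that, even after amplification, $\E_o\|E_n\|^2$ is subleading to $\delta_0^2 e^{2\lambda_H\eta_{o+1:n}}$. Writing $Y=\widetilde{Y}+E$ in the quadratic form and bounding cross terms by Cauchy--Schwarz closes the argument on $\{n\le\tau_b\}$, and $\{n>\tau_b\}$ is handled by $|F|\le C_0$ together with a Markov bound $\Prob(\tau_b\le n)\lesssim b^{-2}\E_o\max_k\|Y_k\|^2$ coming from the same moment estimates. The main obstacle is exactly this synchronized bookkeeping: every driving term must be sized to barely survive its exponential amplification while still permitting the net drop $\eta_{o+1:n}\delta_0^2$, which is why the constraints on $\delta_0$, $\eta_o$, and $\|\nabla F(X_o)\|$ come out in the specific exponential-in-$D_5\lambda_H$ form stated.
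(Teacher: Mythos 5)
Your plan is structurally close to the paper's proof (decompose $X_n - X_o$ into a linearized part plus a coupling error, diagonalize in the eigenbasis of $H$, Taylor expand $F$, control $\tau_b$ via a Markov bound on fourth moments), but it places the stochastic-gradient noise $\xi_{n+1}$ on the wrong side of the decomposition, and this creates a genuine gap. In the paper, $\xi_j$ is kept inside the ``clean'' linear part $U_n = \sum_{j} A_{j:n}(-\eta_j v_o + \delta_j \zeta_j - \eta_j \xi_j)$ so that, when evaluating $\E_o\,U_n^{\top} H U_n$, the $\xi$ contribution appears as $\sum_j \eta_j^2\,\mathrm{tr}\bigl(A_{j:n} H A_{j:n}\,\E_o\xi_j\xi_j^{\top}\bigr)$. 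Because $\E_o\xi_j\xi_j^{\top}\succeq 0$, this can be bounded by $B_2\sum_j\eta_j^2\,\lambda_{\max}(A_{j:n} H A_{j:n})$, and Lemma~\ref{lem:Hbounds}(e) exploits a telescoping cancellation on the positive-eigenvalue directions of $H=\nabla^2F(X_o)$ (the ones where $\prod(1-\eta_j\lambda_i)<1$) together with the $D_4$ hypothesis to bound this sum by $2B_2\eta_o\eta_{o+1:n}D_4 \le 2\delta_0^2\eta_{o+1:n}D_4$, comparable to the Gaussian-noise contribution. You instead place $\xi$ into the coupling error $E_n$ and propose to control it via a discrete Gronwall bound on $\E_o\|E_n\|^2$. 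That yields $\E_o\|E_n\|^2 \gtrsim e^{2\lambda_H\eta_{o+1:n}}\sum_j \eta_j^2 B_2 \gtrsim e^{2\lambda_H\eta_{o+1:n}}\eta_{o+1:n}\delta_0^2$, and then the natural follow-through $|E_n^{\top}H E_n|\le C_2\|E_n\|^2$ produces a term of order $C_2\,\eta_{o+1:n}\,\delta_0^2\,e^{2\lambda_H\eta_{o+1:n}}$, which exceeds the signal $\approx \delta_0^2 e^{2\lambda_H\eta_{o+1:n}}$ by the factor $C_2\eta_{o+1:n} \ge 2C_2 D_5 = \tfrac{4C_2}{\lambda_H}\log\tfrac{16D_4+40}{\lambda_H} \ge 4\log\tfrac{16D_4+40}{\lambda_H}$, which is not small. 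Shrinking $\delta_0$ cannot fix this because both the signal and this error scale as $\delta_0^2$.

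To repair the argument you would have to split $E_n = E_n^{\xi} + E_n^{\mathrm{rest}}$ and evaluate $\E_o (E_n^{\xi})^{\top} H E_n^{\xi}$ directly via the trace identity and the positive-eigenvalue telescoping, which essentially moves $\xi$ back into the linear part --- i.e., recovers the paper's decomposition. As a secondary point, when you claim that the cross term $\E_o\widetilde{Y}_n^{\top} H E_n$ can be controlled by Cauchy--Schwarz, note that the $\xi$ portion of that cross term actually vanishes exactly (since $\E_{j-1}\xi_j=0$ and $\zeta$ is $\mathcal{F}$-adapted), so a Cauchy--Schwarz estimate there is wasteful; but the bias and $R_n$ portions do need Cauchy--Schwarz and the fourth-moment control that Lemma~\ref{lem:Xdev} supplies. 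The rest of your outline (the OU covariance computation, the lower bound from the top eigendirection, the $D_4$ bound on the stable directions, and the Markov tail bound for $\{\tau_b\le n\}$ handled via $|F|\le C_0$) matches the paper in spirit.
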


This lemma states that SGLD is able to escape strict saddle points in polynomial time. Its full proof  can be found in Section~\ref{sec:proof_escape_lem}.

\subsection{Hitting time of SOSP}
With Lemma~\ref{lem:escape} at hand, it is easy to derive the following hitting time result for the second order stationary point, given by Definition \ref{defn:sosp}
\begin{thm}
\label{thm:second}
Assume Assumptions~\ref{aspt:critical_obj},~\ref{aspt:critical_noise} and~\ref{aspt:saddlepoint}. For any $q>0$, let
\[
Q=2\lambda_\epsilon^{-1}\log \frac{16D_4+40}{\lambda_\epsilon},
\]
and a constant $C_6=O\left(\max\left\{q^{-1.5}C_3,\lambda_\epsilon^{-1},1,Q \right\}\right). $ Set the hyper parameters so that 
\[
\delta_0=d^{-\frac32} C_6^{-\frac72}\left(\frac{16D_4+40}{\lambda_\epsilon}\right)^{-(9+18q)},\quad \eta_0\leq \left\{B_2^{-1}\delta_0^2, C_3^{-1}\log \frac{16D_4+40}{C_3}\right\}. 
\]
Denote $\epsilon_0=\frac12\min\{\epsilon, \delta_0\sqrt{\min\{\lambda_\epsilon,1,Q^{-1}\}}\}$, then 
\[
\Prob(\tau_{sosp}>N)\leq  \frac{2C_0+\epsilon_0^2 Q}{\epsilon_0^2\eta_{1:N}}. 
\]
In other words, for any $\alpha\in [0,1)$ and $\rho>0$, SGLD will hit a second order stationary point with probability $1-\rho$,  as long the iteration number
\[
N\geq C_\alpha\left(\frac{2C_0}{\rho\eta_0 \epsilon_0^2}+\frac{Q}{\eta_0\rho}\right)^{\frac{1}{1-\alpha}}.
\] 
Here $C_\alpha$ is an abosolute constant that depends only on $\alpha$.

\end{thm}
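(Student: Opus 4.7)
The plan is to combine the saddle escape Lemma \ref{lem:escape} with the single-step descent inequality \eqref{tmp:1step} from the proof of Theorem \ref{thm:first}, organized through an adaptive epoch decomposition, so that at every iterate outside $\region_{sosp}$ the objective decreases at an expected rate of $\Omega(\epsilon_0^2)$ per unit of step-size mass. Since the boundedness assumption $|F|\le C_0$ gives a total decrease budget of at most $2C_0$, the total step-size mass that can be spent before hitting $\region_{sosp}$ is at most $O(C_0/\epsilon_0^2 + Q)$, and a Markov-type inequality then delivers the advertised probability bound.

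Concretely, I would build phase endpoints $0 = m_0 < m_1 < m_2 < \cdots$ inductively. At each $m_k$, inspect $X_{m_k}$: if $X_{m_k}\in \region_{sosp}$ then $\tau_{sosp}\le m_k$ and we are done; if $\|\nabla F(X_{m_k})\| > \epsilon_0$, take a one-step phase with $m_{k+1}=m_k+1$ (case (a)); otherwise $\|\nabla F(X_{m_k})\|\le \epsilon_0 \le \tfrac12\delta_0\sqrt{\min\{\lambda_\epsilon,1,Q^{-1}\}}$ combined with $X_{m_k}\notin \region_{sosp}$ forces $\lambda_H := \lambda_{\max}(-\nabla^2 F(X_{m_k}))\ge \lambda_\epsilon$, so we take an escape phase of length just large enough that $\eta_{m_k+1:m_{k+1}}\in [2Q,3Q]$ (case (b)). The parameter constraint $\eta_0\le C_3^{-1}\log\frac{16D_4+40}{C_3}\le Q$ makes this phase well-defined, and in view of $\sqrt d\ge 1$ and $\lambda_H\ge\lambda_\epsilon$ all the hypotheses of Lemma \ref{lem:escape} (including $\eta_0\le B_2^{-1}\delta_0^2$) are verified with $D_5$ replaced by $Q/2$.

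For the per-phase expected decrement, in case (a) the inequality \eqref{tmp:1step} yields $\E_{m_k}F(X_{m_k+1})\le F(X_{m_k})-\tfrac12\eta_{m_k+1}\epsilon_0^2 + 3C_2 B_1\eta_{m_k+1}^2 + 3C_2 d\delta_0^2\eta_{m_k+1}$, and in case (b) Lemma \ref{lem:escape} gives $\E_{m_k}F(X_{m_{k+1}\wedge\tau_b})\le F(X_{m_k}) - 2Q\delta_0^2$. The definition of $\epsilon_0$ guarantees $\delta_0^2\ge 4\epsilon_0^2$, so both phases decrement $F$ by at least $c\epsilon_0^2\eta_{m_k+1:m_{k+1}}$ in expectation for an absolute $c>0$, provided the $B_1 C_2\eta^2$ and $dC_2\delta_0^2\eta$ error terms in case (a) are absorbed; this absorption follows from the small $\delta_0$ specified in the theorem together with $\eta_0\le B_2^{-1}\delta_0^2$ (which implies $\eta_0 B_1\le C_2\delta_0^2$), so that the noise per unit of step-size mass is dominated by $\tfrac14\epsilon_0^2$. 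Telescoping over phases, taking total expectation, and using the two-sided bound $|F|\le C_0$, I obtain $c\epsilon_0^2\,\E[\eta_{1:m_K}\mathbf{1}_{\tau_{sosp}>m_K}]\le 2C_0$, where $m_K$ is the largest phase endpoint with $m_K\le N$. On the event $\{\tau_{sosp}>N\}$ either the next phase is one-step with $m_K=N$, or it is an incomplete escape phase with $\eta_{m_K+1:N}<2Q$, so in either case $\eta_{1:m_K}\ge \eta_{1:N}-2Q$. Rearranging yields $\Prob(\tau_{sosp}>N)\le \frac{2C_0}{c\epsilon_0^2(\eta_{1:N}-2Q)}$, which after absorbing constants matches the claimed $\frac{2C_0+\epsilon_0^2 Q}{\epsilon_0^2\eta_{1:N}}$; the corollary for $\eta_n=\eta_0 n^{-\alpha}$ then follows from the lower bound $\eta_{1:N}\ge c_\alpha \eta_0 N^{1-\alpha}$ already used in Theorem \ref{thm:first}.

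The main obstacle I anticipate is the bookkeeping around the stopping time $\tau_b$ in Lemma \ref{lem:escape}: when $\tau_b<m_{k+1}$ the chain has already left the ball $\{\|X-X_{m_k}\|<b\}$ before the escape phase is complete, so one cannot simply plug $X_{m_{k+1}}$ into the next phase. The natural remedy is to replace $m_{k+1}$ by $m_{k+1}\wedge\tau_b$ whenever the stopping time triggers; the phase then contributes a possibly smaller step-size mass, but the Lemma's decrement is still at least $\eta_{m_k+1:m_{k+1}\wedge\tau_b}\delta_0^2\ge c\epsilon_0^2\,\eta_{m_k+1:m_{k+1}\wedge\tau_b}$, so the telescoped bound is preserved. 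Verifying the analogous noise absorption in case (a) and handling the degenerate situation in which the very first phase is an incomplete escape are the remaining delicate bookkeeping points; once these are addressed, the potential argument sketched above closes the proof.
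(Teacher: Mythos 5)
Your proposal follows essentially the same route as the paper's own proof: an adaptive phase decomposition with one-step phases when the gradient exceeds a threshold, escape phases of step-size mass $\approx 2Q$ otherwise (with early termination built in via the exit time from the ball $\{\|X-X_{\tau_k}\|<b\}$, exactly as you propose to handle $\tau_b$), a uniform per-phase expected decrement of $\epsilon_0^2$ times the accumulated step-size mass, telescoping against the two-sided bound $|F|\le C_0$, and a Markov inequality on $\eta_{1:\tau_K}$. The only cosmetic difference is that the paper triggers case (a) at threshold $2\epsilon_0$ rather than your $\epsilon_0$, which buys a factor-of-four buffer in the main descent term to absorb the $B_1C_2\eta^2+C_2d\delta_0^2\eta$ error terms from \eqref{tmp:1step}; otherwise the structure and all key ingredients (Lemma~\ref{lem:escape}, inequality \eqref{tmp:1step}, the end-effect bookkeeping via $\eta_{\tau_{k-1}:\tau_k}\le 2Q$) coincide with the paper's.
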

This theorem states that SGLD is able to hit an SOSP in polynomial time, thus verifying adding noise is helpful in non-convex optimization.
This result has been established by \citet{zhang2017hitting} using an indirect approach as we discussed in Section~\ref{sec:intro}.
Our proof relies on a direct analysis.  The proof intuition is simple and similar to what is demonstrated in Section \ref{sec:LDescape}. Yet, there are three layers of technicalities. First, SGLD is an indiscretized version of the Langevin diffusion process \eqref{eqn:OU}. \red{This is the very reason why the deterministic stepsize constant $\eta_0$ is roughly the square of the stochastic volatility constant $\delta_0$. } Second, the loss function is only an approximation of $x^THx$ considered in  Section \ref{sec:LDescape}. Third, in order to apply the approximation $F(x)\approx x^T Hx$, $x$ needs to be close to the critical point. 
\red{These issues lead to delicate requirements of the stochastic volatility constant $\delta_0$: if it is too small, its strength is not enough for SGLD to escape the saddle point; if it is too large, then the followup iterates will be far away for the approximation $x^THx$ to be accurate. The additional constants in Theorem \ref{thm:second}, $Q, C_6, \epsilon_0$ and $C_\alpha$, are introduced to simplify the proofs. } We relegate the entire proof to Appendix~\ref{sec:proof_second}.


	\section{Ergodic behavior of SGLD}
	\label{sec:ergodic}

It is well known that the overdamped Langevin diffusion SDE $dX_t=-\nabla F(X_t)dt+dW_t$ is ergodic,  \citep{MSH02} 
if the loss function $F$ is coercive in the following sense:
\begin{aspt}
\label{aspt:coercive}
There are constants $c_7$ and $D_7$ such that 
\[
\|\nabla F(x)\|^2\geq c_7 F(x)-D_7,\quad \|x\|\leq c_7 F(x)+D_7, \quad \forall x. 
\]
\end{aspt}
\blue{Unlike the regularity Assumptions \ref{aspt:critical_obj}-\ref{aspt:saddlepoint}, this condition ``pushes" SGLD iterates back if their function values get too large. 
Another popular version of it is known as the dissipative condition \citep{raginsky2017non,xu2017global}. Assumption \ref{aspt:coercive} is weaker than the dissipative condition, as explained in \cite{DT20}.}

In other words, the stochastic process $X_t$ can visit any specific point in the state space with probability $1$, as long as the algorithm has run long enough.
It is also known that direct time-homogeneous discretization of overdamped Langevin diffusion, i.e. ULA, can inherit this property  \citep{MSH02}.
While SGLD is not time-homogenous or a direct discretization,  we can still show that SGLD is ergodic in the following theorem. 

\begin{thm}
\label{thm:ergodic}
Under Assumption \ref{aspt:coercive}, for any $\epsilon,p_0>0$ and point $z_0$, there is an $N$ such that 
\[
\Prob(\|X_t-z_0\|\leq \epsilon \text{ for some }t\leq N)\geq 1-p_0.
\]
An upper bound for $N$ can be established as below
 if we define the following sequence of constants
\[
M_V:=\frac{8}{c_7}\left( D_7+6 B_1+6d\delta_{0}^2\right),\quad D_X:=\max\{D_7+ c_7 M_V,\|z_0\|,1\}, 
\]
\[
D_F=\max\{\|\nabla F(x)\|: \|x\|\leq 4D_X\},\quad \epsilon_0:=\min\left\{\frac{\epsilon}{2D_F+2\sqrt{B_2}+1},\frac{D_F}{D_X}\right\},
\]
\[
n_0=\min\{t: \eta_t\leq \epsilon_0\}= \left(\frac{\epsilon_0}{\eta_0}\right)^\frac{1}{\alpha},
\]
\[
c_\alpha=\frac14\Prob\left(\|Z-\tfrac{2D_X }{\delta_0\sqrt{\epsilon_0}}e_1\|\leq \tfrac{\epsilon_0}{\delta_0\sqrt{2\epsilon_0}}\right),
\]
where $Z\sim N(0,I_d)$ and $e_1=[1,0,0,\ldots,0]$ is the first Euclidean basis vector.
with  Then 
\[
N\leq n_0+\left(\frac{\left\lceil\tfrac{2\log \frac12 M_V /\E F(X_{n_0})}{c_7 \epsilon_0} \right\rceil+\lceil\frac{\log \frac12p_0}{\log(1-c_\alpha)}\rceil (1+\tfrac{8}{c_7\epsilon_0})}{4\delta_0p_0\eta_0}\right)^{\frac{1}{1-\alpha}}.
\]
\end{thm}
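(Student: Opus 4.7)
The plan is to combine a Lyapunov drift analysis with a ``Gaussian shot'' argument. The coercive Assumption~\ref{aspt:coercive} plays the role of a restoring force that keeps $F(X_n)$ bounded on average, after which a single Gaussian kick $\delta_0\sqrt{\eta_{n+1}}\zeta_{n+1}$ can, with non-negligible probability $c_\alpha$, land $X_{n+1}$ inside the $\epsilon$-ball around $z_0$. Repeating this over many sufficiently spaced attempts gives exponentially small failure probability.

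First I would establish a one-step drift inequality of the form
\[
\E_n F(X_{n+1}) \le (1-c\, c_7\eta_{n+1}) F(X_n) + C\eta_{n+1},
\]
valid once $\eta_{n+1}$ is small enough, i.e. once $n\ge n_0$. The derivation mimics the proof of Theorem~\ref{thm:first}: expand $F(X_{n+1})-F(X_n)$ to second order, use Assumption~\ref{aspt:critical_obj} to bound the Hessian term, Assumption~\ref{aspt:critical_noise} to bound the noise contribution ($3C_2\eta^2 B_1+3C_2\delta_0^2\eta d$), and then replace $-\eta\|\nabla F\|^2$ by $-\eta c_7 F+\eta D_7$ via Assumption~\ref{aspt:coercive}. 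Iterating this recursion and using a discrete Gr\"onwall-type argument yields $\E F(X_n)\le \tfrac12 M_V$ once $n$ exceeds $n_0$ by at least $\tfrac{2\log(M_V/(2\E F(X_{n_0})))}{c_7\epsilon_0}$ iterations, which is exactly the logarithmic term appearing in the stated bound for $N$.

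Next, combining this bound with the second half of Assumption~\ref{aspt:coercive} ($\|x\|\le c_7 F(x)+D_7$) and Markov's inequality shows that at any time $n$ for which the drift has stabilised, the event $G_n:=\{\|X_n\|\le 2D_X\}$ has probability at least $\tfrac12$. On $G_n$ one has $\|\nabla F(X_n)\|\le D_F$ and $\E_n\|\xi_{n+1}\|^2\le B_2$, so the non-Gaussian portion of the one-step update satisfies $\|{-}\eta_{n+1}(\nabla F(X_n)+\xi_{n+1})\|\le \eta_{n+1}(D_F+\sqrt{B_2})\le \epsilon/2$ by the choice of $\epsilon_0$. Conditioning on $G_n$ and the martingale noise, the remaining randomness is the isotropic Gaussian $\delta_0\sqrt{\eta_{n+1}}\zeta_{n+1}$; a direct anti-concentration computation shows
\[
\Prob_n\bigl(\|X_{n+1}-z_0\|\le \epsilon\bigr)\;\ge\; \Prob\bigl(\|\delta_0\sqrt{\epsilon_0}\,Z-(z_0-X_n)\|\le \epsilon/2\bigr)\;\ge\; c_\alpha,
\]
where the extra factor $1/4$ in the definition of $c_\alpha$ absorbs the probability of $G_n$ and a technical dilation of the target ball. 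Finally, I would define $K:=\lceil\log(p_0/2)/\log(1-c_\alpha)\rceil$ attempt times, spaced by $\lceil 1+\tfrac{8}{c_7\epsilon_0}\rceil$ iterations so that the drift can re-stabilise between attempts, and apply the tower property to obtain $\Prob(\text{no hit})\le (1-c_\alpha)^K\le p_0/2$. Adding a $p_0/2$ budget for the Lyapunov event, converting the iteration count through $\eta_{1:N}\ge \eta_0 N^{1-\alpha}/(1-\alpha)$, and inverting gives precisely the stated bound for $N$.

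The main obstacle will be ensuring the successive Gaussian-hit events are approximately conditionally independent despite the evolving state $X_n$. Concretely, after a failed attempt the iterate may have drifted far from the typical set, and one must show that the drift inequality above restores $\E F(X_n)\le M_V$ (hence $\Prob(G_n)\ge\tfrac12$) within $O(1/(c_7\epsilon_0))$ steps; this is what the factor $(1+\tfrac{8}{c_7\epsilon_0})$ encodes. A secondary difficulty is handling the non-constant step sizes $\eta_n=\eta_0 n^{-\alpha}$ uniformly in $\alpha\in[0,1)$, which is why the final budget enters through the $(\cdot)^{1/(1-\alpha)}$ inversion. Neither step requires anything beyond the tools already used in Theorems~\ref{thm:first} and~\ref{thm:second}, plus elementary Gaussian anti-concentration.
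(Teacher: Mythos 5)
Your overall architecture — Lyapunov drift to establish recurrence of a compact sub-level set, then Gaussian anti-concentration to reach $z_0$ from that set, then a Bernoulli-trial union bound — matches the paper's strategy (Lemma~\ref{lem:stoppingtimes} for drift/recurrence, Lemmas~\ref{lem:small} and~\ref{lem:Brownian} for reachability, and the final combination). However, there is a genuine gap in your reachability step.

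You propose to hit the $\epsilon$-ball around $z_0$ with a \emph{single} Gaussian increment $\delta_0\sqrt{\eta_{n+1}}\zeta_{n+1}$, and you write the anti-concentration bound with scale $\delta_0\sqrt{\epsilon_0}$. But this step must be executed at times $n\geq n_0$, and for $n\geq n_0$ the step sizes continue to shrink: $\eta_{n+1}\leq\eta_{n_0}\leq\epsilon_0$ and $\eta_{n+1}\to 0$ as $n\to\infty$. The single-step Gaussian kick therefore has variance $\delta_0^2\eta_{n+1}\to 0$, so the probability of covering a distance of order $D_X$ in one step does \emph{not} stay bounded below by a fixed $c_\alpha$; it vanishes. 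Your Bernoulli-trial argument then collapses because the per-attempt success probability decays across attempts. The paper avoids this by defining macro-blocks $[n_i,n_{i+1}]$ via $\eta_{n_i+1:n_{i+1}}\approx\epsilon_0$ (Lemma~\ref{lem:stoppingtimes}a), so that the \emph{accumulated} Gaussian noise $\sum_{j}\delta_0\sqrt{\eta_{n_i+j}}\zeta_{n_i+j}$ over a block always has scale $\delta_0\sqrt{\epsilon_0}$, independent of how late in the schedule the block occurs. This requires the additional Brownian-bridge control in Lemma~\ref{lem:Brownian} to guarantee the iterates stay in the region $\{\|x\|\leq 4D_X\}$ where the drift bound $\|\nabla F\|\leq D_F$ applies throughout the block, plus a separate supermartingale bound for the accumulated martingale noise $\sum_j\eta_{n_i+j}\xi_{n_i+j}$. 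None of this is in your proposal, and without the block aggregation the constant $c_\alpha$ you define is not actually a lower bound on the hit probability at late times.

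A secondary, smaller point: you obtain $\Prob(\|X_n\|\leq 2D_X)\geq 1/2$ by Markov, but the paper instead times its attempts at stopping times $\tau_k$ where $F(X_{n_{\tau_k}})\leq M_V$ holds deterministically (hence $\|X_{n_{\tau_k}}\|\leq D_X$ almost surely by Assumption~\ref{aspt:coercive}), and then controls $\E\tau_k$ by a supermartingale argument. Your Markov-based variant could be made to work, but the stopping-time construction is what makes the factor $\left(1+\tfrac{8}{c_7\epsilon_0}\right)$ in the stated bound come out cleanly, since $\E(\tau_{k+1}-\tau_k)$ is bounded uniformly in $k$ rather than at a single deterministic time.
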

\red{The proof of Theorem \ref{thm:ergodic} follows a similar framework as in \cite{MT93, MSH02}, which consists of two parts. In the first part, we show the compact sub-level set $\{x: F(x)\leq M_V\}$ will be visited by SGLD infinitely many times due to Assumption \ref{aspt:coercive}. In particular, it will take 
\[
N_0\leq n_0+\left(\frac{\left\lceil\tfrac{2\log \frac12 M_V /\E F(X_{n_0})}{c_7 \epsilon_0} \right\rceil}{4\delta_0p_0\eta_0}\right)^{\frac{1}{1-\alpha}}
\]
iterations to guarantees $\E F(X_{N_0})\leq M_V$, when $n_0$ is the first iteration and the step size is small enough. } 
\red{In the second part, we show that if $F(x_o)\leq M_V$, then after a certain number of iterations $n_o$, $x_{o+n_o}$ has a positive chance, bounded below by $c_\alpha$, to hit $z_0$. This can be done using stochastic controllability of the process.  To combine this with the first part, we set up a Bernoulli-trial type of argument, where the trial number $\lceil\frac{\log \frac12p_0}{\log(1-c_\alpha)}\rceil$ shows up naturally.
It is also worth noting that since $z_0$ is an arbitrary point in the state space, the trial-success probability $c_\alpha$ is often pessimistic and scales exponentially with the dimension. This is in sharp contrast with Theorem \ref{thm:first} and \ref{thm:second}. } 

\blue{Ergodicity is a feature of SGLD. It can be useful when the objective function $F$ has multiple local minimums, as SGLD can visit all of them and find the global minimum.
This is one general motivation of injecting randomness in algorithms, since  GD with random initialization may not converge to the local minimums in polynomial iterations~\citep{du2017gradient}. 
While Perturbed (S)GD can be ergodic with constant stepsize, it is not ergodic with decreasing stepsizes. To see this, we consider a simple test case, where we can show SGLD  converges to $\mathcal{N}(0,1/2)$ and PGD converges to the origin. 
Note that while PGD does converge to the correct global minimum in this example, it is not ergodic and  does not explore the state space in long term. 
\begin{lem}
\label{lem:x2}
Suppose
\[
F(x)=\frac12x^2,\quad\xi_{n}=0,\quad\delta_0=1,\quad\zeta_{n}\sim \mathcal{N}(0,1).
\]
The SGLD and PGD updates are given by
\[
X_{n}=(1-\eta_n)X_{n-1}+\sqrt{\eta_{n}}\zeta_{n},\quad Y_{n}=(1-\eta_n)Y_{n-1}+\eta_{n}\zeta_{n}. 
\]
Assume that $X_0=Y_0=0$, the stepsize decreases to zero and $\sum_{i=1}^\infty\eta_i=\infty$.
The limiting distribution of $X_n$ is $\mathcal{N}(0,1/2)$ and the limiting distribution of $Y_n$ is the Dirac measure at $0$.
\end{lem}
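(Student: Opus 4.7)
The key structural observation is that both recursions are affine with Gaussian innovations and initial condition $0$, so by induction $X_n$ and $Y_n$ are centered Gaussian for every $n$. The plan is therefore to reduce the entire lemma to tracking the two scalar variances $v_n := \mathrm{Var}(X_n)$ and $w_n := \mathrm{Var}(Y_n)$, which obey
\[
v_n = (1-\eta_n)^2 v_{n-1} + \eta_n, \qquad w_n = (1-\eta_n)^2 w_{n-1} + \eta_n^2,
\]
with $v_0 = w_0 = 0$. Once $v_n \to 1/2$ and $w_n \to 0$ are established, the two distributional claims follow immediately, since $\mathcal{N}(0,\sigma^2)$ depends continuously on $\sigma^2$ (use L\'evy's continuity theorem on characteristic functions $e^{-\sigma^2 t^2/2}$) and $\mathcal{N}(0,0)$ is the Dirac mass at $0$.

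For $w_n$, I would iterate to obtain the closed form $w_n = \sum_{k=1}^n \eta_k^2 \prod_{j=k+1}^n (1-\eta_j)^2$ and then use two ingredients: the bound $(1-\eta_j)^2 \le 1-\eta_j$ valid for $\eta_j \in [0,1]$, together with the telescoping identity $\sum_{k=1}^n \eta_k \prod_{j=k+1}^n (1-\eta_j) = 1 - \prod_{j=1}^n (1-\eta_j) \leq 1$. Given $\epsilon > 0$, pick $n_0$ with $\eta_k \le \epsilon$ for $k \geq n_0$ and split the sum at $n_0$: the head is at most $\eta_1^2 n_0 \exp\bigl(-2\sum_{j>n_0}\eta_j\bigr) \to 0$ because $\sum \eta_j = \infty$, while the tail is bounded by $\epsilon \cdot 1$ via the telescoping identity. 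Hence $\limsup w_n \leq \epsilon$ for every $\epsilon$, giving $w_n \to 0$.

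For $v_n$, the ``right'' target $1/2$ is revealed by noting that the constant-step-size fixed point $v^\star = (2-\eta)^{-1}$ tends to $1/2$ as $\eta \to 0$. Setting $u_n := v_n - 1/2$, a direct one-line computation produces
\[
u_n = (1-\eta_n)^2 u_{n-1} + \tfrac12 \eta_n^2,
\]
which is structurally identical to the recursion for $w_n$, except $u_0 = -1/2$. Unrolling,
\[
u_n = -\tfrac12 \prod_{j=1}^n (1-\eta_j)^2 + \tfrac12 \sum_{k=1}^n \eta_k^2 \prod_{j=k+1}^n (1-\eta_j)^2,
\]
where the first term vanishes since $\sum \eta_j = \infty$ forces $\prod_j(1-\eta_j)^2 \to 0$ (via $\log(1-x)\le -x$), and the second term vanishes by the very estimate already used for $w_n$.

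The technical content is genuinely light; the only small care I would take is to note that $\eta_n \to 0$ implies $\eta_n \in [0,1]$ eventually, so that $(1-\eta_n)^2 \leq 1-\eta_n$ applies and the partial products stay in $[0,1]$, with the finitely many earlier terms harmlessly absorbed into constants. There is no real obstacle here beyond carefully balancing the geometric contraction factor $(1-\eta_n)^2$ against the injected noise of size $\eta_n$ (for SGLD) versus $\eta_n^2$ (for PGD); it is exactly this asymmetry in the noise scaling that produces the nontrivial limit $\mathcal{N}(0,1/2)$ in one case and collapse to $\delta_0$ in the other.
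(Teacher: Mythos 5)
Your proof is correct and follows the same high-level reduction as the paper: observe that $X_n$ and $Y_n$ are centered Gaussians, pass to the variance recursions $v_n = (1-\eta_n)^2 v_{n-1} + \eta_n$ and $w_n = (1-\eta_n)^2 w_{n-1} + \eta_n^2$, and substitute $u_n := v_n - \tfrac12$ so that the SGLD case becomes a recursion of the same form as the PGD case. The only place you diverge is in how the common convergence lemma ``$a_n = (1-\eta_n)^2 a_{n-1} + O(\eta_n^2)$ with $\sum\eta_n = \infty$ forces $a_n \to 0$'' is established. The paper argues qualitatively: once $\eta_n \le \epsilon$, the sequence $w_n$ contracts by a factor $(1-\eta_n)$ whenever it exceeds $2\epsilon$, so it must eventually drop below $2\epsilon$ and then stay below $2\epsilon + 4\epsilon^2$, giving $\limsup w_n \le 2\epsilon + 4\epsilon^2$ for every $\epsilon$. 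You instead unroll to the closed form $w_n = \sum_{k\le n}\eta_k^2 \prod_{j=k+1}^n(1-\eta_j)^2$ and bound the head by $\exp\bigl(-2\sum_{j>n_0}\eta_j\bigr)$ and the tail by $\epsilon$ via the telescoping identity $\sum_k \eta_k\prod_{j>k}(1-\eta_j) \le 1$. Your version is slightly more careful in one respect: it handles the sign $u_0 = -\tfrac12$ explicitly (the first unrolled term $-\tfrac12\prod_j(1-\eta_j)^2 \to 0$), whereas the paper says only ``by the same arguments'' despite $\Delta_n^x$ being negative initially, which tacitly requires passing to $|\Delta_n^x|$. Your unrolling also yields a quantitative rate for free. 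Both arguments are elementary and both are correct.
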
}

%
%


	\section{Applications in Online Estimation Problems}
	\label{sec:examples}
In the previous sections, we have shown that, as long as the objective function satisfies some smoothness conditions and the noise satisfies certain moment conditions, then SGLD hits a first/second order stationary point in polynomial time in terms of the following parameters:
\begin{enumerate}
	\item $\|\nabla^2 F(x)\|\leq C_2$. 
	\item $\E_n \xi_{n+1}^T \nabla^2 F(x)\xi_{n+1}\leq B_1$. 
	\item $\E_n \|\xi_{n+1}\|^2\leq B_2$, $\E_n \|\xi_{n+1}\|^4\leq B^2_2$.
	\item  $\sum_{i=1}^d \lambda_i (\nabla^2 F(x))1_{\lambda_i>0}\leq D_4$.
	\item $\|\nabla^2 F(x)-\nabla^2 F(x')\|\leq C_3\|x-x'\|$.
\end{enumerate}
In particular, first order stationary point hitting time bound only relies on the first two constants. And to verify Assumption \ref{aspt:coercive}, we need 
\[
\|\nabla F(x)\|^2\geq c_7 F(x)-D_7,\quad \|x\|\leq c_7 F(x)+D_7, \quad \forall x. 
\]
In this section we provide three concrete example problems, linear regression, online matrix sensing, and online PCA to illustrate the usage of our analysis of SGLD. 
We will calculate the specific problem dependent constants in Assumption~\ref{aspt:critical_obj} and Assumption~\ref{aspt:saddlepoint}.
Note that once we know these constants are bounded,  Theorem~\ref{thm:first} and Theorem~\ref{thm:second} directly imply polynomial time hitting time.
For all examples, we investigate the stochastic optimization setting, i.e., each random sample is only used once. All the proofs are deferred to Appendix~\ref{sec:proof_examples}.

When calculating the constants defined above, they often have a positive dependence on the norm of the location $x$ where they are evaluated. Therefore we will assume in below that the iterate $x$ is bounded. This assumption sometimes is assumed even in the general theoretical analysis for simplicity \cite{zhang2017hitting}. We remark this is not a restrictive assumption. Since in practice, if the SGLD iterates diverge to infinity for a particular application, it is a clear indication that the algorithm is not fit for this application. 


\subsection{Linear regression}
\label{sec:lr}
Our first example is the classical linear regression problem.
Let the $n$-th sample be $\omega_n = (a_n, b_n)$, where the input $a_n \in \mathbb{R}^d$ is a sequence of random vectors independently drawn from the same distribution $\mathcal{N}(0, A)$, and the response $b_n \in \mathbb{R}^d$ follows a linear model,
\begin{equation}\label{eq:linear_model}
  b_n = a_n^T x^* + \varepsilon_n.
\end{equation}
Here, $x^* \in \mathbb{R}^d$ represents the true parameters of the linear model, and $\{\varepsilon_n\}$ are independently and identically distributed (\emph{i.i.d.}) $\mathcal{N}(0,1)$ random variables, which are uncorrelated with $a_n$. For simplicity, we assume $a_n$ and $\varepsilon_n$ have all moments being finite.
We consider the quadratic loss, i.e.,  given $\omega_n = (a_n, b_n)$, the loss function is 
\[
f(x, \omega_n)=\frac{1}{2} (a_n^T x -b_n)^2.
\]
and the population loss function is defined as 
\begin{equation}
\label{eqn:poplin}
F(x)=\frac12(x-x^*)^TA(x-x^*)+\frac12. 
\end{equation}
Note the underlying linear coefficient satisfies $x^*=\arg\min_x \left(F(x):=\E f(x, \omega)\right)$.
The following proposition bounds the constants in Assumption~\ref{aspt:critical_obj} and~\ref{aspt:saddlepoint}.
\begin{prop}
	\label{prop:lr}
When $\|x-x^*\|\leq \Gamma$, the constants for the population loss function in \eqref{eqn:poplin} are give by 
\[
D_4= C_2=C_3= \tr(A),\quad B_1=c\tr(A)^3\Gamma^2,\quad B_2= c\tr(A)^2 \Gamma^2. 
\]
Here, $c>0$ is a constant independent of other parameters. Moreover, we have
\[
c_7=\frac14\lambda_{\min}(A),\quad D_7\leq \frac14\lambda_{\min}(A)\|x^*\|^2.
\]
\end{prop}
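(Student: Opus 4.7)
The plan is to read off every constant directly from the explicit Hessian and gradient of the quadratic loss. Differentiating \eqref{eqn:poplin} yields $\nabla F(x) = A(x-x^*)$ and $\nabla^2 F(x) \equiv A$. Since $A$ is symmetric PSD, $\norm{\nabla^2 F(x)} = \lambda_{\max}(A)\le \tr(A)$, so we may take $C_2=\tr(A)$; because the Hessian is constant, the Hessian-Lipschitz constant is $0$, which we absorb into the stated $C_3 = \tr(A)$; and $\sum_i \lambda_i(\nabla^2 F(x))\mathbf{1}_{\lambda_i>0}=\tr(A)$ gives $D_4=\tr(A)$. These three quantities are therefore immediate.

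For the noise constants, substitute the linear model $b = a^Tx^* + \varepsilon$ into $\nabla f(x,\omega) = a(a^Tx-b)$ to obtain the clean decomposition
\[
\xi = \nabla f(x,\omega)-\nabla F(x) = (aa^T-A)(x-x^*) - a\varepsilon,
\]
with $a\sim \mathcal{N}(0,A)$ and $\varepsilon\sim\mathcal{N}(0,1)$ independent. The workhorse is the Gaussian identity (Isserlis' theorem) $\E[aa^T M aa^T] = 2AMA + \tr(MA)A$ for any symmetric $M$. Applied with $M=A$ and combined with $\E[\varepsilon^2]=1$ it gives
\[
\E \xi^T A \xi = (x-x^*)^T(A^3 + \tr(A^2)A)(x-x^*) + \tr(A^2) \lesssim \tr(A)^3\Gamma^2,
\]
yielding $B_1 = c\tr(A)^3\Gamma^2$. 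For $B_2$ the plan is to control the fourth moment via the split $\norm{\xi}^4 \le 8\norm{(aa^T-A)(x-x^*)}^4 + 8\norm{a\varepsilon}^4$. The second piece is bounded by $\E\norm{a}^4 \cdot \E\varepsilon^4 \lesssim \tr(A)^2$. For the first, expand $(aa^T-A)(x-x^*) = (a^T(x-x^*))a - A(x-x^*)$ and bound the resulting 8th-order moments using standard Gaussian estimates $\E(a^Ty)^{2k}\lesssim (y^TAy)^k$ and $\E\norm{a}^{2k}\lesssim \tr(A)^k$ (with Cauchy--Schwarz where convenient). The dominant scale is $\tr(A)^4\Gamma^4$, giving $B_2 = c\tr(A)^2\Gamma^2$; the second-moment bound $\E\norm{\xi}^2\le B_2$ is then a consequence of Cauchy--Schwarz, and one checks $B_2\ge 1$ by enlarging $c$ if necessary.

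For the coercivity constants we use the quadratic structure directly. From $F(x)-\tfrac12 = \tfrac12(x-x^*)^TA(x-x^*) \ge \tfrac12\lambda_{\min}(A)\norm{x-x^*}^2$, one gets $\norm{\nabla F(x)}^2 = (x-x^*)^TA^2(x-x^*)\ge \lambda_{\min}(A)(x-x^*)^TA(x-x^*) = 2\lambda_{\min}(A)(F(x)-\tfrac12)$, which comfortably supports $c_7 = \tfrac14\lambda_{\min}(A)$ with a small additive slack in $D_7$. For the linear-growth half, write $\norm{x}\le \norm{x^*}+\norm{x-x^*}$ and use $\norm{x-x^*}\le \sqrt{2F(x)/\lambda_{\min}(A)}$; an AM--GM step $\sqrt{ab}\le \tfrac12(a+b)$ converts the square root into a linear-in-$F$ bound of the form $\tfrac14\lambda_{\min}(A)F(x) + c/\lambda_{\min}(A)^2$. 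The stated value $D_7 = \tfrac14\lambda_{\min}(A)\norm{x^*}^2$ is recovered after absorbing the additive constants into $\norm{x^*}^2$.

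The main obstacle is the fourth-moment calculation behind $B_2$: one must carefully track up to 8th-order Gaussian moments of mixed quadratic/linear forms and verify that the dominant $\tr(A)^4\Gamma^4$ scaling emerges (rather than something like $d\,\tr(A)^4$). Everything else is routine algebra on the explicit quadratic $F$.
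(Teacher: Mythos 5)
Your proposal is correct and follows essentially the same route as the paper: the same noise decomposition $\xi=(aa^T-A)(x-x^*)-a\varepsilon$, the same Hölder split for the fourth moment, and the same quadratic-form manipulations for coercivity. The one genuine variation is the treatment of $B_1$: you compute $\E\,\xi^TA\xi$ exactly via the Isserlis identity $\E[aa^TMaa^T]=2AMA+\tr(MA)A$, which gives the closed form $(x-x^*)^T(A^3+\tr(A^2)A)(x-x^*)+\tr(A^2)$ before any crude bounding, whereas the paper first establishes $B_2$ from $\E\|\xi\|^4$ and then obtains $B_1$ indirectly from $\E\,\xi^TA\xi\le\tr(A)\,\E\|\xi\|^2\le\tr(A)B_2$. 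Both yield the stated scaling $c\tr(A)^3\Gamma^2$; your version is marginally tighter at the intermediate step but buys nothing at the level of the stated constants. You are also slightly more careful than the paper on the second coercivity inequality: the paper only proves the $\|x\|^2$-version $\|x\|^2\le \tfrac{4}{\lambda_{\min}(A)}F(x)+O(\|x^*\|^2)$, while you explicitly note the AM--GM step needed to convert the resulting $\sqrt{F(x)}$ into a linear-in-$F$ bound. In both your write-up and the paper, reconciling the precise numerical constant in $D_7$ (and making the same pair $(c_7,D_7)$ serve both halves of Assumption~\ref{aspt:coercive}) involves some implicit slack-absorption; that looseness is inherited from the paper, not introduced by you.
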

In other words, the number of iterations for the SGLD to hit a first order stationary point only depends on $\tr(A)$, but not directly on the problem dimension $d$. When $\|A\|$ is fixed, $\tr(A)$ depends on the rank of $A$, which can be much smaller than $d$.  Moreover, when the spectrum of $A$ is a summable sequence (e.g.,  $\lambda_n(A)\propto n^{-\beta}$ where $\beta>1$ and $\lambda_n(A)$ is the $n$-th largest eigenvalue of $A$), then $\tr(A)$ is  independent of $d$. Such settings rise frequently in Bayesian problems related to partial differential equations, where $d$ in theory can be infinity \citep{st10}.

\subsection{Matrix Factorization}
\label{sec:mf}
In the online matrix factorization problem, we want to minimize the following function
\begin{equation}
\label{eqn:popmatf}
 F(X)=\| XX^T-M\|^2_F,\quad X\in \reals^{m\times r},
\end{equation}
where $M$ has rank $r$ and $\|\,\cdot\,\|_F$ denotes the Frobenius norm.  The stochastic version is given by 
\[
f(X,\omega)=\langle \omega, XX^T-M\rangle^2.
\]
$\omega$ is assumed to be an $m\times m$ matrix with \emph{i.i.d.} entries from $\mathcal{N}(0,1)$.
The following proposition bounds the constants in Assumption~\ref{aspt:critical_obj} and~\ref{aspt:saddlepoint}.

\begin{prop}
	\label{prop:mf}
Suppose $\|M\|_F, \|XX^T\|_F\leq \Gamma$ for some constant $\Gamma\geq 1$, the constants for the population loss function \eqref{eqn:popmatf} is bounded by 
\[
C_2=24\Gamma, \quad C_3=\max\{12\sqrt{\Gamma},24\Gamma\},\quad D_4=(4m+2r)\Gamma,\quad B_1=cmr\Gamma^4,\quad B_2= cmr \Gamma^3. 
\]
Here, $c$ is a constant independent of other parameters. Moreover
\[
c_7\geq 4\lambda_{\max}(M),\quad D_7\leq \max\{\|M\|_F, 8d\lambda^3_{\max}(M) \}. 
\]
\end{prop}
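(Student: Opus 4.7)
The plan is to derive every constant by explicit differentiation of $F(X)=\|XX^T-M\|_F^2$. A direct computation yields $\nabla F(X)=4(XX^T-M)X$ and, for any direction $U\in\reals^{m\times r}$, the Hessian operator
\[
\nabla^2 F(X)\,U=4(UX^T+XU^T)X+4(XX^T-M)U.
\]
The spectral-norm bound $C_2=O(\Gamma)$ then follows by the triangle inequality using $\|X\|^2\le\|XX^T\|_F\le\Gamma$ and $\|XX^T-M\|\le 2\Gamma$. The Lipschitz constant $C_3$ is obtained by subtracting the operator expressions at $X$ and $X'$, factoring out $X-X'$ in each of the three resulting terms, and bounding the residual factors by the same $\Gamma$-estimates; the two regimes in $C_3=\max\{12\sqrt\Gamma,24\Gamma\}$ reflect the $X$-dependent contribution (which scales like $\sqrt\Gamma$, since $\|X\|\le\sqrt\Gamma$) versus the contribution coming from the indefinite piece $XX^T-M$ (which scales like $\Gamma$).

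For $D_4$ I pass to the Hessian quadratic form
\[
\nabla^2 F(X)[U,U]=2\|UX^T+XU^T\|_F^2+4\langle XX^T-M,UU^T\rangle,
\]
and decompose $\nabla^2 F=H_1+H_2$ into the PSD piece $H_1\colon U\mapsto 2\|UX^T+XU^T\|_F^2$ and the linear piece $H_2\,U=4(XX^T-M)U$. The subadditivity $\sum_i\lambda_i^+(H_1+H_2)\le\text{tr}(H_1)+\sum_i\lambda_i^+(H_2)$ reduces the problem to bounding each piece. Evaluating $H_1$ on the standard basis $\{E_{ij}\}$ produces $\text{tr}(H_1)=O(m)\cdot\|X\|_F^2=O(m\Gamma)$. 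Since $H_2$ acts column-wise on $U$, its eigenvalues are $\{4\lambda_i(XX^T-M)\}$ each repeated $r$ times, so $\sum_i\lambda_i^+(H_2)=4r\,\text{tr}((XX^T-M)_+)=O(r\Gamma)$. Summing yields the announced $D_4=(4m+2r)\Gamma$.

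The noise bounds start from $\nabla f(X,\omega)=2\langle\omega,XX^T-M\rangle(\omega+\omega^T)X$, together with the identity $\E[\langle\omega,A\rangle(\omega+\omega^T)]=2A$ for symmetric $A$ (immediate from $\E\omega_{ij}\omega_{kl}=\delta_{ik}\delta_{jl}$), which confirms $\E\nabla f=\nabla F$. Writing $\xi=\nabla f-\nabla F$, the moments $\E\|\xi\|^2$ and $\E\|\xi\|^4$ become even-degree polynomials in the entries of $\omega$, which I evaluate by Isserlis'/Wick's theorem. Every Wick pairing reduces to a trace expression in $XX^T-M$ and $X$ that is polynomially bounded by $\Gamma$; the combinatorial sum over index contractions produces the overall factor $mr$, yielding $B_2=O(mr\Gamma^3)$. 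Substituting the Hessian operator into $\E[\xi^T\nabla^2 F(x)\xi]$ and using $\|\nabla^2 F\|\le C_2=O(\Gamma)$ gains one extra factor of $\Gamma$, so $B_1=O(mr\Gamma^4)$. This fourth-moment bookkeeping is the main obstacle: one must verify that no stray factor of $m$ or $r$ appears beyond the claimed scaling.

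Finally, the coercivity constants exploit that $F$ grows quartically in $\|X\|_F$ while $\|\nabla F\|^2$ grows sextically. Setting $A=XX^T-M$, the identity $\|\nabla F\|^2=16\,\text{tr}(A^3)+16\,\text{tr}(A^2 M)$ together with the inequality $\text{tr}(A^2 M)\ge 0$ (since $A^2$ and $M$ are both PSD and $\text{tr}(PQ)\ge 0$ for PSD $P,Q$) controls the leading cross term, while $\text{tr}(A^3)$ is absorbed into $D_7$ in the regime where $\|A\|$ is small. Combining gives $\|\nabla F\|^2\ge 4\lambda_{\max}(M)F-D_7$ with $D_7$ bounded as stated. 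The companion bound $\|X\|\le c_7 F(X)+D_7$ follows from $F\ge \|X\|_F^4/(2r)-\|M\|_F^2$, which implies $\|X\|_F$ grows at most like $F^{1/4}$ for large $F$ and is therefore trivially dominated by any linear function of $F$ once the small-$\|X\|$ regime is absorbed into $D_7$.
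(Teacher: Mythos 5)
Your outline tracks the paper's proof for most of the constants: explicit differentiation of $F$, the Hessian quadratic form $\nabla^2 F(X)[U,U]=2\|UX^T+XU^T\|_F^2+4\langle XX^T-M,UU^T\rangle$, a trace bound over the standard basis $\{E_{ij}\}$ for $D_4$ (your Ky~Fan-type splitting into a PSD piece $H_1$ and the block-diagonal $H_2$ is a clean alternative to the paper's direct evaluation and gives the same $(m+r)\Gamma$ scaling), and a Wick/Isserlis fourth-moment computation for $B_1,B_2$ in the spirit of Lemma~\ref{lem:sparsematrix}. Two points need attention, one cosmetic and one substantive.

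The cosmetic point: your reading of $C_3=\max\{12\sqrt\Gamma,24\Gamma\}$ as two competing Lipschitz contributions is not right. When you difference the Hessians, \emph{both} pieces — the one from $(UX^T+XU^T)X$ and the one from $(XX^T-M)U$ — produce a factor $\|X-X'\|$ multiplied by a single power of $\|X\|$ or $\|X'\|$, i.e.\ both scale like $\sqrt\Gamma$. (E.g.\ $\|XX^T-X'X'^T\|\le(\|X\|+\|X'\|)\|X-X'\|\le 2\sqrt\Gamma\|X-X'\|$.) The genuine Lipschitz bound is $O(\sqrt\Gamma)$, as the paper's third-derivative computation confirms; the extra $24\Gamma$ in the $\max$ is there solely because Assumption~\ref{aspt:saddlepoint} requires $C_3\ge C_2$.

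The substantive gap is in the coercivity bound $\|\nabla F\|^2\ge c_7 F-D_7$. You write $\|\nabla F\|^2=16\,\text{tr}(A^3)+16\,\text{tr}(A^2 M)$ with $A=XX^T-M$ and declare $\text{tr}(A^2 M)\ge 0$ to be the dominant term, absorbing $\text{tr}(A^3)$ into $D_7$. That cannot work: $\text{tr}(A^2 M)$ is nonnegative but has no lower bound that grows with $F$ — if $X$ lies (mostly) in the kernel of $M$ while $\|X\|_F$ is huge, then $\text{tr}(A^2 M)$ stays bounded even though $F$ diverges. Conversely $\text{tr}(A^3)$ is \emph{not} a small remainder when $\|A\|$ is large; it is precisely the term that carries the growth. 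The correct mechanism (which is what the paper's chain of inequalities is driving at) is that the negative eigenvalues of $A=XX^T-M$ are uniformly bounded below by $-\lambda_{\max}(M)$, so once $F=\sum_i\lambda_i^2$ exceeds $O(d\lambda_{\max}^2(M))$ the positive eigenvalues must dominate and $\sum_{\lambda_i\ge 0}\lambda_i^3\gtrsim F^{3/2}/\sqrt d$ by a power-mean inequality. That $F^{3/2}$ growth, not $\text{tr}(A^2 M)\ge 0$, is what yields $\|\nabla F\|^2\ge 4\lambda_{\max}(M)F-8d\lambda_{\max}^3(M)$. Your second coercivity bound $\|X\|\le c_7F+D_7$ via $F\ge\|X\|_F^4/(2r)-\|M\|_F^2$ is fine.
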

Similar to the linear regression case, if $M$ is low rank or its spectrum decays rapidly to zero, the number of iterations for the SGLD to hit a first  order stationary point is independent of the problem dimension.

\subsection{Online PCA}
\label{sec:pca}
In the online principle component analysis (PCA) problem, we consider the scenario where we conduct PCA for data samples $x_i\sim \mathcal{N}(0,M)$. The population loss function is given by 
\begin{equation}
\label{eqn:popPCA}
 F(X)=\frac1{2}\| XX^T-M\|^2_F+C,\quad X\in \reals^{d\times r},
\end{equation}
where  $C=\E x_i x_i^T$.  The stochastic version is given by,
\[
f(X,\omega_i)=\frac1{2}\| XX^T-x_i x_i^T\|^2_F=\frac1{2}\| XX^T-M-\omega_i\|^2_F,\quad \omega_i=x_ix_i^T-M. 
\]
The following proposition bounds the constants in Assumption~\ref{aspt:critical_obj} and~\ref{aspt:saddlepoint}.
\begin{prop}
\label{prop:pca}
Suppose $\|M\|_F, \|XX^T\|_F\leq \Gamma$ for some constant $\Gamma\geq 1$, the constants for the population loss function \eqref{eqn:popPCA} is bounded by 
\[
C_2=24\Gamma, C_3=12\sqrt{\Gamma}, \quad D_4=6m\Gamma,\quad B_1=c\Gamma^2 \tr(M)^2,\quad B_2= \tr(M)^2. 
\]
Moreover
\[
c_7\geq 4\lambda_{\max}(M),\quad D_7\leq \max\{\|M\|_F, 8d\lambda^3_{\max}(M) \}. 
\]
\end{prop}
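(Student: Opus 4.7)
The plan is to follow the same outline as the proof of Proposition~\ref{prop:mf}, adapting to the two ways that the online PCA setting differs from matrix factorization: the overall factor of $\tfrac12$ in the population loss \eqref{eqn:popPCA}, and, most importantly, the noise model, where $\omega = xx^T - M$ with $x\sim\mathcal{N}(0,M)$ rather than a matrix with i.i.d.\ standard Gaussian entries. The additive constant $C$ in \eqref{eqn:popPCA} plays no role in any derivative, so the smoothness-type constants $C_2$, $C_3$, and $D_4$ will look essentially like a rescaling of those derived in the matrix factorization case.

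First I would compute the derivatives of $F$ in closed form by treating $X$ as an element of $\reals^{d\times r}$. A direct computation yields $\nabla F(X) = 2(XX^T - M)X$ and the Hessian quadratic form
\[
\langle H,\,\nabla^2 F(X)[H]\rangle = \|HX^T + XH^T\|_F^2 + 2\langle XX^T - M,\, HH^T\rangle.
\]
The assumption $\|XX^T\|_F\le \Gamma$ implies $\|X\|_{\mathrm{op}}^2 \le \Gamma$, so combined with $\|M\|_F\le \Gamma$ this bounds the operator norm of $\nabla^2 F$ by a multiple of $\Gamma$, giving $C_2 = 24\Gamma$. Since $\nabla^2 F$ is linear in $XX^T$, the same decomposition applied to $\nabla^2 F(X) - \nabla^2 F(X')$ is linear in $X-X'$, and bounding in operator norm produces $C_3 = 12\sqrt{\Gamma}$. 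For $D_4$, I would split the quadratic form into its manifestly nonnegative part $\|HX^T + XH^T\|_F^2$, whose trace across an orthonormal basis of $\reals^{d\times r}$ is a multiple of $\|X\|_F^2$, and the indefinite part $2\langle XX^T - M,\,HH^T\rangle$, whose positive-eigenvalue contribution is controlled by $\|XX^T - M\|_{\mathrm{op}}\cdot d$; assembling these gives the stated $D_4 = 6m\Gamma$.

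The new and most delicate step is the noise analysis. Since $\nabla f(X,\omega) = 2(XX^T - M - \omega)X$ and $\E[\omega]=0$, we have $\xi = -2\omega X$ with $\omega = xx^T - M$ and $x\sim\mathcal{N}(0,M)$. Writing $\|\xi\|_F^2 = 4\,\tr(X^T\omega^2 X)$ and using Isserlis' theorem to compute $\E[\omega^2] = \tr(M)\,M + M^2$, one obtains $\E\|\xi\|_F^2 \le 4\Gamma(\tr(M)^2 + \|M\|_F^2)$, which by $\|M\|_F\le\tr(M)$ for PSD $M$ accounts for the bound on $B_2$. The fourth moment $\E\|\xi\|_F^4$ requires evaluating a product of four centered Gaussian quadratic forms; Wick's formula expresses the answer as a sum of traces of contractions of $M$ and $X^T X$, each of which is controlled via $\|X\|_{\mathrm{op}}^2\le\Gamma$ and $\tr(M)$. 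The same Wick calculation, paired with the explicit Hessian above, gives $B_1 = c\Gamma^2\tr(M)^2$. I expect the bookkeeping in these Gaussian fourth-moment contractions to be the main technical obstacle.

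Finally, the coercivity constants $c_7$ and $D_7$ follow from direct estimates. Expanding $\|\nabla F(X)\|_F^2 = 4\|(XX^T - M)X\|_F^2$ and using $\|YX\|_F^2 \ge \lambda_{\max}(M)^{-1}\|Y M^{1/2}\|_F^2$ on the leading term when $\|XX^T\|$ is comparable to $\|M\|$ yields $\|\nabla F(X)\|^2 \ge 4\lambda_{\max}(M) F(X)$ up to a correction absorbed into $D_7$. Likewise, $\tfrac12\|XX^T - M\|_F^2 \le F(X)$ and $\|X\|_F^2 \le \sqrt{r}\|XX^T\|_F \le \sqrt{r}(\|XX^T - M\|_F + \|M\|_F)$ give the linear-in-$F$ bound on $\|X\|$, completing the verification of Assumption~\ref{aspt:coercive}.
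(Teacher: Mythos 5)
Your overall structure matches the paper's: the smoothness constants $C_2,C_3,D_4$ are inherited from the matrix-factorization calculation (the additive constant and the $\tfrac12$ prefactor in \eqref{eqn:popPCA} change nothing essential), and the new work concentrates on the Gaussian noise moments and coercivity. Two comments, one on method and one on a gap.

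On the noise moments you take a genuinely different computational route. The paper diagonalizes $M$ and writes $x=\sum_j\sqrt{\lambda_j}\,z_j v_j$ with $z_j$ i.i.d.\ standard normal, so that $\|xx^T\|_F^2=\bigl(\sum_j\lambda_j z_j^2\bigr)^2$, and then uses the crude bound $\E\bigl[z_{i_1}^2 z_{i_2}^2 z_{i_3}^2 z_{i_4}^2\bigr]\le\E z^8=105$ to get $\E\|xx^T\|_F^4\le 105\,\tr(M)^4$ in one line; the bounds on $B_1,B_2$ then follow from $\xi=\omega X$ and $\|\omega X\|_F\le\|\omega\|_F\|X\|_{\mathrm{op}}$. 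Your Isserlis/Wick computation of $\E[\omega^2]=\tr(M)M+M^2$ gives the correct second moment, and in principle the fourth-moment contractions you mention would reproduce the same order, but the ``bookkeeping'' you flag as the main obstacle is exactly what the paper's eigenbasis trick avoids — so the paper's argument is considerably shorter and you have left the hard part of your version unexecuted.

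The coercivity step has a genuine gap. The inequality you invoke, $\|YX\|_F^2\ge\lambda_{\max}(M)^{-1}\|YM^{1/2}\|_F^2$, is equivalent to $\tr\bigl(Y^TY\,XX^T\bigr)\ge\lambda_{\max}(M)^{-1}\tr\bigl(Y^TY\,M\bigr)$, which requires $XX^T\succeq\lambda_{\max}(M)^{-1}M$; it fails outright when $XX^T$ is small (e.g.\ $X=0$, $M\neq 0$, where $\nabla F=0$ but $F$ is large). The caveat ``when $\|XX^T\|$ is comparable to $\|M\|$ \ldots\ up to a correction absorbed into $D_7$'' is precisely the regime the bound must cover, and that absorption is left unjustified. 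The paper handles this (in the matrix-factorization case, to which Proposition~\ref{prop:pca} simply refers) by expanding $\tr\bigl((XX^T-M)^2XX^T\bigr)$ in terms of $\tr\bigl((XX^T-M)^3\bigr)$, applying a power-mean-type lower bound to the cubic trace, and then splitting on whether $F(X)\ge 2d\lambda_{\max}^2(M)$ or not; the second regime is what produces $D_7\le 8d\lambda_{\max}^3(M)$. You would need a case analysis of that type (or some substitute) for your coercivity claim to go through; as written the key inequality is false and the step does not close.
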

Since the hitting time to a first order stationary point only depends on $C_2$ and  $B_1$,  Proposition \ref{prop:pca} shows that the SGLD hits a first order stationary point with the number of iterations independent of the problem dimension.

	\section{Conclusion and Discussions}
	\label{sec:con}
In this paper we present a direct analysis for hitting time of  SGLD for the first and second order stationary points.
Our proof only relies on basic linear algebra,  and probability theory.
Through this directly analysis, we show how different factors, such as smoothness of the objective function, noise strength, and step size, affect the final hitting time of SGLD.
We also present three examples, online linear regression, online matrix factorization, and online PCA, which demonstrate the usefulness of our theoretical results in understanding SGLD for stochastic optimization tasks.
An interesting future direction is to extend our proof techniques to analyze SGLD for optimizing deep neural networks.
We believe combing recent progress in the landscape of deep learning~\citep{yun2018critical,kawaguchi2016deep,du2018power,hardt2016identity}, this direction is promising.

	\section*{Acknowledgement}
	The research of Xi Chen is supported by NSF Award (Grant No. IIS-1845444). 	
	The research of	Simon S. Du is supported by National Science Foundation (Grant No. DMS-1638352) and the Infosys Membership.	
	The research of Xin T. Tong is supported by the and Singapore MOE grants R-146-000-258-114 and R-146-000-292-114. The authors would like to thank the action editor and referees for their valuable suggestions and comments, which greatly improves this paper. We also changed the title of the paper to reflect our new contribution on the ergodicity of SGLD.
	The authors also thank Qiang Liu and Zhuoyi Yang for proofreading of the manuscript. 
\newpage
	
	\appendix
	
	\section{Technical Proofs of SOSP hitting analysis}
	\label{sec:proof_second}
In this section, we provide technical verifications of our claims made in Section \ref{sec:second_order}.

\subsection{Taylor expansions and preliminary lemmas}
Our analysis relies on Taylor expansions near critical points. To this end, for any given iterate $X_o\in \Omega$, denote 
\[
v_o:=\nabla F(X_o), \quad H=\nabla^2 F(X_o),\quad \lambda_H=\lambda_{\text{max}}(-\nabla^2 F(X_o)). 
\]
Assumption \ref{aspt:saddlepoint} leads to the following expansion
\begin{align*}
F(x)=&F(X_o)+v_o^T (x-X_o)+(x-X_o)^TH(x-X_o)+R_0(x-X_o),\quad R_0(x-X_o)\leq C_3\|x-X_o\|^3, \\
\nabla F(x)=&\nabla F(X_o)+H(x-X_o)+R_1(x-X_o),\quad  R_1(x-X_o)\leq C_3\|x-X_o\|^2,\\
\nabla^2 F(x)=&H+R_2(x-X_o),\quad \|R_2(x-X_o)\|\leq C_3\|x-X_o\|. 
\end{align*}
where $R_1$, $R_2$ and $R_3$ are reminder terms of Taylor expansion.

\label{sec:proof_escape_lem}
In order to apply the Taylor expansion, it is necessary for the subsequent iterates to be close to $X_o$. To this end, we set up the following a-priori upper bound.
\begin{lem}
\label{lem:Xdev}
Assume Assumptions ~\ref{aspt:critical_obj},~\ref{aspt:critical_noise} and~\ref{aspt:saddlepoint}. Suppose $\lambda_H>0$. Consider the $o$-th to $n$-th iterations, where the step sizes satisfy 
\[
\eta_n\leq \eta_o\leq \min\{(100C_3)^{-1}, \tfrac19\}.
\] 
Then we have the following deviation bound for any $1>q>0$, $Q=5q^{-1}\lambda^{-1}_H$
\[
\E_{o} \| X_{n\wedge\tau_b}-X_o\|^2\leq 5q^{-1}\lambda_H^{-1}\exp((2+4q)\eta_{o+1:n} \lambda_H)\eta_{o+1:n}(\delta_0^2 d+\lambda_H^{-1}\|v_o\|^2+\eta_o B_2).
\]
\[
\E_o \| X_{n\wedge \tau_b}-X_o\|^4\leq  250q^{-3}\lambda_H^{-4}\exp((4+12q)\eta_{o+1:n} \lambda_H) \eta_{o+1:n}(\lambda_H^{-2} \|v_o\|^4+\delta_o^4 d^2+\eta^2_o B_2^2). 
\]
Here with a constant $b\leq \tfrac{q\lambda_H}{C_3}$, $\tau_b$ is the stopping time of moving distance of at least  $b$:
\[
\tau_b:=\inf\{t: \|X_n-X_o\|\geq b\},  
\]
Therefore by Markov inequality, 
\[
\Prob_o(\tau_b\leq n)\leq\frac{\E_o \| X_{n\wedge \tau_b}-X_o\|^4}{b^4} \leq  250q^{-7}\exp((4+12q)\eta_{o+1:n} \lambda_H) \eta_{o+1:n}( \lambda_H^{-2}\|v_o\|^4+\delta_0^4 d^2+\eta^2_o B_2^2)C_3^4\lambda_H^{-5}. 
\]
\[
\sqrt{\Prob_o(\tau_b\leq n)}\leq 17q^{-3.5}\exp((2+6q)\eta_{o+1:n} \lambda_H) \sqrt{\eta_{o+1:n}}(\lambda_H^{-1} \|v_o\|^2+\delta_0^2 d+\eta_o B_2)\lambda_H^{-2.5}. 
\]
\end{lem}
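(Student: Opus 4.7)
The plan is to set up a recursion for $Y_n := X_n - X_o$ using the Taylor expansion of $\nabla F$ around $X_o$. Substituting $\nabla F(X_n) = v_o + H Y_n + R_1(Y_n)$ into the SGLD update yields
\[
Y_{n+1} = (I - \eta_{n+1} H) Y_n - \eta_{n+1} v_o - \eta_{n+1} R_1(Y_n) - \eta_{n+1} \xi_{n+1} + \delta_0 \sqrt{\eta_{n+1}}\, \zeta_{n+1}.
\]
Since $\|\nabla^2 F(X_o)\| \le C_2$ and its minimum eigenvalue is $-\lambda_H$, the operator norm satisfies $\|I - \eta_{n+1} H\| \le 1 + \eta_{n+1} \lambda_H$ when $\eta_{n+1}$ is sufficiently small. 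On the event $\{n < \tau_b\}$ one has $\|Y_n\| \le b = q\lambda_H/C_3$, so $\|R_1(Y_n)\| \le C_3 \|Y_n\|^2 \le q \lambda_H \|Y_n\|$. This is the quantitative reason one must work with the stopped process $Z_n := Y_{n \wedge \tau_b}$: only then is the Taylor remainder comparable to the linear part.

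Next I would take $\E_n \|Z_{n+1}\|^2$, using that $\xi_{n+1}$ and $\zeta_{n+1}$ are mean-zero and mutually orthogonal to the $\mathcal{F}_n$-measurable drift. The Gaussian noise contributes exactly $\delta_0^2 \eta_{n+1} d$ and the stochastic-gradient noise contributes at most $\eta_{n+1}^2 B_2$. For the drift term $\|(I - \eta_{n+1} H) Z_n - \eta_{n+1}(v_o + R_1)\|^2$, I would split the cross term using Cauchy--Schwarz with weight $\alpha = \eta_{n+1} q \lambda_H$:
\[
2\eta_{n+1} \bigl\| (I - \eta_{n+1} H) Z_n \bigr\| \, \|v_o + R_1\| \le \eta_{n+1} q\lambda_H \|Z_n\|^2 (1 + \eta_{n+1}\lambda_H)^2 + \tfrac{\eta_{n+1}}{q\lambda_H} \|v_o + R_1\|^2.
\]
Combined with $\|v_o + R_1\|^2 \le 2\|v_o\|^2 + 2 q^2 \lambda_H^2 \|Z_n\|^2$, this yields a recursion of the form
\[
\E_o \|Z_{n+1}\|^2 \le (1 + \eta_{n+1}(2+4q)\lambda_H) \E_o \|Z_n\|^2 + \eta_{n+1}\!\left( \tfrac{2}{q\lambda_H}\|v_o\|^2 + \delta_0^2 d + \eta_{n+1} B_2 \right),
\]
with the small $O(\eta_{n+1}^2)$ errors absorbed via the constraint $\eta_o \le (100 C_3)^{-1}$. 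A discrete Gronwall/telescoping argument (using $\prod(1 + c\eta_k) \le \exp(c \eta_{o+1:n})$) then produces exactly the claimed second-moment bound, with the $\exp((2+4q)\eta_{o+1:n}\lambda_H)$ factor coming from the Gronwall prefactor and the $5 q^{-1}\lambda_H^{-1}$ arising from the Cauchy--Schwarz weight.

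For the fourth-moment bound I would iterate the same idea at level four. Expanding $\|Z_{n+1}\|^4 = (\|Z_{n+1}\|^2)^2$ and using the polarization $(a+b+c+d)^4 \le \text{const}\cdot(a^4+b^4+c^4+d^4)$ together with repeated Cauchy--Schwarz with weight $\eta_{n+1} q \lambda_H$ produces a recursion of the same structure but with a prefactor $(1 + \eta_{n+1}(4+12q)\lambda_H)$; the Gaussian fourth moment gives $3 \delta_0^4 \eta_{n+1}^2 d^2$ and the stochastic-gradient fourth moment is controlled by $\E_n\|\xi_{n+1}\|^4 \le B_2^2$. Gronwall again yields the stated estimate. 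The last two inequalities of the lemma are then immediate from Markov's inequality applied to $\|Z_n\|^4 \ge b^4$, and from $\sqrt{\E[\,\cdot\,]} \le (\text{square root of the second-moment bound})$. The main technical obstacle is the bookkeeping in step four: ensuring that every $O(\eta_{n+1}^2)$ and every remainder term $R_1$ is absorbed into the $(1 + \eta_{n+1}(2+4q)\lambda_H)$ (resp.\ $(1 + \eta_{n+1}(4+12q)\lambda_H)$) multiplier without blowing up the exponent, which is precisely what fixes the factors $(2+4q)$ and $(4+12q)$ in the final statement.
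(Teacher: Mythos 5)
Your proposal is correct and takes essentially the same route as the paper: Taylor-expand $\nabla F$ around $X_o$ to get the linear recursion $Y_{n+1}=(I-\eta_{n+1}H)Y_n-\eta_{n+1}(v_o+R_1)-\eta_{n+1}\xi_{n+1}+\delta_0\sqrt{\eta_{n+1}}\zeta_{n+1}$, work with the process stopped at $\tau_b$ so that $\|R_1(Y_n)\|\le q\lambda_H\|Y_n\|$, use conditional mean-zero orthogonality of the two noise terms, absorb the cross term by Young's inequality with weight of order $q\lambda_H\eta_{n+1}$, iterate via discrete Gronwall to get the second-moment bound, square the same expansion for the fourth moment, and close with Markov. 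The one spot worth making explicit (the paper does so in a single remark) is that the recursion for the stopped process $Z_n=Y_{n\wedge\tau_b}$ must be verified separately on $\{\tau_b\le n\}$, where it holds trivially because $Z_{n+1}=Z_n$ and the Gronwall prefactor is $\ge 1$; without that observation one cannot take $\E_n\|Z_{n+1}\|^2$ and apply the one-step linear update unconditionally.
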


\begin{proof}
Denote $\Delta X_n=X_n-X_o$, it follows the recursion below,
\begin{equation}
\label{tmp:saddledec1}
\Delta X_{n+1}=\Delta X_n-\eta_{n+1}(v_o+H\Delta X_n+R_1(\Delta X_n))-\eta_{n+1}\xi_{n+1}+\delta_{n+1} \zeta_{n+1}.
\end{equation}
Note that if $\tau_b> n$, $\|\Delta X_{n}\|\leq b$, so 
\begin{align*}
\|\Delta X_{n}-\eta_{n+1}(v_o+H\Delta X_n+R_1(\Delta X_n))\|
&\leq (\|I-\eta_{n+1} H\|+C_3 b\eta_{n+1})\|\Delta X_n\|+\eta_{n+1}\|v_o\|\\
&\leq (1+(1+q)\lambda_H\eta_{n+1})\|\Delta X_n\|+\eta_{n+1}\|v_o\|.
\end{align*}
We take square of this bound, note that $\lambda_H\leq C_2\leq C_3$, the following estimate holds because of Young's inequality
\begin{align}
\notag
&\|\Delta X_n-\eta_{n+1}(v_o+H\Delta X_n+R_1(\Delta X_n))\|^2\\
\notag
&\leq (1+2(1+q)\lambda_H\eta_{n+1}+(1+q)^2\lambda_H^{2}\eta^2_{n+1})\|\Delta X_n\|^2+4\eta_{n+1}\|v_o\|\|\Delta X_n\| +\eta_{n+1}^2\|v_o\|^2\\ 
\notag
&\leq (1+(2+3q)\lambda_H\eta_{n+1})\|\Delta X_n\|^2+ 4\eta_{n+1}\|v_o\|\|\Delta X_n\|+\eta_{n+1}^2\|v_o\|^2\\
\notag
&\leq (1+(2+3q)\lambda_H\eta_{n+1})\|\Delta X_n\|^2+ (q\lambda_H\eta_{n+1} \|\Delta X_n\|^2+4q^{-1}\lambda_H^{-1}\eta_{n+1} \|v_o\|^2)+\eta_{n+1}^2\|v_o\|^2\\
\label{tmp:l2step1}
&\leq (1+(2+4q)\lambda_H\eta_{n+1})\|\Delta X_n\|^2+Q\eta_{n+1} \|v_o\|^2. 
\end{align}
Here we used that $\lambda_H^{-1}\eta_{n+1}\geq C_3^{-1}\eta_{n+1}\geq \eta_{n+1}^2$ and we let $Q:=5q^{-1}\lambda_H^{-1}$. Combine this with \eqref{tmp:saddledec1} we can conclude that, if $\tau_b>n$
\begin{align*}
\E_n\|\Delta X_{n+1}\|^2&=\|\Delta X_n-\eta_{n+1}(v_o+H\Delta X_n+R_1(\Delta X_n))\|^2+\eta_{n+1}^2B_2+\delta_{n+1}^2d\\
&\leq (1+(2+4q)\eta_{n+1} \lambda_H)\|\Delta X_n\|^2+\eta_{n+1} (\eta_oB_2+\delta_0^2 d+Q\|v_o\|^2). 
\end{align*}
Therefore we have
\[
\E_n\|\Delta X_{\tau_b\wedge n+1}\|^2\leq (1+(2+4q)\eta_{n+1} \lambda_H)\|\Delta X_{\tau_b\wedge n}\|^2+\eta_{n+1} (\eta_oB_2+\delta_0^2 d+Q\|v_o\|^2),
\]
because it is trivial to verify  if $\tau_b\leq n$. Next, because $\E_o\|\Delta X_{\tau_b\wedge o}\|^2=0$, by Gronwall's inequality 
\[
\E_o \|\Delta X_{n\wedge \tau_b}\|^2\leq \eta_{o+1:n} \exp((2+4q)\eta_{o+1:n} \lambda_H)(\delta_0^2 d+Q\|v_o\|^2+\eta_o  B_2).
\] 
Likewise, we can bound the fourth moment by taking square of \eqref{tmp:l2step1}
\begin{align*}
&\|\Delta X_n-\eta_{n+1}(v_o+H\Delta X_n+R_1(\Delta X_n))\|^4\\
&\leq (1+(2+4q)\lambda_H\eta_{n+1})^2\|\Delta X_n\|^4+2Q\eta_{n+1}(1+(2+4q)\lambda_H\eta_{n+1})\|\Delta X_n\|^2\|v_o\|^2+Q^2\eta^2_{n+1}\|v_o\|^4\\
&\leq 
(1+q\lambda_H\eta_{n+1})(1+(2+4q)\lambda_H\eta_{n+1})^2 \|\Delta X_n\|^4+q^{-1}Q^2\lambda_H^{-1}\eta_{n+1}\|v_o\|^4+Q^2\eta^2_{n+1}\|v_o\|^4\\
&\leq (1+(4+10q)\lambda_H\eta_{n+1})\|\Delta X_n\|^4+Q^3\eta_{n+1} \|v_o\|^4. 
\end{align*}
Therefore, by Young's inequality, if $\tau_b>n$
\begin{align*}
\E_n\|\Delta X_{n+1}\|^4&=\|\Delta X_n-\eta_{n+1}(v_o+H\Delta X_n+R_1(\Delta X_n))\|^4+(\E \eta^4_{n+1}\|\xi_{n+1}\|^4+ 3\delta_{n+1}^4d^2)\\
&\quad+2\|\Delta X_n-\eta_{n+1}(v_o+H\Delta X_n+R_1(\Delta X_n))\|^2(\E \eta^2_{n+1}\|\xi_{n+1}\|^2+\delta_{n+1}^2d)\\
&\leq \|\Delta X_n-\eta_{n+1}(v_o+H\Delta X_n+R_1(\Delta X_n))\|^4+(\E \eta^4_{n+1}\|\xi_{n+1}\|^4+ 3\delta_{n+1}^4d^2)\\
&\quad+q\lambda_H\eta_{n+1}\|\Delta X_n-\eta_{n+1}(v_o+H\Delta X_n+R_1(\Delta X_n))\|^4\\
&\quad+q^{-1}\lambda^{-1}_H(\E \eta^2_{n+1}\|\xi_{n+1}\|^2+\delta_{n+1}^2d)^2/\eta_{n+1}\\
&\leq (1+q\lambda_H \eta_{n+1})(1+ (4+10q)\lambda_H\eta_{n+1})\|\Delta X_n\|^4+(1+q\lambda_H \eta_{n+1})\eta_{n+1}Q^3\|v_o\|^4\\
&+ Q\eta_{n+1}(\eta^2_{o}B_2^2+\delta_0^4 d^2)\\
&\leq (1+ (4+12q)\lambda_H\eta_{n+1})\|\Delta X_n\|^4+2Q^3 \lambda_H^{-1}\eta_{n+1}( \lambda_H^{-2}\|v_o\|^4+\eta^2_{o}B_2^2+\delta_0^4 d^2)
\end{align*}
So we conclude that 
\[
\E_n\|\Delta X_{\tau_b\wedge n+1}\|^4\leq (1+ (4+12q)\lambda_H\eta_{n+1})\|\Delta X_{\tau_b\wedge n}\|^4+ 2Q^3\lambda_H^{-1}\eta_{n+1}( \lambda_H^{-2}\|v_o\|^4+\eta^2_{o}B_2^2+\delta_0^4 d^2),
\]
because it holds trivially when $\tau_b<n$.
Using Gronwall's inequality, 
\[
\E_o \|\Delta X_{n\wedge \tau_b}\|^4\leq  2Q^3\lambda_H^{-1}\eta_{o+1:n}\exp((4+12q)\eta_{o+1:n} \lambda_H) ( \lambda_H^{-2}\|v_o\|^4+\delta_0^4 d^2+\eta_o^2B_2^2). 
\]
\end{proof}

To continue, we need a series of estimates of matrix products. The following lemma collects the results we need.
\begin{lem}
\label{lem:Hbounds}
If $H$ is a $d\times d$ real symmetric matrix with $\lambda_H=\lambda_{max}(-H)>0$. Let
\[
A_{j:k}=(I-\eta_{j+1} H)\cdots (I-\eta_k H)
\]
Suppose $\eta_i\|H\|<\frac12$ and $\sum_{i=1}^d \lambda_i (H)1_{\lambda_i(H)>0}\leq D_4$ We have the following estimates,
\begin{enumerate}[a)]
\item $\| A_{j:k}\|=(1+\eta_{j+1} \lambda_H)\cdots (1+\eta_k \lambda_H)\leq \exp(\lambda_H\eta_{j+1:k})$.
 \item $ \sum_{j=o+1}^{n} \eta_j\| A_{j:n}\|=\|(I-A_{o:n})H^{-1}\|\leq\eta_{o+1:n}\exp(\lambda_H\eta_{o+1:n})$. 
 \item $( I-A_{o:n}) H^{-1} A_{o:n}\succeq 0$. And if $\eta_{o+1:n}\geq C_3^{-1}e^{-\eta_{o+1:n} C_3}, ( I-A_{o:n}) H^{-1} A_{o:n}\succeq \frac12 (C_3^{-1}e^{-\eta_{o+1:n} C_3} )I$.
\item $\sum_{j=o+1}^n \eta_j \text{tr} A_{j:n} H A_{j:n}\leq -\frac14(\exp(\lambda_H \eta_{o+1:n})-1)+2\eta_{o+1:n}D_4 .$
\item If $\eta_{o+1:n}\geq \frac{2}{\lambda_H}\log \frac{16D_4+40}{\lambda_H}$, then 
\[
 \sum_{j=o+1}^n \delta_0^2\eta_j \text{tr} A_{j:n} H A_{j:n}+\eta^2_j \lambda_{\max} (A_{j:n} H A_{j:n})\leq  -4\delta_0^2\eta_{o+1:n}.
 \]
 \end{enumerate}

 \end{lem}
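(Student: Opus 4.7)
The strategy is to exploit that $H$ is real symmetric, so each matrix in the statement---every $I-\eta_iH$, every product $A_{j:k}$, and $H^{-1}$ understood via spectral calculus (extended continuously at any zero eigenvalue)---is a polynomial in $H$ and hence simultaneously diagonalizable with $H$. Writing $\lambda_1,\dots,\lambda_d$ for the eigenvalues of $H$ and $P_{j:k}(\lambda):=\prod_{i=j+1}^k(1-\eta_i\lambda)$ for the scalar analogue of $A_{j:k}$, each of (a)--(e) reduces to a uniform scalar inequality in $\lambda\in[-\lambda_H,C_2]$. The elementary tools I will repeatedly invoke are $1+x\le e^x$, $\ln(1-x)\ge-2x$ on $[0,1/2]$, $(1+x)^2\ge e^x$ on $[0,1]$ (which follows from $\ln(1+x)\ge x/2$ on that range), and $(1+x)^2-1\le\tfrac52 x$ for $x\in[0,1/2]$.

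Parts (a) and (b) are the easy cases. For (a), $\|A_{j:k}\|=\max_l|P_{j:k}(\lambda_l)|$, and the condition $\eta_i\|H\|<\tfrac12$ forces $|1-\eta_i\lambda_l|\le 1$ for $\lambda_l\ge 0$ while $|1-\eta_i\lambda_l|=1+\eta_i|\lambda_l|\le 1+\eta_i\lambda_H$ for $\lambda_l<0$; the maximum is realized in the eigendirection of $-\lambda_H$, and $1+\eta_i\lambda_H\le e^{\eta_i\lambda_H}$ supplies the exponential. For (b), the telescoping identity $A_{j:n}-A_{j-1:n}=\eta_jHA_{j:n}$ (from $A_{j-1:n}=(I-\eta_jH)A_{j:n}$) summed from $j=o+1$ to $n$ yields $I-A_{o:n}=\sum_j\eta_jHA_{j:n}$; at the scalar level this reads $(1-P_{o:n}(\lambda))/\lambda=\sum_j\eta_jP_{j:n}(\lambda)$ (extended continuously at $\lambda=0$), giving the matrix identity. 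Both sides have spectral norm realized in the $-\lambda_H$ eigendirection where the sum is positive term by term, so bounding term by term by (a) and pulling $\exp(\lambda_H\eta_{j+1:n})\le\exp(\lambda_H\eta_{o+1:n})$ out of the sum gives the stated exponential bound.

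For (c), $(I-A_{o:n})H^{-1}A_{o:n}$ is symmetric with eigenvalues $g(\lambda_l):=(1-P_{o:n}(\lambda_l))P_{o:n}(\lambda_l)/\lambda_l$, extended continuously so that $g(0)=\eta_{o+1:n}$. Positive semidefiniteness is immediate by cases: $P_{o:n}(\lambda)\in(0,1)$ for $\lambda\in(0,C_2]$ while $P_{o:n}(\lambda)>1$ for $\lambda<0$, so in both regimes $(1-P_{o:n}(\lambda))/\lambda$ and $P_{o:n}(\lambda)$ share sign and $g>0$. The quantitative lower bound $g(\lambda)\ge\tfrac12C_3^{-1}e^{-\eta_{o+1:n}C_3}$ is the delicate part: the infimum of $g$ over $[-\lambda_H,C_2]$ is attained at $\lambda=C_2$, and after coarsening $C_2\le C_3$ the sandwich $e^{-2C_3\eta_{o+1:n}}\le P_{o:n}(C_3)\le e^{-C_3\eta_{o+1:n}}$ combined with concavity of $P\mapsto(1-P)P$ yields the bound exactly in the regime carved out by the self-referential hypothesis $\eta_{o+1:n}\ge C_3^{-1}e^{-\eta_{o+1:n}C_3}$, which forces $C_3\eta_{o+1:n}$ to be bounded away from zero.

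For (d), diagonalizing gives $\mathrm{tr}(A_{j:n}HA_{j:n})=\sum_l\lambda_lP_{j:n}(\lambda_l)^2$. Positive eigenvalues satisfy $P_{j:n}(\lambda_l)^2\le 1$, so their contribution is at most $\sum_{l:\lambda_l>0}\lambda_l\eta_{o+1:n}\le D_4\eta_{o+1:n}$. The key telescoping trick handles $\lambda_l=-\lambda_H$: setting $T_j:=\prod_{i=j}^n(1+\eta_i\lambda_H)^2$ with $T_{n+1}:=1$ and using $(1+\eta_j\lambda_H)^2-1\le\tfrac52\eta_j\lambda_H$ gives $\eta_j\lambda_HT_{j+1}\ge\tfrac25(T_j-T_{j+1})$, which telescopes to $\sum_j\eta_j\lambda_HT_{j+1}\ge\tfrac25(T_{o+1}-1)\ge\tfrac25(e^{\lambda_H\eta_{o+1:n}}-1)$. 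Since the remaining negative eigenvalues only further decrease the trace, the overall estimate is $-\tfrac25(e^{\lambda_H\eta_{o+1:n}}-1)+D_4\eta_{o+1:n}\le-\tfrac14(e^{\lambda_H\eta_{o+1:n}}-1)+2D_4\eta_{o+1:n}$, which is (d). Part (e) then combines (d) with $\lambda_{\max}(A_{j:n}HA_{j:n})\le\max_{l:\lambda_l>0}\lambda_l\le D_4$ and $\sum_j\eta_j^2\le\eta_o\eta_{o+1:n}$; the hypothesis $\eta_{o+1:n}\ge\tfrac2{\lambda_H}\log\tfrac{16D_4+40}{\lambda_H}$ is calibrated so that $e^{\lambda_H\eta_{o+1:n}}\ge((16D_4+40)/\lambda_H)^2$ is large enough for the exponential term in (d) to absorb both $2\delta_0^2D_4\eta_{o+1:n}$ and $D_4\eta_o\eta_{o+1:n}$ (using the surrounding-context step-size restrictions of the form $\eta_o\le B_2^{-1}\delta_0^2$) and still leave a net of at most $-4\delta_0^2\eta_{o+1:n}$. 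I expect the main obstacles to be the tight scalar lower bound in (c), where the self-referential hypothesis and the concave factor $(1-P)P$ have to be balanced carefully to yield exactly $\tfrac12 C_3^{-1}e^{-\eta_{o+1:n}C_3}$, and the constant bookkeeping in (e), where the specific choice $16D_4+40$ has to be large enough to dominate all linear-in-$\eta_{o+1:n}$ correction terms.
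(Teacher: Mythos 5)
Your approach for parts (a), (b), (d), and (e) is essentially the paper's own spectral argument, and in (d) you actually get away with something simpler and slightly tighter: for the positive eigenvalues you bound $\lambda_l P_{j:n}(\lambda_l)^2\le\lambda_l$ directly and sum to $D_4\eta_{o+1:n}$, whereas the paper telescopes $v_i^TA_{j:n}^2v_i$ for every positive eigenvalue, gets $\sum_i(1-e^{-2\eta_{o+1:n}\lambda_i})$, and then invokes Jensen's inequality to reach $2\eta_{o+1:n}D_4$. Your telescoping of $T_j=\prod_{i\ge j}(1+\eta_i\lambda_H)^2$ with $(1+x)^2-1\le\tfrac52x$ is the same mechanism the paper uses for the $-\lambda_H$ direction (via $A_{j:n}^2-A_{j-1:n}^2=(2\eta_jH-\eta_j^2H^2)A_{j:n}^2$); your constant $\tfrac25$ dominates the paper's $\tfrac14$, so the stated inequality follows. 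Part (e) is handled the same way in both: bound $\lambda_{\max}(A_{j:n}HA_{j:n})\le D_4$, use $\sum\eta_j^2\le\eta_o\eta_{o+1:n}\le\delta_0^2\eta_{o+1:n}$, and invoke $e^y-1\ge y e^{y/2}$ to see the hypothesis $\eta_{o+1:n}\ge\tfrac2{\lambda_H}\log\tfrac{16D_4+40}{\lambda_H}$ suffices.

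Part (c) is where you have a genuine gap. You assert without justification that the infimum of $g(\lambda)=(1-P_{o:n}(\lambda))P_{o:n}(\lambda)/\lambda$ over $[-\lambda_H,C_2]$ is attained at $\lambda=C_2$. Monotonicity of $g$ on $(0,C_2]$ is not automatic: $g'$ involves the competing term $(-2P+1)P'\lambda$, and while $g$ is clearly decreasing once $P<1/2$, the sign of $g'$ in the regime $P>1/2$ is not controlled by any of your stated tools. Moreover, your plan to ``coarsen $C_2\le C_3$'' and evaluate $P_{o:n}$ at $C_3$ is unsound: the hypothesis only guarantees $\eta_i\|H\|=\eta_iC_2<\tfrac12$, so $\eta_iC_3$ can exceed $1$ and the factors $1-\eta_iC_3$ may turn negative, breaking both the sandwich and the sign structure you rely on. The paper sidesteps the monotonicity question entirely by case-splitting on the value of $a_i=P_{o:n}(\lambda_i)$ itself (the cases $\lambda_i\le 0$, $\lambda_i>0$ with $a_i>\tfrac12$, and $\lambda_i>0$ with $a_i\le\tfrac12$); in the last and tightest case it uses $a_i\ge e^{-2\lambda_i\eta_{o+1:n}}$ and $1-a_i\ge\tfrac12$, which requires no claim about where the infimum sits and stays within the legitimate eigenvalue range. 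You should adopt a case split of this kind rather than trying to locate the minimizer of $g$.
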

\begin{proof}
Since $H$ is real symmetric, and $A_{j:k}$ is a polynomial of $H$, the proofs goes by checking the eigenvectors. Let $-\lambda_H=\lambda_1<\cdots<\lambda_d$ be the eigenvalues of $H$, and $v_i$ be their eigenvectors with norm being 1. Then 
\[
A_{j:k}v_i=(1-\eta_{j+1} \lambda_i)\cdots (1-\eta_{k} \lambda_i) v_i.
\]
Since we assume that $|\eta_i \lambda_i|<1$, so  claim a) is straightforward.

Note that $A_{j-1,n}+\eta_jH A_{j:n}=A_{j:n}$, $A_{n:n}=I$,  so $\sum_{j=1}^n \eta_j HA_{j:n}=I-A_{o:n}$. Because $A_{j:n}$ is a polynomial of $H$, it commutes with $H$, so
$\sum_{j=1}^n \eta_j A_{j:n}H=I-A_{o:n}$ as well.  Note that $(I-A_{o:n})H^{-1}$ is a polynomial of $H$, so it is always well defined even if $H$ is singular. Therefore, 
\begin{align*}
\|(I-A_{o:n})H^{-1}\|&= \sum_{j=o+1}^{n} \eta_j \|A_{j:n}\|\leq\eta_{o+1:n} \max_{o+1\leq j\leq n} \|A_{j:n}\|\leq \eta_{o+1:n}\exp(\lambda_H\eta_{o+1:n}). 
\end{align*}

As for claim c), it is easy to check that for any uni-norm eigenvector $v_i$, 
\[
v_i^T(I-A_{o:n}) H^{-1} A_{o:n}v_i= \lambda_i^{-1}(1-a_i)a_i>0, \quad a_i:=(1-\lambda_i \eta_{o+1})\cdots (1-\lambda_i \eta_{n}). 
\]
We only need to show $ \lambda_i^{-1}(1-a_i)a_i\geq \frac12(C_3^{-1}e^{-\eta_{o+1:n} C_3})$.

When $\lambda_i\leq 0$, $a_i$ will be larger than $1$, so
\[
\lambda_i^{-1}(1-a_i)a_i\geq \lambda_i^{-1}(1-a_i)\geq \eta_{o+1:n},
\]
and we can get to our claim c). We only need to concern the case that $\lambda_i>0$. If $a_i>\frac12$, since $(1-\lambda_i \eta_{o+1})(1-\lambda_i \eta_{o+2})\geq (1-\lambda_i \eta_{o+1:o+2})$
\[
\lambda_i^{-1}(1-a_i)a_i\geq \frac12\lambda_i^{-1}(1-a_i)\geq \frac12\lambda_i^{-1}(1-(1-\lambda_i \eta_{o+1:n}))\geq \frac12\eta_{o+1:n}.
\]
This is larger than our claimed lower bound. When $a_i\leq \frac12$, note that $a_i\geq \exp(-2\lambda_i\eta_{o+1:n})$, so 
\[
\lambda_i^{-1}(1-a_i)a_i\geq \frac12\lambda_i^{-1} a_i\geq  \frac12C_3^{-1} \exp(-2C_3\eta_{o+1:n}).
\]

As for claim d), note that $v_i^TA_{j:n} HA_{j:n} v_i$ shares the same sign with $\lambda_i$, in particular it is negative if $\lambda_i<0$.

If $\lambda_i\geq 0$, then 
\[
v_i^TA^2_{j:n}v_i-v_i^TA^2_{j-1:n}v_i=(2\eta_j \lambda_i -\eta_j^2 \lambda_i^2)v_i^TA^2_{j:n}v_i\geq \eta_j v_i^TA_{j:n} HA_{j:n} v_i.
\]
Summing over both sides for $j=o+1$ to $n$, we find
\begin{align*}
\sum_{j=o+1}^n\eta_j v_i^TA_{j:n} HA_{j:n} v_i\leq 1-v_i^T A^2_{o:n}v_i\leq 1-\exp(-2\eta_{o+1:n} \lambda_i).
\end{align*}
If $\lambda_i<0$, 
\[
v_i^TA^2_{j:n}v_i-v_i^TA^2_{j-1:n}v_i=(2\eta_j \lambda_i -\eta_j^2 \lambda_i^2)v_i^TA^2_{j:k}v_i\geq 4\eta_j v_i^TA_{j:n} HA_{j:n} v_i.
\]
\begin{align*}
\sum_{j=o+1}^n\eta_j v_i^TA_{j:n} HA_{j:n} v_i\leq \frac14-\frac14v_i^T A^2_{o:n}v_i\leq \frac14-\frac14 \exp(-\eta_{o+1:n}\lambda_i)\leq 0. 
\end{align*}
Here we used that $(1+|\lambda_i \eta_j|)\geq \exp(\tfrac12 |\lambda_i \eta_j|)$ when $|\lambda_i \eta_j|<\frac12$, which is true under our assumption of step sizes.
Note that $-\exp(-x)$ is a concave function, so by Jensen's inequality, for any $m$, 
\[
\sum_{i=1}^m (1-\exp(-x_i))\leq m-m\exp\left(-\frac{\sum_{i=1}^mx_i}{m}\right).
\]
Using this, assuming there are $d_+$ positive eigenvalues, and their sum is below $D_4$, we find that
%
\begin{align}
\notag
\sum_{j=o+1}^n\eta_j &\text{tr}(A_{j:n} HA_{j:n}) =\sum_{i=1}^d\sum_{j=o+1}^n\eta_j v_i^TA_{j:n} HA_{j:n} v_i\\
&=\sum_{j=o+1}^n\eta_j v_1^TA_{j:n} HA_{j:n} v_1+\sum_{i: \lambda_i\geq 0}^d\sum_{j=o+1}^n\eta_j v_i^TA_{j:n} HA_{j:n} v_i
\label{tmp:AHA1}\\
\notag
&\leq \frac14(1-\exp(\lambda_H \eta_{o+1:n}))+\sum_{i: \lambda_i\geq 0}^d1-\exp(-2\eta_{o+1:n} \lambda_i)\\
\notag
&\leq \frac14(1-\exp(\lambda_H \eta_{o+1:n}))+d_+\left(1-\exp(-\tfrac{2\eta_{o+1: n} D_4}{d_+})\right)\\
\notag
&\leq \frac14(1-\exp(\lambda_H \eta_{o+1:n}))+2\eta_{o+1: n} D_4.
\end{align}
 Claim e) is a consequence of claim d).  Note that 
 \[
\sum_{j+1}^n\eta_j^2\max_i v_i^T A_{j:n} H A_{j:n} v_i\leq \eta_o\sum_{j=o+1}^n\sum_{i: \lambda_i\geq 0}^d\eta_j v_i^TA_{j:n} HA_{j:n} v_i. 
 \]
This is the same as the second part of \eqref{tmp:AHA1}, we find it is bounded above by $2\eta_o\eta_{o+1:n} D_4\leq 2\delta_0^2\eta_{o+1:n} D_4$. In other words, we find that
\[
\sum_{j+1}^n\delta_0^2\eta_j \text{tr}(A_{j:n} HA_{j:n})+\eta_j^2\max_i v_i^T A_{j:n} H A_{j:n} v_i
\leq \frac{\delta_0^2}4(1-\exp(\lambda_H \eta_{o+1:n}))+4\eta_{o+1: n}\delta_0^2 D_4.
\]
Note that for any $x$, by convexity of $e^{\lambda x}$
 \[
 e^{\lambda x}-1=\lambda \int^x_0 e^{\lambda y}dy\geq \lambda \int^x_0 e^{\frac12\lambda x}dy=\lambda x e^{\frac12\lambda x}. 
 \]
So it suffices to show that 
 \[
 \lambda_H \exp(\frac12 \lambda_H \eta_{o+1:n})\geq 16D_4+40.
 \]
 Taking log we have
 \[
 \eta_{o+1:n}\geq \frac{2}{\lambda_H}\log \frac{16D_4+40}{\lambda_H},
 \]
 which is our condition.
 
\end{proof}

A crucial step in the proof is a careful analysis of the noise and the reminder terms.
In the next lemma, we give estimates for these perturbations.
\begin{lem}
\label{lem:Urbound}
Denote $A_{j:k}=\prod^{k}_{i=j+1} (I-\eta_i H)$,  and $\Delta X_n=X_n-X_o$, then it admits the following decomposition,
\[
\Delta X_{n}=U_n+r_n, 
\]
where
\[
U_n:=\sum_{j=o+1}^n A_{j:n} (-\eta_{j}v_o+\delta_{j} \zeta_{j}-\eta_j\xi_j),\quad r_n:=\sum_{j=o+1}^n A_{j:n} \eta_jR_1(\Delta X_j).
\]
Let event 
\[
\mathcal{A}=\{\|\Delta X_k\|\leq b, k=o,o+1,\ldots, n\}=\{\tau_b> n\}. 
\]
Then
\begin{equation}
 \E_o \|U_n\|^2\leq \eta_{o+1:n}^2 \exp(2\lambda_H\eta_{o+1:n})(\|v_o\|^2+d\delta_0^2+\eta_o B_2). 
\label{tmp:Un}
\end{equation}
\[
\E_o \|U_n\|^4\leq 162(\eta_{o+1:n})^2 \exp(4\lambda_H\eta_{o+1:n})((\eta_{o+1:n})^2\|v_o\|^4+d^2\delta_0^4+\eta^2_o B_2^2)
\]
and 
\[
\E_o \|r_n\|^21_{\mathcal{A}} \leq  2(101\eta_{o+1:n})^3 \lambda_H^{-1}C_3^2\exp((6+12q)\eta_{o+1:n} \lambda_H) (\lambda_H^{-2} \|v_o\|^4+\delta_0^4 d^2+\eta^2_o B^2_2).
\]

\end{lem}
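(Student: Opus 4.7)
The starting point is to verify the decomposition $\Delta X_n = U_n + r_n$ by unrolling the one-step recurrence. Subtracting $X_o$ from both sides of \eqref{sys:SGLD} and using the Taylor expansion $\nabla F(X_j) = v_o + H\Delta X_j + R_1(\Delta X_j)$ gives the linear-plus-remainder recursion $\Delta X_{j+1} = (I-\eta_{j+1}H)\Delta X_j - \eta_{j+1}(v_o + R_1(\Delta X_j) + \xi_{j+1}) + \delta_{j+1}\zeta_{j+1}$. Iterating from $\Delta X_o = 0$ with transition matrices $A_{j:n}$ immediately produces the stated formulas, with $U_n$ collecting the three ``linear'' driving terms ($-\eta_j v_o$, $-\eta_j\xi_j$, $\delta_j\zeta_j$) and $r_n$ collecting the Taylor remainder.

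For the $L^2$ bound on $U_n$, I would split $U_n$ into its drift, Gaussian, and martingale pieces and apply $\|a+b+c\|^2\le 3(\|a\|^2+\|b\|^2+\|c\|^2)$. The drift $-\sum\eta_j A_{j:n}v_o$ is bounded deterministically by Lemma~\ref{lem:Hbounds}(b), yielding $\eta_{o+1:n}^2\exp(2\lambda_H\eta_{o+1:n})\|v_o\|^2$. The Gaussian piece satisfies $\E\|\sum\delta_j A_{j:n}\zeta_j\|^2 = \sum\delta_j^2\|A_{j:n}\|_F^2 \le d\delta_0^2\sum\eta_j\|A_{j:n}\|^2$ by independence of the $\zeta_j$'s; the martingale piece satisfies $\E\|\sum\eta_j A_{j:n}\xi_j\|^2 = \sum\eta_j^2\E\|A_{j:n}\xi_j\|^2\le\eta_o B_2\sum\eta_j\|A_{j:n}\|^2$ by the martingale-difference property and Assumption~\ref{aspt:critical_noise}. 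A direct variant of Lemma~\ref{lem:Hbounds}(b) bounds $\sum\eta_j\|A_{j:n}\|^2 \le \eta_{o+1:n}\exp(2\lambda_H\eta_{o+1:n})$, and using $\eta_{o+1:n}\ge 1$ to absorb constants into the exponential factor gives \eqref{tmp:Un}.

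The $L^4$ bound on $U_n$ is the chief technical step. After $(a+b+c)^4\le 27(a^4+b^4+c^4)$, the drift piece follows from raising the $L^\infty$ bound to the fourth power. For the Gaussian piece, $\sum\delta_j A_{j:n}\zeta_j$ is a centered Gaussian vector with covariance $\Sigma = \sum\delta_j^2 A_{j:n}A_{j:n}^T$, so the standard identity $\E\|Z\|^4 = (\mathrm{tr}\,\Sigma)^2 + 2\|\Sigma\|_F^2 \le 3(\mathrm{tr}\,\Sigma)^2$ reduces it to the square of the $L^2$ computation, producing a $d^2\delta_0^4$-scale factor. For the martingale piece, I would expand $\E\|\sum\eta_j A_{j:n}\xi_j\|^4$ using the tower property: diagonal terms are bounded by $\E\|\xi\|^4\le B_2^2$ times $\eta_j^4\|A_{j:n}\|^4$, and off-diagonal terms vanish or reduce by Cauchy--Schwarz to $(\sum\eta_j^2\|A_{j:n}\|^2B_2)^2\le\eta_o^2B_2^2(\eta_{o+1:n})^2\exp(4\lambda_H\eta_{o+1:n})$. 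Assembling the three pieces yields the claimed $L^4$ bound; the main obstacle here is avoiding a spurious factor of $d$ in the Gaussian piece, which the $(\mathrm{tr}\,\Sigma)^2$ identity resolves.

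Finally, for the remainder on $\mathcal{A}=\{\tau_b>n\}$, the bound $\|\Delta X_j\|\le b$ on $\mathcal{A}$ gives $\|R_1(\Delta X_j)\|\le C_3\|\Delta X_j\|^2$ pointwise. Apply Cauchy--Schwarz with weights $\eta_j\|A_{j:n}\|$:
\[
\|r_n\|^2\mathbf{1}_{\mathcal{A}} \;\le\; \Bigl(\sum_{j=o+1}^n\eta_j\|A_{j:n}\|\Bigr)\Bigl(\sum_{j=o+1}^n\eta_j\|A_{j:n}\|\,\|R_1(\Delta X_j)\|^2\mathbf{1}_\mathcal{A}\Bigr) \;\le\; C_3^2\,\eta_{o+1:n}e^{\lambda_H\eta_{o+1:n}}\sum_{j=o+1}^n\eta_j\|A_{j:n}\|\,\|\Delta X_j\|^4\mathbf{1}_\mathcal{A},
\]
using Lemma~\ref{lem:Hbounds}(b) for the first factor. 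Taking conditional expectation, bounding $\E_o[\|\Delta X_j\|^4\mathbf{1}_\mathcal{A}]\le\E_o\|\Delta X_{j\wedge\tau_b}\|^4$ by Lemma~\ref{lem:Xdev}, and carrying out the final sum against $\eta_j\|A_{j:n}\|$ again by Lemma~\ref{lem:Hbounds}(b) produces the stated $(6+12q)\lambda_H\eta_{o+1:n}$ exponent, since the $(4+12q)$ exponent from Lemma~\ref{lem:Xdev} combines additively with the two factors of $e^{\lambda_H\eta_{o+1:n}}$ from the two Cauchy--Schwarz applications. The principal bookkeeping challenge throughout is keeping these exponential factors aligned to match the precise constants stated.
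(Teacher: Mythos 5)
Your overall strategy --- unroll the recursion around $X_o$ into a drift term, a Gaussian sum, a martingale sum, and a Taylor remainder; bound the linear part in $L^2$ and $L^4$ via the matrix-product estimates of Lemma~\ref{lem:Hbounds}(a),(b); and control the remainder with two passes of Cauchy--Schwarz together with the a-priori fourth-moment bound of Lemma~\ref{lem:Xdev} --- is exactly the paper's.

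There is, however, one step that does not close as written. For the $L^2$ bound on $U_n$ you invoke the H\"older split $\|a+b+c\|^2 \le 3(\|a\|^2+\|b\|^2+\|c\|^2)$, which introduces a multiplicative factor of $3$ that the target bound \eqref{tmp:Un} does not contain, and this factor cannot be ``absorbed into the exponential'' --- the assumption $\eta_{o+1:n}\ge 1$ helps you trade a power $\eta_{o+1:n}$ for $\eta_{o+1:n}^2$, but $3\exp(2\lambda_H\eta_{o+1:n})\not\le \exp(2\lambda_H\eta_{o+1:n})$. The paper avoids the loss by using orthogonality rather than H\"older at this point: the drift $-\sum_j\eta_j A_{j:n}v_o$ is $\mathcal{F}_o$-measurable, the $\zeta_j$ are centered and independent, and the $\xi_j$ are martingale differences with $\E_{j-1}\xi_j=0$, so under $\E_o$ all cross terms vanish and one gets the \emph{equality} $\E_o\|U_n\|^2 = \|\text{drift}\|^2 + \E_o\|\text{Gaussian}\|^2 + \E_o\|\text{martingale}\|^2$ with no constant inflation. (By contrast, the H\"older factor of $27$ is genuinely used and visible in the constant $162$ of the $L^4$ bound, where the cross terms do not cancel.) Replacing the $3(\cdot)$ split by this cancellation fixes the gap.

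The remaining two bounds are sound and match the paper's route. For the $L^4$ Gaussian piece, your use of the exact Gaussian identity $\E\|Z\|^4 = (\operatorname{tr}\Sigma)^2 + 2\|\Sigma\|_F^2 \le 3(\operatorname{tr}\Sigma)^2$ is a clean substitute for the paper's more general inequality $\E\|\sum_i x_i\|^4 \le 2\sum_{i,j}\E\|x_i\|^2\|x_j\|^2$ (which the paper reuses for the $\xi_j$-piece as well); both produce the $d^2\delta_0^4$ scaling with a comparable absolute constant. Your treatment of $r_n$ --- Cauchy--Schwarz with the weights $\eta_j\|A_{j:n}\|$, then $\|R_1(\Delta X_j)\|\le C_3\|\Delta X_j\|^2$ on $\mathcal{A}$, then Lemma~\ref{lem:Xdev}, then a second pass through $\sum_j\eta_j\|A_{j:n}\|$ by Lemma~\ref{lem:Hbounds}(b) --- is precisely the paper's argument, and your exponent bookkeeping $(4+12q)+1+1 = 6+12q$ is correct.
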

\begin{proof}
Since $\zeta_j$ are i.i.d. noises, it is easy to obtain an upper bound for  $U_n$ by Lemma \ref{lem:Hbounds}  a) and b),
\begin{align}
\notag
 \E_o \|U_n\|^2 &= \|(I-A_{0:n}) H^{-1} v_o\|^2+\delta_0^2 \sum_{j=o+1}^n \eta_j \text{tr}(A^2_{j:n})+ \sum_{j=o+1}^n \eta^2_j \text{tr}(A^2_{j:n} \E_o \xi_j \xi_j^T)\\
&\leq \|(I-A_{0:n}) H^{-1} v_o\|^2+d\delta_0^2 \sum_{j=o+1}^n \eta_j \|A_{j:n}\|^2+ \sum_{j=o+1}^n \eta_j\eta_o \|A_{j:n}\|^2 \text{tr}(\E_o \xi_j \xi_j^T)\\
&\leq \eta_{o+1:n}^2 \exp(2\lambda_H\eta_{o+1:n})\|v_o\|^2+(d\delta_0^2 +\eta_o B_2)\eta_{o+1:n}\exp(2\lambda_H\eta_{o+1:n}) \\
&\leq \eta_{o+1:n}^2 \exp(2\lambda_H\eta_{o+1:n})(\|v_o\|^2+d\delta_0^2+\eta_o B_2). 
\label{tmp:Un}
\end{align}
The 4th moment can be bounded first by Holder's inequality, 
\begin{align}
\notag
\E_o \|U_n\|^4 &\leq 27 \|(I-A_{0:n}) H^{-1} v_o\|^4+27\E_o \left\|\sum_{j=o+1}^n  \delta_j A_{j:n}\zeta_j\right\|^4+
27\E_o \left\|\sum_{j=o+1}^n  \eta_j A_{j:n}\xi_j\right\|^4.
\end{align}
By Lemma \ref{lem:Hbounds}  a) and b),
\[
\|(I-A_{0:n}) H^{-1} v_o\|^4\leq (\eta_{o+1:n})^4\exp(4\lambda_H\eta_{o+1:n})\|v_o\|^4. 
\]
Note that if $x_i$ are sequences of mean zero random vectors,
\[
\E \|\sum x_i \|^4= \sum_{i,j,k,l} \E \langle x_i, x_j\rangle\langle x_k, x_l\rangle \leq \sum_{i,j}\E \langle x_i, x_j\rangle^2+
\| x_i\|^2\| x_j\|^2\leq 2\sum_{i,j} \E\|x_i\|^2\|x_j\|^2
\]
Therefore
\begin{align*}
\E_o \left\|\sum_{j=o+1}^n  \delta_j A_{j:n}\zeta_j\right\|^4\leq  2\sum_{i,j=o+1}^n \delta^2_i\delta_j^2 \E_o \|A_{i:n} \zeta_i\|^2\|A_{j:n} \zeta_j\|^2&\leq 2\sum_{i,j=o+1}^n \delta^2_i\delta_j^2 \exp(4\lambda_H \eta_{o+1:n}) (d^2+2d)\\
&=6(\eta_{o+1:n})^2 \delta_o^4\exp(4\lambda_H \eta_{o+1:n})  d^2. 
\end{align*}
Likewise,
\begin{align*}
\E_o \left\|\sum_{j=o+1}^n  \eta_j A_{j:n}\xi_j\right\|^4&\leq 2 \sum_{i,j=o+1}^n \eta_i\eta_j\eta_o^2 \E_o \|A_{i:n} \xi_i\|^2\|A_{j:n} \xi_j\|^2\\
&\leq 2\eta_o^2 (\eta_{o+1:n})^2 \exp(4\lambda_H \eta_{o+1:n}) B_2^2. 
\end{align*}
In conclusion,
\[
\E_o \|U_n\|^4\leq 162(\eta_{o+1:n})^2 \exp(4\lambda_H\eta_{o+1:n})((\eta_{o+1:n})^2\|v_o\|^4+d^2\delta_0^4+\eta^2_o B_2^2)
\]
By Cauchy Schwartz, the estimates in Lemma \ref{lem:Xdev}, and finally Lemma \ref{lem:Hbounds} b), 
\begin{align}
\notag
\E_o \|r_n\|^21_{\mathcal{A}} &=  \E_o \left\|\sum_{j=o+1}^n A_{j:n}\eta_j R_1(\Delta X_j)1_{\mathcal{A}}\right\|^2\\
\notag
& \leq \left(\sum_{j=o+1}^n\eta_j\|A_{j:n}\|\right) \left(\sum_{j=o+1}^n\eta_j\|A_{j:n}\|  \E_o 1_{\mathcal{A}}\|R_1(\Delta X_j)\|^2\right)\\
\notag
&\leq  2\left(\sum_{j=1}^n\eta_j\|A_{j:n}\|\right)^2 \max_{j\leq n} C_3^2 \E_o1_{\mathcal{A}} \|\Delta X_j\|^4\\
\label{tmp:rn}
&\leq  500 q^{-7}(\eta_{o+1:n})^3 \lambda_H^{-1}C_3^2\exp((6+12q)\eta_{o+1:n} \lambda_H) (\lambda_H^{-2} \|v_o\|^4+\delta_0^4 d^2+\eta^2_o B^2_2).
\end{align}
Here we used that $\eta_o^2B_2^2\leq \eta_o B_2$. 
\end{proof}

\subsection{Proof of Lemma~\ref{lem:escape}}
With these lemmas at hand, we are ready to prove Lemma~\ref{lem:escape}.
\begin{proof}[Proof of Lemma \ref{lem:escape}]
Let $\mathcal{A}$ be the event that $\tau_b>n$, i.e. $\|X_k-X_o\|\leq b$ for all $o\leq k\leq n$.  We will decompose the desired quantity into two parts:
\begin{equation}
\label{tmp:1stdec}
\E_o F(X_{n\wedge \tau_b})-F(X_o)=\E_o (F(X_{n\wedge \tau_b})-F(X_o))1_{\mathcal{A}^c}+\E_o (F(X_{n})-F(X_o))1_{\mathcal{A}}.
\end{equation}
Let's bound the first term. We apply a 2nd order Taylor expansion of $F(x)$ near $X_o$, we find for some point $z$, the following holds
\[
|F(x)-F(X_o)|=\left| v_o^T(x-X_o)+ (x-X_o)^T\nabla^2 F(z) (x-X_o)\right|\leq C_3\|x-X_o\|^2+\|v_o\|\|x-X_o\|. 
\]
Therefore by Lemma \ref{lem:Xdev}, for a constant $C_6$,
\begin{align}
\notag
\E_o  &|F(X_{\tau_b\wedge n})-F(X_o)|1_{\mathcal{A}^c}\leq \sqrt{\E_o  |F(X_{\tau_b\wedge n})-F(X_o)|^2 \Prob_o(\mathcal{A}^c) }\\
\notag
&\leq \sqrt{2(\E_o C_3^2 \|X_{\tau_b\wedge n}-X_o\|^4+\|v_o\|^2\|X_{\tau_b\wedge n}-X_o\|^2)}  \sqrt{\Prob_o(\mathcal{A}^c) }\\
\notag
&\leq \exp((3+6q)\eta_{o+1:n}\lambda_H)\sqrt{\Prob_o(\mathcal{A}^c)}\\
\notag
&\quad \quad\cdot\bigg(17q^{-1.5}(\eta_{o+1:n})^{1.5} C_3\lambda_H^{-1/2} (\lambda_H^{-1}\|v_o\|^2+ \delta_0^2 d+\eta_o B_2)\\
&\quad\quad\quad+3\|v_o\|\sqrt{q^{-1}\eta_{o+1:n}} \lambda_H^{-1/4}\sqrt{\lambda_H^{-1} \|v_o\|^2+ \delta_o^2 d+\eta_o B_2}\bigg)\\
\label{tmp:fAc}
&\leq \eta_{o+1:n}C_6\exp((3+6q)\eta_{o+1:n}\lambda_H)\sqrt{\Prob_o(\mathcal{A}^c)}\delta_0^2 d.
\end{align}
In the last step above, we use the following comes from  by our parameter setting, 
\begin{equation}
\label{tmp:alltodelta}
\eta_oB_2\leq \delta_0^2,\quad \|v_o\|\leq \delta_0\sqrt{\min\{d\lambda_H, d, d\eta_{o+1:n}^{-1}\}}. 
\end{equation}

Next, we bound $\E_o (F(X_{n})-F(X_o))1_{\mathcal{A}}$.

For the $\tau_b\geq n$ case,  we employ the Taylor expansion of $F(x)$ near $X_o$:
\begin{align*}
F(X_{n})-F(X_o)&=v_o^T\Delta X_{n}+ \Delta X_{n}^TH \Delta X_{n}+R_0(\Delta X_{n})\\
&=(v_o^T U_n+U_n^T H U_n)+ v_o^T r_n +2r_n^T H U_n+r_n^T H r_n +R_0(\Delta X_n).
\end{align*}
This leads to the following bound
\begin{align}
\notag
\E_o (F(X_{n})-F(X_o) )1_{\mathcal{A}}\leq &\E_o(v_o^T U_n+U_n^T H U_n)+\E_o \left(v_o^T r_n +2r_n^T H U_n+r_n^T H r_n +R_0(\Delta X_n)\right) 1_{\mathcal{A}}\\
&+\E_o|v_o^T U_n+U_n^T H U_n|1_{\mathcal{A}^c}\label{tmp:2nddec}
\end{align}

Note that $A_{j-1:n}+\eta_jH A_{j:n}=A_{j:n}$, so 
\[
\E v_o^T U_n =-\sum_{j=1}^n \eta_j v_o^T A_{j:n} v_o=-v_o^T  (I-A_{0:n})H^{-1}v_o. 
\]
And by independence, we obtain $\E_o U_n^T H U_n$
\begin{align*}
\E_o U_n^T H U_n &=v_o^T\left(\sum_{j=o+1}^n A_{j:n} \eta_j\right) H\left(\sum_{j=o+1}^n A_{j:n} \eta_j\right)  v_o+\sum_{j=o+1}^n \delta_j^2 \text{tr} A_{j:n} H A_{j:n}+\sum_{j=o+1}^n \eta_j^2 \text{tr} A_{j:n} H A_{j:n} \E_o \xi_j \xi_j^T \\
&=v_o^T (I-A_{o:n})H^{-1} (I-A_{o:n}) v_o+\delta_0^2 \sum_{j=o+1}^n \eta_j \text{tr} A_{j:n} H A_{j:n}+\sum_{j=o+1}^n \eta_j^2 \text{tr} A_{j:n} H A_{j:n} \E_o \xi_j \xi_j^T .
\end{align*}
So by Lemma \ref{lem:Hbounds} c) and e), and that $\text{tr}(CB)\leq \lambda_{max}(C)\text{tr}(B)$ for all symmetric $C$ and PSD matrix $B$, we find that 
\begin{align*}
\E_o v_o^T U_n+U_n^T H U_n&=-v_o^T (I-A_{0:n})H^{-1} A_{0:n} v_o+\delta_0^2 \sum_{j=o+1}^n \eta_{j} \text{tr} A_{j:n} H A_{j:n}+\sum_{j=o+1}^n \eta_j^2 \text{tr} A_{j:n} H A_{j:n} \E_o \xi_j \xi_j^T\\
&\leq \sum_{j=o+1}^n \delta_0^2 \eta_{j} \text{tr} A_{j:n} H A_{j:n}+ B_2\eta_j^2 \lambda_{\max} (A_{j:n} H A_{j:n}) \\
&\leq -4\delta_0^2 \eta_{o+1:n}.
\end{align*}
Plug in estimates from Lemma \ref{lem:Urbound}, note that $\eta_{o+1:n}^2\leq (\eta_{o+1:n})^2$, 
\begin{align}
\notag
\E_o (v_o^T U_n+U_n^T H U_n)1_{\mathcal{A}^c}&\leq\|v_o\| \sqrt{\Prob_o(\mathcal{A}^c)\E_o \|U_n\|^2}+\|H\|\sqrt{\Prob_o(\mathcal{A}^c)\E_o \|U_n\|^4}\\
\notag
&\leq \eta_{o+1:n}\exp(\lambda_H\eta_{o+1:n})\sqrt{\Prob_o(\mathcal{A}^c)}d\delta_0^2+8C_3\eta_{o+1:n} \exp(2\lambda_H\eta_{o+1:n})d\delta_0^2\sqrt{\Prob_o(\mathcal{A}^c)}\\
&\leq \eta_{o+1:n}C_6 \exp(2\lambda_H\eta_{o+1:n})d\delta_0^2\sqrt{\Prob_o(\mathcal{A}^c)}. 
\label{tmp:voun}
\end{align}
Recall that under our conditions,
\[
\max\{\|v_o\|^2, \eta_{o+1:n}\|v_o\|^2, \lambda_H^{-1}\|v_o\|^2,\eta_o B_2\}\leq \delta_0^2 d. 
\]
So by Young's inequality, and Lemma \ref{lem:Urbound},  we can increase $C_6$  so that the following hold
\begin{align*}
\E_o v_o^T r_n 1_\mathcal{A}&\leq \|v_o\|\sqrt{ \E_o \|r_n\|^21_\mathcal{A}}\leq C_6 (\eta_{o+1:n})^{1.5} \exp((3+6q)\eta_{o+1:n} \lambda_H) \delta_0^3 d^{3/2}
\end{align*}
\[
2\E_o1_\mathcal{A} r_n^T H U_n\leq 2 \sqrt{\E_o 1_\mathcal{A}\|r_n\|^2 \E_o 1_\mathcal{A}\|HU_n\|^2}\leq \lambda_H^{-0.5}
C^2_6 (\eta_{o+1:n})^2\exp((6+12q)\eta_{o+1:n} \lambda_H) \delta_0^3 d^\frac32,\\
\]
\[
\E_o 1_\mathcal{A} r_n^T H r_n\leq \E_o 1_\mathcal{A} C_3\|r_n\|^2\leq C^3_6 \lambda_H^{-1}  (\eta_{o+1:n} )^3\exp((6+12q)\eta_{o+1:n} \lambda_H)\delta_0^4 d^2.
\]
Because of Young's inequality and Lemma \ref{lem:Xdev}
\[
\E_o 1_\mathcal{A}  R_0(\Delta X_n)\leq C_3 \E_o1_\mathcal{A}  \|\Delta X_n\|^3\leq 
C_3 [\E_o1_\mathcal{A}  \|\Delta X_n\|^4]^\frac{3}{4}\leq
 C_6\lambda_H^{-\tfrac{3}{4}}  (\eta_{o+1:n})^3 \exp((9+18q)\eta_{o+1:n} \lambda_H)\delta_0^3 d^\frac32. 
\]
Under our assumptions, $\delta_0 \sqrt{d}\leq 1$ so 
\begin{equation}
\label{tmp:3rdorder}
\E_o1_\mathcal{A} (v_o^T r_n +2r_n^T H U_n+r_n^T H r_n+ R_0(\Delta X_n))\leq \eta_{o+1:n}\exp((9+18q)\eta_{o+1:n} \lambda_H) (C_6^3\delta_0^3 d^\frac32+C_6^5\delta_0^4 d^2).
\end{equation}

Put these estimates back in \eqref{tmp:2nddec}, and eventually in \eqref{tmp:1stdec}, we find
\begin{align}
\E_o F(X_{n\wedge \tau_b})-F(X_o)&\leq -4\delta_0^2 \eta_{o+1:n} \tag{$\E_o v_o^T U_n+U_n^T H U_n$}\\
&\quad +\eta_{o+1:n}\exp((9+18q)\eta_{o+1:n} \lambda_H) (C_6^3\delta_0^3 d^\frac32+C_6^5\delta_0^4 d^2)\tag{from \eqref{tmp:3rdorder}}\\
&\quad+\eta_{o+1:n}C^2_6 \exp(2\lambda_H\eta_{o+1:n})\sqrt{\Prob_o(\mathcal{A}^c)}d\delta_0^2\tag{from \eqref{tmp:voun}}\\
&\quad+ \eta_{o+1:n}C^2_6\exp((3+6q)\eta_{o+1:n}\lambda_H)\sqrt{\Prob_o(\mathcal{A}^c)}\delta_0^2 d\tag{from \eqref{tmp:fAc}}
\end{align}
We also need to plug in the bound for $\sqrt{\Prob_o(\mathcal{A}^c)}$ from Lemma \ref{lem:Xdev}, which is simplified by \eqref{tmp:alltodelta}:
\[
\sqrt{\Prob_o(\tau_b\leq n)}\leq C_6^{5}\exp((2+6q)\eta_{o+1:n} \lambda_H) \delta_0^2d.
\]
In conclusion, we have
\begin{align*}
\frac{\E_o F(X_{n\wedge \tau_b})-F(X_o)}{\eta_{o+1:n}}&\leq -4\delta_0^2+C_6^3\exp((9+18q)\eta_{o+1:n} \lambda_H) \delta_0^3 d^\frac32+C^7_6\exp((5+12q)\eta_{o+1:n} \lambda_H)\delta_0^4 d^2. 
\end{align*}
Under our conditions, 
\[
 1\geq C_6^3\exp((9+18q)\eta_{o+1:n} \lambda_H)\delta_0  d^\frac32,\quad 1\geq C_6^7\exp((5+12q)\eta_{o+1:n} \lambda_H)\delta^2_0  d^2,
\]
This leads to our final claim. 
\end{proof}

\subsection{Proof of Theorem~\ref{thm:second}}
To conclude this section, we give the proof of Theorem~\ref{thm:second} below.
\begin{proof}[Proof of Theorem~\ref{thm:second}]
Consider the following sequence of stopping times with $\tau_0=0$,
\[
\tau_{k+1}=\begin{cases}
\tau_k &\text{if }\tau_{sosp}\leq \tau_k;\\
\tau_k+1 &\text{if }\|\nabla F(X_{\tau_k})\|\geq 2\epsilon_0,\tau_{sosp}>\tau_k;\\
\min\{t: \eta_{\tau_k+1:t}\geq 2D_k,\text{or}\,\, \|X_t-X_{\tau_k}\|\geq C_3/2\lambda_H\} &\text{otherwise}. 
\end{cases}
\]
Here $D_k=\frac{2}{\lambda_H}\log \frac{16D_4+40}{\lambda_H}$ with $\lambda_H=\lambda_{\max}(-\nabla^2 F(X_{\tau_k}))$. 

 If $\tau_{sosp}\leq \tau_k$, then the following holds trivially
\[
\E_{\tau_k} F(X_{\tau_{k+1}})= F(X_{\tau_k})-\epsilon_0^2 \eta_{\tau_{k}+1:\tau_{k+1}},\quad a.s..
\]
By \eqref{tmp:1step} and strong Markov property, if $\|\nabla F(X_{\tau_k})\|\geq 2\epsilon_0$, 
\[
\E_{\tau_k} F(X_{\tau_{k+1}})\leq F(X_{\tau_k})-\epsilon_0^2 \eta_{\tau_{k}+1}=F(X_{\tau_k})-\epsilon_0^2 \eta_{\tau_{k}+1:\tau_{k+1}},\quad a.s..
\]
If $\|\nabla F(X_{\tau_k})\|\leq 2\epsilon_0$, $\tau_{sosp}>\tau_k$, then $\lambda_{\max}(-\nabla^2F(X_{\tau_k}))>\lambda_\epsilon$.
By Lemma \ref{lem:escape} and strong Markov inequality, we also obtain the same inequality that
\[
\E_{\tau_k} F(X_{\tau_{k+1}})\leq F(X_{\tau_k})-\epsilon_0^2 \eta_{\tau_{k}+1:\tau_{k+1}},\quad a.s..
\]
Next we add in the requirement that time horizon is before $N$. 
Note that 
 \[
\unit_{\tau_{k+1}\geq N}F(X_{\tau_{k+1}\wedge N})- \unit_{\tau_{k}\leq N}F(X_{\tau_k\wedge N})\leq  \unit_{\tau_{k}\leq N\leq \tau_{k+1}} C_0,\quad \epsilon_0^2 Q\geq \epsilon_0^2\eta_{\tau_k+1:\tau_{k+1}\wedge N},\quad a.s..
 \]
 we have
 \[
 \E_{\tau_k} F(X_{\tau_{k+1}\wedge N})- F(X_{\tau_k\wedge N})\leq   \E_{\tau_k}\unit_{\tau_{k+1}\geq N}(C_0 +\epsilon_0^2 Q)-\epsilon_0^2\eta_{\tau_k+1:\tau_{k+1}\wedge N},\quad a.s..
 \]
Also from our previous derivations, 
\begin{align*}
&\E_{\tau_k} \unit_{\tau_{k+1}\leq N} F(X_{\tau_{k+1}})- \unit_{\tau_{k}\leq N}  F(X_{\tau_k})\\
&=\unit_{\tau_{k}\leq N}\E_{\tau_k}( F(X_{\tau_{k+1}})-  F(X_{\tau_k}))-\unit_{\tau_{k}\leq N\leq \tau_{k+1}}  F(X_{\tau_{k+1}})\\
&\leq -\unit_{\tau_{k}\leq N}\epsilon_0^2\E_{\tau_k} \eta_{\tau_{k}+1:\tau_{k+1}}-\E_{\tau_k}\unit_{\tau_{k}\leq N\leq \tau_{k+1}}  F(X_{\tau_{k+1}}),\quad a.s..
\end{align*}
Let $K$ be the first $k$ such that $\tau_k\geq N$ or $\tau_k\geq \tau_{sosp}$. Then $\tau_{sosp}\wedge N\leq \tau_K\wedge N$. Summing the previous inequalities for all $k\leq K$, and take total expectation,
\[
\E \unit_{\tau_K\leq N}F(X_{\tau_{K}\wedge N})-F(X_0)\leq  \E -\epsilon_0^2 \eta_{1:\tau_{K-1}}-\unit_{N\leq \tau_K}F(X_{\tau_K}).
\]
So we find that 
\[
\E \eta_{1:\tau_{K-1}}\leq \frac{1}{\epsilon_0^2} \E (F(X_{\tau_K\wedge N})-F(X_0))\leq \frac{2C_0}{\epsilon_0^2}. 
\]
Then note that for any $k$ $\eta_{\tau_{k-1}:k}\leq 2Q$, so 
\[
\E \eta_{1:\tau_{K}}\leq 2Q+\frac{2C_0}{\epsilon_0^2}. 
\]
So by Markov inequality 
\[
\Prob(\tau_{sosp}>N )\leq \frac{\E \eta_{1:\tau_K}}{\eta_{1:N}}\leq \frac{2Q+2C_0\epsilon_0^{-2}}{\eta_{1:N}}. 
\]

\end{proof}

	\section{Technical Proofs of ergodicity }
\label{sec:proof_ergodic}

\subsection{Time points and recurrence}
Since our SGLD is a time-inhomogeneous Markov chain, the first step is to design a sequence of time points $n_i$ so that $\{X_{n_i},i\geq 0\}$  is approximately a time-homogeneuous Markov chain. This is done by Lemma \ref{lem:stoppingtimes}, where $\delta$ will be chosen as $\epsilon_0$ when applied to Theorem \ref{thm:ergodic}. 

Once we have established these time points, we wish to show $X_{n_i}$ visits a compact set infinite many times. The compact set is chosen as a sub-level set of $F$. 

\begin{lem}
\label{lem:stoppingtimes}
Under Assumption \ref{aspt:coercive}, we have the following iteration indices:
\begin{enumerate}[a)]
\item Let $n_0$ be an index such that $\eta_n\leq \delta$ when $n\geq n_0$.  Let $n_k$ be sequence of iteration index $n_{k+1}=\inf\{s>n_{k}+1: \eta_{n_k+1:s}\geq \delta\}$, then $\eta_{n_{k}+1:n_{k+1}}\leq 2\delta$. 
\item We have the following Lyapunov-type inequality
\begin{equation}
\label{eqn:Lyapunov}
\E_{n_{k}} F(X_{n_{k+1}})\leq  \exp(-\tfrac {c_7}2\delta ) F(X_{n_k})+\delta \left( D_7+6\delta B_1+6d\delta_{0}^2\right). 
\end{equation}
\item If we let \[
K:=\left\lceil\frac{2\log \frac12 M_V /\E F(X_{n_0})}{c_7 \delta} \right\rceil,
\]
then
\[
\E F(X_{n_K})\leq M_V:=\frac{8}{c_7}\left( D_7+6\delta B_1+6d\delta_{0}^2\right).
\]
\item Define a sequence of stopping times with $\tau_0=K$,
\[ 
 \tau_{k+1}=\inf\{t: t\geq \tau_k +1, F(X_{n_{t}})\leq M_V\}, \quad k \ge 1;
\]
Then 
\[
\E \tau_j\leq K+j+\frac{8j}{c_7\delta}, \quad j\ge 1.
\]
\end{enumerate}
\end{lem}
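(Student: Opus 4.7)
Parts (a)--(c) are essentially deterministic manipulations built on top of the single-step bound already established for Theorem~\ref{thm:first}. Part (a) is immediate from the minimality in the definition of $n_{k+1}$: the partial sum $\eta_{n_k+1:n_{k+1}-1}$ is strictly less than $\delta$, and since $n_{k+1}>n_0$ the single step $\eta_{n_{k+1}}\le\delta$, so $\eta_{n_k+1:n_{k+1}}\le 2\delta$.

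For part (b), the plan is to take inequality~\eqref{tmp:1step},
\[
\E_n F(X_{n+1})\le F(X_n)-\tfrac12\eta_{n+1}\|\nabla F(X_n)\|^2+3C_2\eta_{n+1}^2B_1+3C_2 d\delta_{n+1}^2,
\]
replace $\|\nabla F(X_n)\|^2$ by the lower bound $c_7F(X_n)-D_7$ from Assumption~\ref{aspt:coercive}, and apply $1-x\le e^{-x}$ to rewrite the result as
\[
\E_n F(X_{n+1})\le \exp(-c_7\eta_{n+1}/2)\,F(X_n)+\eta_{n+1}\bigl(\tfrac{D_7}{2}+3C_2\eta_{n+1}B_1+3C_2d\delta_0^2\bigr).
\]
Chaining this recursion by the tower property from $n_k$ through $n_{k+1}-1$, the multiplicative factor telescopes to $\exp(-c_7\eta_{n_k+1:n_{k+1}}/2)\le\exp(-c_7\delta/2)$ by part~(a), while the additive corrections sum (using $\eta_{n_k+1:n_{k+1}}\le 2\delta$ and $\eta_j\le\delta$) to $\delta(D_7+6\delta B_1+6d\delta_0^2)$.

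Part (c) simply iterates the scalar recursion from (b). Writing $a_k=\E F(X_{n_k})$ and $\beta=\delta c_7M_V/8$, one has $a_{k+1}\le e^{-c_7\delta/2}a_k+\beta$, so
\[
a_K\le e^{-c_7\delta K/2}a_0+\frac{\beta}{1-e^{-c_7\delta/2}}.
\]
Whenever $c_7\delta$ is moderate so that $1-e^{-c_7\delta/2}\ge c_7\delta/4$, the steady-state term is at most $4\beta/(c_7\delta)=M_V/2$, and the stated choice of $K$ forces the transient term $\le M_V/2$.

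Part (d) is the principal challenge and calls for a Lyapunov-drift optional-stopping argument. On the event $\{F(X_{n_t})>M_V\}$, combining (b) with $1-e^{-c_7\delta/2}\ge c_7\delta/4$ produces a strictly negative per-step drift
\[
\E\bigl[F(X_{n_{t+1}})-F(X_{n_t})\,\big|\,\mathcal{F}_{n_t}\bigr]\le -g,\qquad g:=c_7\delta M_V/8.
\]
Setting $T:=\tau_{k+1}-\tau_k-1$, the process $S_t:=F(X_{n_{\tau_k+1+(t\wedge T)}})+g(t\wedge T)$ is then a nonnegative supermartingale, and optional stopping (justified via the truncation $t\wedge N$ and monotone convergence as $N\to\infty$) gives $g\,\E[T\mid\mathcal{F}_{n_{\tau_k+1}}]\le F(X_{n_{\tau_k+1}})$. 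Taking total expectation and using one application of (b) starting from $F(X_{n_{\tau_k}})\le M_V$ yields $\E F(X_{n_{\tau_k+1}})\le M_V$, hence $\E T\le 8/(c_7\delta)$ and $\E[\tau_{k+1}-\tau_k]\le 1+8/(c_7\delta)$. Summing over the $j$ cycles and adding $\tau_0=K$ gives the stated bound. The technical subtlety to watch is that the drift inequality must hold pointwise on $\{F>M_V\}$ for $S_t$ to be a true supermartingale, and the optional-stopping step must be justified in the absence of an a priori upper bound on $T$; this is where the argument is most delicate.
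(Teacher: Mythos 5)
Your proposal is correct and follows essentially the same route as the paper: chain the one-step drift bound from the proof of Theorem~\ref{thm:first} with the coercivity condition and $1-x\le e^{-x}$ to get a geometric contraction per block (parts a,b), iterate the scalar recursion to reach the sub-level set $\{F\le M_V\}$ (part c), and then run an optional-stopping/supermartingale argument of the form $F(X_{n_t})+g\,t$ with strictly negative drift $-g$ outside $\{F\le M_V\}$, followed by the strong Markov property to restart the clock at each $\tau_k+1$ (part d). The only differences are cosmetic: you track the drift constant as $g$ while the paper uses $2B_V$, and you cite part~(a) for the bound $\exp(-c_7\eta_{n_k+1:n_{k+1}}/2)\le\exp(-c_7\delta/2)$ when this actually comes from the lower bound $\eta_{n_k+1:n_{k+1}}\ge\delta$ implicit in the definition of $n_{k+1}$ (part~(a) supplies the complementary upper bound $\le 2\delta$ used to control the additive error).
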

\begin{rem}
If we have further tail conditions of the stochastic perturbation, it is often possible to show $e^{\lambda F}$ also have Lyapunov-type inequality. This will provide an exponential tail bound for $\tau_j$, and lead to geometric convergence in Theorem \ref{thm:ergodic}. 
\end{rem}

\begin{proof}
For claim a), simply note the $\eta_n$ decrease to zero as $n\to \infty$. Moreover, by the definition of $n_k$
\[
\eta_{n_{k}+1:n_{k+1}}\leq \eta_{n_{k}+1:n_{k+1}-1}+\eta_{n_{k+1}}\leq 2\delta. 
\]
For claim b), recall the proof of Theorem 3.3, we have 
\[
\E_n F(X_{n+1})\leq F(X_n)-\frac12\eta_{n+1}\|\nabla F(X_n)\|^2+3\eta_{n+1}^2 B_1+3d\delta_{0}^2\eta_{n+1}.
\]
We require that $\|\nabla F(X)\|^2\geq c_7 F(X)-D_7$, so 
\begin{align*}
\E_n F(X_{n+1})&\leq (1-\tfrac {c_7} 2 \eta_{n+1}) F(X_n)+ \frac12 \eta_{n+1}D_7+3\eta_{n+1}^2 B_1+3d\delta_{0}^2\eta_{n+1}\\
&\leq \exp(-\tfrac {c_7} 2 \eta_{n+1}) F(X_n)+ \frac12 \eta_{n+1}D_7+3\eta_{n+1}^2 B_1+3d\delta_{0}^2\eta_{n+1}.
\end{align*}
By iterating this inequality $m$ times, we have
\begin{align*}
\E_n F(X_{n+m})&\leq \exp(-\tfrac {c_7}2 \eta_{n+1:n+m}) F(X_n)+\sum_{k=n+1}^{n+m}\frac12 \eta_{k}D_7+3\eta_{k}^2 B_1+3d\delta_{0}^2\eta_{k}\\
&\leq \exp(-\tfrac {c_7}2 \eta_{n+1:n+m}) F(X_n)+\eta_{n+1:n+m}\left(\frac12 D_7+3\eta_{n+1} B_1+3d\delta_{0}^2\right). 
\end{align*}
In particular, we have 
\[
\E_{n_k} F(X_{n_{k+1}})\leq \exp(-\tfrac {c_7}2\delta ) F(X_{n_k})+\delta\left( D_7+6\delta B_1+6d\delta_{0}^2\right). 
\]
For claim c), iterate \eqref{eqn:Lyapunov} $k$ times, we have for small enough $\delta$
\[
\E F(X_{n_k})\leq \exp(-\tfrac{c_7k}{2}\delta )\E F(X_{n_0})+\frac{4}{c_7}\left( D_7+6\delta B_1+6d\delta_{0}^2\right),
\]
and we can find
\[
K:=\left\lceil\frac{2\log \frac12 M_V /\E F(X_{n_0})}{c_7 \delta} \right\rceil,
\]
such that $\E F(X_{n_{K}})\leq M_V$. So if we increase $n_0$ to $n_K$ we have the claimed result.

For claim d), consider stopping time $\tau:=\inf\{k\geq K: F(X_{n_k}) \leq  M_V \}$, denote $V(k)=F(X_{n_k})$. Then from claim b) we have 
\[
\E_{n_k} V(k+1)\leq (1-\gamma)V(k)+B_V,\quad B_V:=\delta \left( D_7+6\delta B_1+6d\delta_{0}^2\right), \quad \gamma:=1-\exp(-\tfrac {c_7}{2}\delta). 
\]
We will verify that for any $t$
\begin{equation}
\label{eqn:supermart}
\E_{n_t}V(\tau \wedge (t+1))+ 2B_V\tau\wedge (t+1) \leq V(\tau\wedge t)+2B_V \tau\wedge t\quad a.s..
\end{equation}
To see this, note that if $\tau\leq t$, the inequality \eqref{eqn:supermart} trivially holds. And if $\tau\geq t+1$, 
\[
V(t)\leq M_V\leq \frac{4 B_V }{\gamma},
\]
 and it suffices to show
\[
\E_{n_t} V(t+1)+2B_V \leq V(t)\quad a.s..
\]
But this can be obtained by observing that $-\gamma  V(t)\geq -4B_V$,  so
\[
\E_{n_t} V(t+1)\leq V(t)- \gamma V(t) +2B_V\leq V(t)-2B_V\quad a.s..
\]
With \eqref{eqn:supermart} verified, we know that  $V(\tau\wedge t)+2B_V \tau\wedge t$ is a supermartingale, therefore by letting $t\to\infty$, we find that
\[
2B_V\E_{n_K}\tau \leq \E_{n_K} V(\tau)+2B_V\E_{n_K} \tau\leq V(K)+2B_VK\quad a.s..
\]
In particular, note that $\tau_{k+1}$ is essentially the next $\tau$ after $\tau_k+1$, so by strong Markov property,
\[
2B_V\E_{n_{\tau_k}} (\tau_{k+1}-\tau_k-1) \leq V(\tau_k+1) \quad a.s..
\]
Taking total expectation, we have
\[
2B_V\E \tau_{k+1}-\tau_k-1\leq \E V(\tau_k+1).
\]
Adding over all $k$, note that $\tau_0=K$, we have
\[
\E \tau_{j}\leq K+j+\frac1{B_V}\sum_{k=0}^j\E V(\tau_k+1).
\]
Also note that $V(\tau_k)\leq M_V$, so
\[
\E V(\tau_{k}+1)=\E \E_{n_{\tau_k}}V(\tau_{k+1})\leq \left(1-\gamma\right) \E V(\tau_k)+B_V\leq (1-\gamma)M_V+B_V\leq M_V.
\]
So $\E \tau_j\leq j+K+j\frac{M_V}{B_V}$.
\end{proof}
\subsection{Reachablity}
In Markov chain analysis, one crucial step showing ergodicity is verifying that when the Markov chain starts from a recurrent compact set, it has a positive chance of reach a target. In our context, we wish to show the SGLD can visit a target point $z_0$ with positive probability bounded from below. 
\begin{lem}
\label{lem:small}
Under Assumption \ref{aspt:coercive}, suppose $\|X_{n_i}\|\leq D_X, \|z_0\|\leq D_X$, then for any $\epsilon>0$
\[
\Prob\left(\|X_{n_{i+1}}-z_0\|\leq (8D_X+1)\epsilon+2\delta\sqrt{B_4} \right)>c_\alpha:=\frac14\Prob\left(\|Z-\tfrac{2D_X }{\delta_0\sqrt{\delta}}e_1\|\leq \tfrac{\epsilon}{\delta_0\sqrt{2\delta}}\right), 
\]
where $Z\sim N(0,I_d)$ and $e_1=[1,0,0,\ldots,0]$ is the first Euclidean basis vector. 
\end{lem}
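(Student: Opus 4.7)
The plan is to telescope the SGLD recursion \eqref{sys:SGLD} over the window $[n_i+1,n_{i+1}]$, split the resulting increment into a deterministic drift, a stochastic-gradient-noise term and an injected Gaussian-noise term, and then show that the Gaussian piece alone is rich enough to push $X_{n_{i+1}}$ into an $\epsilon$-neighborhood of $z_0$ with probability at least $c_\alpha$. Writing
\[
X_{n_{i+1}} = X_{n_i} - S_g - S_\xi + S_\zeta, \quad
S_g := \sum_{k=n_i+1}^{n_{i+1}} \eta_k\nabla F(X_{k-1}),\; S_\xi := \sum_k \eta_k\xi_k,\; S_\zeta := \delta_0\sum_k \sqrt{\eta_k}\,\zeta_k,
\]
Lemma \ref{lem:stoppingtimes}(a) gives $\eta_{n_i+1:n_{i+1}}\in[\delta,2\delta]$, so the injected noise $S_\zeta$ is, conditionally on $\mathcal F_{n_i}$ and on the realized $\xi$-path, distributed as $\mathcal N(0,\sigma^2 I_d)$ with $\sigma^2=\delta_0^2\eta_{n_i+1:n_{i+1}}\in[\delta_0^2\delta,2\delta_0^2\delta]$, and it is independent of $(S_g,S_\xi)$.

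First I would control $S_\xi$: Assumption \ref{aspt:critical_noise} and the martingale-difference property of $\xi_k$ give $\E\|S_\xi\|^2\le B_2\sum_k\eta_k^2\le 2\delta^2 B_4$ (collecting constants into $B_4$), so by Markov's inequality the event $\mathcal E_\xi:=\{\|S_\xi\|\le 2\delta\sqrt{B_4}\}$ has probability at least $1/2$. Second, I would mimic the second-moment recursion in Lemma \ref{lem:Xdev} --- with $\lambda_H$ replaced by the smoothness constant $C_2$ and the recursion iterated only over the short window $\eta_{n_i+1:n_{i+1}}\le 2\delta$ --- to obtain a further event $\mathcal E_X$ of probability at least $1/2$ on which every intermediate iterate $X_k$, $n_i<k\le n_{i+1}$, remains within a ball of radius $O(D_X)$ around $X_{n_i}$. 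On $\mathcal E_X$ the definition of $D_F$ yields $\|\nabla F(X_{k-1})\|\le D_F$ and hence $\|S_g\|\le 2\delta D_F$.

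Third, on $\mathcal E_\xi\cap\mathcal E_X$ the target vector $T:=z_0-X_{n_i}+S_g+S_\xi$ that $S_\zeta$ must approximate has norm at most $2D_X+2\delta D_F+2\delta\sqrt{B_4}$; under the hyper-parameter choice of Theorem \ref{thm:ergodic} the $S_g$ contribution is absorbed by the slack $(8D_X+1)\epsilon$ and the $S_\xi$ contribution by the explicit $2\delta\sqrt{B_4}$ in the statement, so it suffices to lower-bound the probability that $S_\zeta$ lies within $\epsilon$ of a fixed vector of norm at most $2D_X$. Using that the $\mathcal N(0,\sigma^2 I_d)$ density is radially non-increasing together with rotational symmetry of $Z\sim\mathcal N(0,I_d)$,
\[
\Prob\bigl(\|S_\zeta-T\|\le \epsilon \,\big|\, \mathcal F_{n_i}\bigr)
\ge \Prob\Bigl(\bigl\|Z-\tfrac{2D_X}{\delta_0\sqrt{\delta}}e_1\bigr\|\le \tfrac{\epsilon}{\delta_0\sqrt{2\delta}}\Bigr).
\]
Combining this lower bound with the two independent $1/2$ bounds for $\mathcal E_\xi$ and $\mathcal E_X$ produces the factor $1/4$ in the definition of $c_\alpha$.

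The main obstacle is justifying $\mathcal E_X$: because the injected Gaussian contributes a per-step variance of order $\delta_0^2\eta_k$ and there is no confining drift over this short window, $\|X_k-X_{n_i}\|$ is a priori only $O(\delta_0\sqrt{\delta d}+\sqrt{\delta}\,D_F+\delta\sqrt{B_2})$, and one has to verify that the parameter regime in Theorem \ref{thm:ergodic} keeps this below the allowed excursion $O(D_X)$. Once this uniform deviation bound for $\mathcal E_X$ is in place, the remainder of the argument is bookkeeping --- tracking how the various constants combine into the $(8D_X+1)\epsilon + 2\delta\sqrt{B_4}$ slack and into the factor $1/4$.
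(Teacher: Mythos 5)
Your decomposition $X_{n_{i+1}} = X_{n_i} - S_g - S_\xi + S_\zeta$ is the same starting point as the paper's proof, but your claim that ``$S_\zeta$ is, conditionally on $\mathcal F_{n_i}$ and on the realized $\xi$-path, \ldots\ independent of $(S_g,S_\xi)$'' is false, and this is a genuine gap, not bookkeeping. The drift sum $S_g=\sum_k \eta_k \nabla F(X_{k-1})$ is a function of the intermediate iterates $X_{n_i+1},\ldots,X_{n_{i+1}-1}$, each of which depends on the earlier injected Gaussians $\zeta_{n_i+1},\ldots,\zeta_{k-1}$ --- the very increments that make up $S_\zeta$. (The same is true of $S_\xi$, since $\xi_k=\nabla f(X_{k-1},\omega_k)-\nabla F(X_{k-1})$ is evaluated at a $\zeta$-dependent point.) Once the independence fails, the heart of your argument --- conditioning on $\mathcal E_\xi\cap\mathcal E_X$, treating the target $T=z_0-X_{n_i}+S_g+S_\xi$ as a fixed vector, and multiplying the three probabilities $\tfrac12\cdot\tfrac12\cdot\Prob(\|S_\zeta-T\|\le\epsilon)$ to reach $c_\alpha$ --- is unjustified: conditioning on $\mathcal E_X$ changes the law of $S_\zeta$, $T$ is not $\mathcal F_{n_i}$-measurable, and the three events are mutually dependent through the shared $\zeta$'s.

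The paper's proof is structured precisely to avoid this. It aims the \emph{pure} Gaussian walk $Z_k=\sum_{j\le k}\delta_0\sqrt{\eta_{o+j}}\zeta_{o+j}$ at the \emph{fixed} target $z=z_0-X_o$ (fixed given $\mathcal F_o$), packages the terminal constraint $\{\|Z_m-z\|\le\epsilon\}$ together with a uniform path constraint $\{\|Z_k\|\le\|z\|+\epsilon+2,\ \forall k\}$ into one event, and lower-bounds its probability via an explicit Brownian-bridge representation (Lemma~\ref{lem:Brownian}(a)) --- this is what delivers the $\tfrac34$. The martingale part $Y_k$ is then controlled \emph{conditionally on the whole $Z$-path} by an optional-stopping / supermartingale argument (Lemma~\ref{lem:Brownian}(b), giving $\tfrac12$), which is legitimate because the statement is a conditional probability given $\mathcal F_Z$. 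Finally, on the event $\mathcal A$ the boundedness of all intermediate iterates $\|X_{o+j}\|\le 4D_X$, and hence $\|\nabla F(X_{o+j})\|\le D_F$, is established \emph{deterministically} by induction, so the drift never appears as a third random event whose probability must be multiplied in. A secondary weakness in your plan: even setting the independence issue aside, deriving your $\mathcal E_X$ by ``mimicking the second-moment recursion in Lemma~\ref{lem:Xdev} and Markov'' only controls a single time index; you would still need a maximal inequality to make the excursion bound uniform over $n_i<k\le n_{i+1}$.
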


\begin{proof}
For notational simplicity, we let $n_i=o$ and $m=n_{i+1}-n_i$.  By iterating \eqref{sys:SGLD} $k$ times, we write 
\begin{align*}
X_{o+k}=X_{o}+\sum_{j=1}^k\eta_{o+j} \nabla F(X_{o+j-1})+Y_k+Z_k
\end{align*}
where 
\[
Y_k:=\sum_{j=1}^{k}\eta_{o+j} \xi_{o+j},\quad Z_k:=\sum_{j=1}^{k}\delta_0 \sqrt{\eta_{o+j}}\zeta_{o+j}.
\]
Let $z=z_0-X_o$. Denote event:
\[
\calA:=\{\|Z_{m}-z\|\leq \epsilon, \|Z_{k}\|\leq \|z\|+\epsilon+2,\|Y_k\|\leq 2\delta\sqrt{B_4}, k=1,\ldots,m\}.
\] 
Apply Lemma \ref{lem:Brownian} with $a_j=\eta_{o+j}$, we know that 
\[
\Prob(\calA)\geq \frac14  \Prob\left(\|Z-\tfrac{z}{\delta_0\sqrt{\delta}}\|\leq \delta_0\sqrt{\delta/2}\right)
\]
Note that the lower bound decreases as $\|z\|$ increase, so we can obtain an lower bound by considering $z$ with the maximum norm. 
Now consider when $\calA$ takes place. Note that if $\|X_{o+j}\|\leq 4D_X$ for all $j\leq k-1$ then $\nabla F(X_{o+j})\leq D_F$, 
\[
\|X_{o+k}\|\leq \|X_{o}\|+\eta_{o+1:o+k} D_F+\|z\|+\epsilon+2\delta\sqrt{B_4}\leq 4D_X,
\]
so $\|X_{o+j}\|\leq 4D_X$ for all $j=1,\ldots, m$. Moreover, 
\[
\|X_{o+m}-z_0\|\leq \|Z_m-z_0\|+\|Y_m\|+\eta_{o+1:o+m}D_F\leq (2D_F+1)\epsilon+2\delta\sqrt{B_4}. 
\]
\end{proof}

\begin{lem}
\label{lem:Brownian}
For any sequence $a_k>0$ such that $\delta \leq a_{1:n}:=\sum_{j=1}^n a_j\leq 2\delta$ and  $\delta<\frac1d$, the following holds:
\begin{enumerate}[a)]
\item Suppose we let
\[
Z_{k}=\sum_{j=1}^{k}\delta_0\sqrt{a_j}\zeta_j,\quad \zeta_j\sim \mathcal{N}(0, I_d),\quad k\geq o+1
\]
Then for any target vector $z$ and distance $r$
\[
\Prob(\|Z_n-z\|\leq r, \|Z_k\|\leq \|z\|+r+2\delta_0,k=1,\ldots, n)\geq \frac34\Prob(\|Z-\tfrac{z}{\delta_0\sqrt{\delta}}\|\leq \tfrac{r}{\delta_0\sqrt{2\delta}})
\]
where $Z$ is a random variable follows $\mathcal{N}(0, I_d)$. 
\item Let $\mathcal{F}_Z$ denote the $\sigma$-algebra generated by $Z_1,\ldots, Z_n$ as in a). Suppose $\xi_k$ is a sequence of random vectors, such that 
\[
\E (\xi_k |\mathcal{F}_Z,\xi_j,j<k)=\mathbf{0},\quad \E  (\|\xi_k\|^2 |\mathcal{F}_Z,\xi_j,j<k)\leq B_4.
\]
Let $Y_k=\sum_{j=1}^k a_j\xi_j$  then 
\[
\Prob(\|Y_k\|\leq 2\delta\sqrt{ B_4}|\mathcal{F}_Z)\geq \frac12. 
\]
\end{enumerate}

\end{lem}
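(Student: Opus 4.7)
The plan is to treat parts (a) and (b) by different techniques: part (a) by a Brownian-bridge decomposition that separates the terminal value of the Gaussian walk $Z_n$ from the trajectory $\{Z_k\}$, and part (b) by a conditional second-moment (Chebyshev) calculation using the martingale property of $Y_k$ given $\mathcal{F}_Z$.

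For part (a), I would begin by writing $Z_n = \delta_0\sqrt{a_{1:n}}\,\tilde Z$ with $\tilde Z\sim \mathcal N(0,I_d)$, and then decompose each partial sum as
\[
Z_k = \frac{a_{1:k}}{a_{1:n}}\,Z_n + B_k,
\]
where $B_k$ is the Gaussian-bridge residual. A direct covariance computation shows $\mathrm{Cov}(B_k,Z_n)=0$, so (being jointly Gaussian) $B_k$ is independent of $Z_n$, with
\[
\E\|B_k\|^2 = d\,\delta_0^2\,\frac{a_{1:k}\,a_{k+1:n}}{a_{1:n}} \le \frac{d\,\delta_0^2\,a_{1:n}}{4}.
\]
The assumptions $\delta<1/d$ and $a_{1:n}\le 2\delta$ then force this variance to be at most $\delta_0^2/2$, so a Brownian-bridge maximal inequality plus Markov would yield $\max_k\|B_k\|\le 2\delta_0$ with probability at least $3/4$. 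On this event, the triangle inequality gives $\|Z_k\|\le (a_{1:k}/a_{1:n})\|Z_n\|+\|B_k\|\le \|z\|+r+2\delta_0$ whenever $\|Z_n-z\|\le r$, so independence of $B_k$ and $Z_n$ lets me multiply probabilities. The final step is to convert $\Prob(\|Z_n-z\|\le r)=\Prob(\|\tilde Z-z/(\delta_0\sqrt{a_{1:n}})\|\le r/(\delta_0\sqrt{a_{1:n}}))$ into the uniform lower bound with $\sqrt{\delta}$ in the center and $\sqrt{2\delta}$ in the radius; this is done by a geometric containment argument that picks the worst case of $a_{1:n}\in[\delta,2\delta]$ so that the comparison ball (center $z/(\delta_0\sqrt{\delta})$, radius $r/(\delta_0\sqrt{2\delta})$) sits inside every actual ball.

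For part (b), I would apply Markov's inequality to $\|Y_k\|^2$ conditional on $\mathcal F_Z$. The conditional martingale structure of $(Y_k)$ (with increments $a_j\xi_j$ being conditionally mean-zero) together with $\E(\|\xi_j\|^2\mid \mathcal F_Z,\xi_{<j})\le B_4$ gives
\[
\E(\|Y_k\|^2\mid\mathcal F_Z)=\sum_{j=1}^k a_j^2\,\E(\|\xi_j\|^2\mid\mathcal F_Z,\xi_{<j})\le B_4\sum_{j=1}^k a_j^2.
\]
Under the implicit size condition $\max_j a_j\le \delta$ (which holds in the application to Lemma~\ref{lem:small} since $a_j=\eta_{o+j}\le \eta_{n_0}\le \delta$), one has $\sum_j a_j^2 \le (\max_j a_j)\,a_{1:n}\le 2\delta^2$. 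Markov then yields $\Prob(\|Y_k\|> 2\delta\sqrt{B_4}\mid\mathcal F_Z)\le 2\delta^2 B_4/(4\delta^2 B_4)=1/2$, giving the claim.

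The main obstacle is in part (a): obtaining the clean constant $3/4$ demands careful bookkeeping, because the bridge term $B_k$ has variance only modestly smaller than $\delta_0^2$, so the Doob $L^2$ maximal inequality alone is too crude for the max-over-$k$ statement. I expect to need a sharper Brownian-bridge maximal bound (e.g.\ exploiting the exact reflection-principle density for $\max_t\|B_t\|$ coordinate-by-coordinate and a union bound over coordinates, combined with $\delta d<1$) rather than Doob. The rescaling step from $a_{1:n}$-dependent Gaussian probabilities to the uniform $\sqrt{\delta}/\sqrt{2\delta}$ bound is the second delicate point, since it requires checking that the geometric containment is valid for all admissible $a_{1:n}$.
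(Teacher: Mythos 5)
Your overall strategy for part~(a)---decompose $Z_k$ into its Brownian-bridge projection onto $Z_n$ plus an independent residual $B_k$, bound the residual uniformly, and convert the terminal-ball probability to a $\sqrt{\delta},\sqrt{2\delta}$-scaled comparison---is exactly the route the paper takes, so the architecture is right. But two concrete steps do not go through as written.

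First, the rescaling step in part~(a) via ``geometric containment'' is false. You claim the comparison ball $B\bigl(z/(\delta_0\sqrt{\delta}),\, r/(\delta_0\sqrt{2\delta})\bigr)$ sits inside each actual ball $B\bigl(z/(\delta_0\sqrt{a_{1:n}}),\, r/(\delta_0\sqrt{a_{1:n}})\bigr)$; but the containment requires $\frac{\|z\|}{\delta_0}\bigl(\tfrac{1}{\sqrt{\delta}}-\tfrac{1}{\sqrt{a_{1:n}}}\bigr)+\tfrac{r}{\delta_0\sqrt{2\delta}}\le \tfrac{r}{\delta_0\sqrt{a_{1:n}}}$, and at $a_{1:n}=2\delta$ this reads $\frac{\|z\|}{\delta_0}\bigl(\tfrac{1}{\sqrt{\delta}}-\tfrac{1}{\sqrt{2\delta}}\bigr)\le 0$, which fails whenever $z\ne 0$. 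What the comparison actually uses is not set inclusion but the monotonicity of the spherically symmetric unimodal Gaussian ball probability: $\Prob(\|Z-\mu\|\le R)$ is non-increasing in $\|\mu\|$ and non-decreasing in $R$, so moving the centre out to its worst case $\|z\|/(\delta_0\sqrt{\delta})$ and shrinking the radius to its worst case $r/(\delta_0\sqrt{2\delta})$ can only decrease the probability. That (Anderson-type) monotonicity argument is the correct replacement for containment. On the maximal inequality for $B_k$, the paper does not need a reflection-principle or coordinate-wise union bound at all; it simply notes $\|B_t\|^2 - td$ is a martingale, stops it the first time $\|B_t\|$ exceeds the threshold, applies optional stopping and Markov, and then uses $d\delta<1$ to land the constant. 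Your coordinate-wise route would incur an extra factor of $d$ and is strictly more work.

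Second, part~(b) as written proves only a pointwise-in-$k$ bound: you apply Markov to $\|Y_k\|^2$ for a fixed $k$, but the statement (and its use in Lemma~\ref{lem:small}, where the event $\mathcal{A}$ demands $\|Y_k\|\le 2\delta\sqrt{B_4}$ for every $k=1,\dots,m$) requires a maximal bound over $k$. The paper handles this with a conditional supermartingale $\|Y_k\|^2 - a_{1:k}B_4$ plus a stopping time; an equally quick fix is to replace your single-$k$ Markov step by Kolmogorov's maximal inequality for the conditional $L^2$-martingale $(Y_k)$. Without that change the step is a genuine gap. (Your bound $\sum_j a_j^2\le 2\delta^2$ via $\max_j a_j\le\delta$ is fine once the maximal inequality is in place, and matches the application where $a_j=\eta_{o+j}\le\delta$.)
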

\begin{proof}
For claim a), it is easy to verify the joint distribution of $[Z_{1},\ldots, Z_n]/\delta_0$ is the same as the distribution of 
$[W_{t_1},\ldots, W_{t_n}]$, where $t_i=a_{1:i}$ and $W$ is a $d$-dimensional Wiener process.
Note that 
\begin{align*}
\Prob(\|W_{t_n}-z\|\leq  r/\delta_0)=\Prob(\|\sqrt{a_{1:n}}Z-z\|\leq r/\delta_0)&=\Prob(\|Z-z/\sqrt{a_{1:n}}\|\leq r/\delta_0\sqrt{a_{1:n}})\\
&\leq \Prob(\|Z-z/\sqrt{\delta}\|\leq r/\delta_0\sqrt{2\delta}).
\end{align*}
And when conditioned on $W_{t_n}=w$, $W_{t<t_n}$ is known as a Brownian bridge. By Karatzas and Shreve, The distribution of its path is the same as 
\[
X_t=\frac{t}{t_n}w+(B_t-\frac{t}{t_n}B_{t_n}),
\]
where $B_t$ is another independent Wiener process. So with any $w, \|w\|\leq \|z\|+r$,
\[
\Prob(\|W_t\|\leq \|z\|+r+2,\forall t\leq t_n|W_{t_n}=w)=\Prob(\|X_t\|\leq \|z\|+r+2,t\leq t_n)\geq \Prob(\|B_t\|\leq 2,\forall t\leq t_n). 
\]
Consider $b_t=\|B_t\|^2-td$. It is easy to check that $b_t$ is a martingale. Let $\tau=\inf\{t: \|B_t\|^2\geq 4d\}$. Then 
\[
\E \|B_{t\wedge\tau}\|^2=\E b_{t\wedge \tau}+d\E t\wedge \tau\leq dt_n. 
\]
So by Markov inequality
\[
\Prob(\|B_{t}\|^2\geq 4\text{ for some }t\leq t_n)\leq \Prob(\|B_{t\wedge \tau}\|^2\geq 4)\leq \frac{dt_n}{4}\leq \frac14. 
\]
Claim a) can be obtained by tower property. 

For claim b), note that $y_k=\|Y_k\|^2-a_{1:k}B_4 $ is a supermartingale conditioned on $\mathcal{F}_Z$. Let $\tau$ be the first time that $\|Y_{k}\|^2\geq 4 \delta B_4$, then the expectation conditioned on $\mathcal{F}_Z$ yields
\[
\E_Z \|Y_{\tau\wedge k}\|^2\leq \E_Z a_{1:\tau\wedge k}B_4+\E y_{k\wedge \tau}\leq  B_4\E_Z a_{1:\tau\wedge k}\leq 2B_4\delta
\]
So Markov inequality leads to 
\[
\Prob_Z(Y_{k}\leq 2\sqrt{\delta B_4},\forall k=1,\ldots, n)\geq \frac12. 
\]

\end{proof}

\subsection{Proof for Theorem \ref{thm:ergodic}}
Once we have both recurrence and reachability, it is intuitive to see why SGLD can visit an arbitrary point $z_0$ in the space: SGLD will visit a sub-level set of $F$ infinitely many times, and each time it has a positive probability to visit  $z_0$.
\begin{proof}
Let $\delta$ satisfies the following:
Let $n_0$ be an index such that $\eta_n\leq \delta$ when $n\geq n_0$.  Let $n_k$ be sequence of iteration index $n_{k+1}=\inf\{s>n_{k}+1: \eta_{n_k+1:s}\geq \delta\}$. 

Define a sequence of stopping times with $\tau_0=K$,
\[ \tau_0=t_0, \quad
 \tau_{k+1}=\inf\{t: t\geq \tau_k +1, F(X_{n_{t}})\leq M_V\}, \quad k \ge 1;
\]
and  the stopping time $$\tau_*=\inf\{t>0: \|X_{n_t}-z_0\|\leq\epsilon \}.$$
To prove Theorem \ref{thm:ergodic}, it suffices  to show that $\Prob(\tau_*\geq T)\leq p_0$ with a choice of $T$. Then note that for any $J$
\begin{align*}
\Prob(\tau_*\geq T)&=\Prob(\tau_*\geq T,\tau_J > T)+\Prob(\tau_*\geq  T,\tau_J \leq T)\\
&\leq \Prob(\tau_J>T)+\Prob(\|X_{n_{\tau_k+1}}-z_0\|>\epsilon ,\tau_J\leq T, k=1\cdots, J)\\
&\leq \Prob(\tau_J>T)+\Prob(\|X_{n_{\tau_k+1}}-z_0\|>\epsilon, k=1\cdots, J).
\end{align*}
Lemma \ref{lem:stoppingtimes} shows that 
\[
\E \tau_J\leq K+J+\frac{8J}{c_7\delta}.
\] 
In particular, by Markov inequality, we can choose an $T\geq \frac{K+J+\frac{8J}{c_7\delta}}{2p_0}$ such that
\[
\Prob(\tau_J\geq T)\leq \frac{\E\tau_K}{T}\leq  \frac12p_0.
\]
Note that $F(X_{n_{\tau_k}})\leq M_V$ implies $\|X_{n_{\tau_k}}\|\leq D_X$.
So by Lemma \ref{lem:small}, there is a $c_\alpha$
such that 
\[
\Prob_{n_{\tau_k}}(\|X_{n_{\tau_k+1}}-z_0\|>(8D_X+1)\epsilon+2\delta\sqrt{B_4})\leq 1-c_\alpha.
\] 
Because that $V_i(\tau_k)\leq M_V$, so
\[
\Prob(\|X_{n_{\tau_k+1}}-z_0\|>\epsilon, k=1\cdots, J)=\E \prod_{k=1}^J \Prob_{n_{\tau_k}} (\|X_{n_{\tau_k+1}}-z_0\|>\epsilon)\leq (1-c_\alpha)^J.
\]
Pick a large $J=\lceil \frac{\log \frac12p_0}{\log(1-c_\alpha)}\rceil$ so that $(1-c_\alpha)^J\leq \frac12 p_0$. In summary we need
\[
T\leq  \frac{\left\lceil\tfrac{2\log \frac12 M_V /\E F(X_{n_0})}{c_7 \delta} \right\rceil+\lceil\frac{\log \frac12p_0}{\log(1-c_\alpha)}\rceil (1+\tfrac{8}{c_7\delta})}{2p_0}.
\]
Also recall that $2\delta T\geq \eta_{n_0:n_T}\propto \eta_0  n_T^{1-\alpha}$, so $n_T\propto [\frac{T}{2\delta\eta_0}]^{\frac{1}{1-\alpha}}$. 
\end{proof}

\subsection{\blue{Proof of Lemma \ref{lem:x2}}}
\begin{proof}
It is easy to see that $X_n$ and $Y_n$ are both Gaussian distributed with mean being zero, given the linearity. The variance at step $n$ follow the update
\[
V^x_n=(1-\eta_n)^2 V^x_{n-1}+\eta_n,\quad V^y_n=(1-\eta_n)^2 V^y_{n-1}+\eta^2_{n}. 
\]
For any fixed $\epsilon>0$, there is an $m$ so that  $\eta_n\leq \epsilon$ for all $n\geq m$. 

Then if $V^y_{n-1}\geq 2\epsilon$
\[
V^y_n\leq (1-\eta_n)V^y_{n-1}-\eta_n(V^y_{n-1}-\eta_nV^y_{n-1}-\eta_n)\leq (1-\eta_n)V^y_{n-1}\leq V^y_{n-1}.
\]
If $V^y_k\geq 2\epsilon$ for $m\leq k\leq n, V^y_n\leq \exp(-\eta_{m+1:n})V^y_m$. Since $\eta_{m+1:n}\to \infty$,  there is a $\tau$ such that $V^y_\tau\leq 2\epsilon$. And after $n\geq \tau$, there is two scenarios, if $V^y_{n-1}\geq 2\epsilon$, $V^y_n\leq V^y_{n-1}$. If $V^y_{n-1}\leq 2\epsilon, V^y_n\leq V^y_{n-1}+\eta^2_n\leq 2\epsilon+4\epsilon^2.$ So we conclude that
\[
\lim\sup_{n\to \infty}V^y_n\leq 2\epsilon+4\epsilon^2.
\]
Since this holds for all $\epsilon$, we find that $\lim_{n}V^y_n=0$. As for $V^x_n$, we set $\Delta^x_n=V^x_n-\frac12$. We find its update rule follows
\[
\Delta^x_n=(1-\eta_n)^2\Delta_n^x+\frac12\eta_n^2.
\]
This is similar to the update rule of $V^y_n$. So by the same arguments, we find that $\Delta^x_n\to 0$ and $V^x_n\to \frac12$. 
\end{proof}

	\section{Technical Proofs in Section~\ref{sec:examples}}
	\label{sec:proof_examples}
In this section, we provide detailed calculations of the assumption constants for each example discussed in Section \ref{sec:examples}

\begin{proof}[Proof of Proposition~\ref{prop:lr}]
	The Hessian of $F$ is given by $A$, which is PSD, so $D_4, C_2$ can be bounded by  tr$(A)$.  
	Next, we check the noise term in the stochastic gradient: 
	\[
	\xi_{n+1}=A(X_{n+1}-x^*)-a_{n+1}(a_{n+1}^TX_n-b_{n+1})=(A-a_{n+1}a_{n+1}^T)(X_n-x^*)+ \epsilon_{n+1} a_{n+1}. 
	\]
	By H\"{o}lder's inequality,  there is a constant $c$ such that 
	\begin{align*}
	\E_n \|\xi_{n+1}\|^4&\leq 8\E_n  \|(A-a_{n+1}a_{n+1}^T)(X_n-x^*)\|^4+8 \E_n \|\epsilon_{n+1}a_{n+1}\|^4 \\
	&\leq 8 \|X_n-x^*\|^4 \E_n (\|A\|+\|a_{n+1}\|^2 )^4+ 24 \E_n \|a_{n+1}\|^4\\
	&\leq 64 \|X_n-x^*\|^4\|A\|^4+ 64\times 105\|X_n-x^*\|^4\tr(A)^4+24\times 3 \text{tr}(A)^2\\
	&\leq c^2\tr(A)^4(\|X_n-x^*\|^4+1). 
	\end{align*}
	Therefore we can choose $B_2=c\tr(A)^2(\Gamma^2+1)$. By Cauchy's inequality, then 
	\[
	\E_n \xi_{n+1}^T \nabla^2 F(X_n)\xi_{n+1}\leq \tr(A) \E_n \|\xi_{n+1}\|^2\leq c \tr(A)^3(\Gamma^2+1). 
	\]
	
	To check Assumption \ref{aspt:coercive}, note that  $\nabla F(x)=A(x-x^*)$, so  $\|\nabla F(x)\|^2\geq 2\lambda_{\min}(A) F(x)^2$. Also note that 
\[
F(x)=\frac12 (x-x^*)^TA(x-x^*)+\frac12\geq \frac12\lambda_{\min}(A)\|x-x^*\|^2\geq  \frac14\lambda_{\min}(A)\|x\|^2- \frac14\lambda_{\min}(A)\|x^*\|^2. 
\]
\end{proof}

\begin{proof}[Proof of Proposition~\ref{prop:mf}]
The gradient and Hessian are given by 
	\[
	\nabla F(X)=2(XX^T-M)X \quad\Rightarrow \quad \|\nabla F(X)\|\leq 4\|\Gamma\|^3. 
	\]
	The Hessian can be defined by its product with two specified matrices $Z$:
	\[
	\langle Z,\nabla^2 F(X)Z \rangle =\|XZ^T+ZX^T\|^2_F+2\langle XX^T, ZZ^T\rangle- 2\langle M, ZZ^T\rangle
	\]
	We check $C_2=\max_{\|Z\|_F=1} \langle Z,\nabla^2 F(X_n)Z \rangle$,
	but for any $Z$ with Frobenius norm $1$, we find that 
	\[
	\|X_nZ^T\|_F^2 \leq \|X_n \|^2_F \|Z\|_F^2 \leq \Gamma,\quad \langle X_nX_n^T, ZZ^T\rangle\leq \|X_n\|^2_F \|Z\|^2\leq \Gamma, \langle M,ZZ^T\rangle\geq 0, 
	\]
	So $C_2\leq 24\Gamma$. Next we check $D_4=\tr_+(\nabla^2 f)$. Since $M$ is positive definite, so $\langle M, ZZ^T\rangle\geq 0$, so 
	\[
	\langle Z,\nabla^2 F(X)Z \rangle \leq \|XZ^T+ZX^T\|^2_F+2\langle XX^T, ZZ^T\rangle
	\] 
	Then for each eigenvector $Z$ of $\nabla^2 f$, its eigenvalue $\lambda$ has its positive part bounded by 
	\[
	[\lambda]_+ =[\langle Z,\nabla^2 F(X)Z \rangle]_+ \leq \|XZ^T+ZX^T\|^2_F+2\langle XX^T, ZZ^T\rangle
	\]
	Therefore
	\[
	D_4\leq  \sum_{i,j} \|Xe_{i,j}^T+e_{i,j}X^T\|^2_F+2\langle XX^T, e_{i,j}e_{i,j}^T\rangle\leq (4m+2r) \Gamma.
	\]
	which leads to the same bound. 
	
	Next we verify the bound for $C_3=\max_{v_i,v_j,v_k}|\partial_{v_i}\partial_{v_j} \partial_{v_k}F(X)|$ for three different specified $m\times r$ matrices. Then the derivatives can be computed as 
	\[
	\partial_{v_i} F(X)= 2\langle (XX^T-M) X,v_i\rangle
	\]
	\[
	\partial_{v_j}\partial_{v_i} F(X)= 2\langle (v_jX^T+Xv_j^T) X+(XX^T-M) v_j,v_i\rangle
	\]
	\[
	\partial_{v_k}\partial_{v_j}\partial_{v_i} F(X)= 2\langle (v_jv_k^T+v_kv_j^T) X+(v_jX^T+Xv_j^T)  v_k+(v_kX^T+Xv_k^T) v_j,v_i\rangle
	\]
	Clearly, when $\|v_i\|_F,\|v_j\|_F,\|v_k\|_F=1$, and $\|X\|_F\leq \sqrt{\Gamma}$.
	\[
	|\partial_{v_k}\partial_{v_j}\partial_{v_i} F(X)|\leq 12\sqrt{\Gamma}.
	\]
	The stochastic gradient of this problem is given by 
	\[
	\nabla f(X,\omega)= \langle XX^T-M,\omega \rangle (\omega+\omega^T) X 
	\]
	So the gradient noise is given by $D_n=X_nX_n^T-M$
	\[
	\xi_{n+1}=\nabla f(X_n,\omega_{n+1})-\nabla F(X_{n})=(2D_n-\langle D_n, \omega_{n+1}\rangle(\omega_{n+1}^T+\omega_{n+1})) X_n. 
	\]
	
	Since $\|D_n\|_F\leq 2\Gamma$, by Lemma \ref{lem:sparsematrix},
	\[
	\E_n \|\xi_{n+1}\|^4_F\leq 8 \|D_n\|_F^4\Gamma^2+ 8\E\| \langle D_n, \omega\rangle (\omega+\omega^T)X_n\|^4\leq c m^2r^2 \Gamma^6. 
	\]
	So $B_2$ can be chosen as $\sqrt{c}mr\Gamma^3$. 
	This leads to 
	\[
	B_1=\E_n \langle \xi_{n+1}, \nabla^2 F(X_n) \xi_{n+1}\rangle\leq cmr \Gamma^4. 
	\]
	
	To check Assumption \ref{aspt:coercive}, note that 
\[
\|X\|^2_F\leq \|XX^T-M\|_F+\|M\|_F. 
\]
Moreover
\begin{align*}
\|\nabla F(X)\|^2=4 \text{tr}((XX^T-M)^2 XX^T)&\geq 4 \text{tr}((XX^T-M)^3)-4F(X)\lambda_{\max}(M)\\
&\geq \frac{4}{\sqrt{d}} F(X)^{\frac{3}{2}}-4F(X)\lambda_{\max}(M).
\end{align*}
So if $F(X)\geq 2d\lambda^2_{\max}(M)$, then $\|\nabla F(X)\|^2\geq 4\lambda_{\max}(M)F(X)$. As a consequence, the following is always true
\[
\|\nabla F(X)\|^2\geq 4\lambda_{\max}(M)F(X)-8d\lambda^3_{\max}(M). 
\]
\end{proof}

\begin{lem}
	\label{lem:sparsematrix}
	Let $A$ and $B$ be two rank $r$ $m\times m$ symmetric matrices, for Gaussian noise matrices,  there is a universal constant $c$ such that 

	\[
	\E \|\langle B, \omega\rangle (\omega+\omega^T)A\|^4_F\leq cm^2 r^2 \|A\|^4_F\|B\|_F^4. 
	\]
\end{lem}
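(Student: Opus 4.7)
The plan is to apply the Cauchy--Schwarz inequality to peel off the scalar factor $\langle B,\omega\rangle$, and then exploit the rank-$r$ structure of $A$ through its SVD to bound the remaining matrix $8$-th moment. Since $B$ is symmetric, $\langle B,\omega\rangle=\sum_{ij}B_{ij}\omega_{ij}$ is a centered Gaussian with variance $\|B\|_F^2$, so $\E\langle B,\omega\rangle^8=105\|B\|_F^8$. Cauchy--Schwarz then gives
$$
\E\|\langle B,\omega\rangle (\omega+\omega^T)A\|_F^4
\leq\sqrt{\E\langle B,\omega\rangle^8}\cdot\sqrt{\E\|SA\|_F^8},\qquad S:=\omega+\omega^T,
$$
so it suffices to show $\E\|SA\|_F^8\leq c\,r^3 m^4\|A\|_F^8$ for a universal $c$.

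For this, I would write the compact SVD $A=\sum_{\ell=1}^r\sigma_\ell u_\ell v_\ell^T$ with $\{u_\ell\}$ and $\{v_\ell\}$ orthonormal. Orthonormality of the $v_\ell$'s gives
$$
\|SA\|_F^2=\sum_{\ell=1}^r\sigma_\ell^2 W_\ell,\qquad W_\ell:=\|Su_\ell\|^2.
$$
The power-mean inequality $(\sum_{\ell=1}^r x_\ell)^4\leq r^3\sum_\ell x_\ell^4$ with $x_\ell=\sigma_\ell^2 W_\ell$ yields $\|SA\|_F^8\leq r^3\sum_\ell \sigma_\ell^8 W_\ell^4$, and since $\sigma_\ell^8\leq\sigma_\ell^2\|A\|_F^6$, summation gives $\sum_\ell\sigma_\ell^8\leq\|A\|_F^8$. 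The remaining task is to bound $\E W_\ell^4=\E\|Su_\ell\|^8$. Entry-wise, $\E\omega_{ij}\omega_{kl}=\delta_{ik}\delta_{jl}$ gives $\E[S_{ij}S_{kl}]=2(\delta_{ik}\delta_{jl}+\delta_{il}\delta_{jk})$, so for any unit vector $u$, the vector $Su$ is centered Gaussian with covariance $2I_m+2uu^T$. Hence $\|Su\|^2$ has the same law as $2\chi^2_m+2\chi^2_1$ with the two chi-squareds independent, and the standard formula $\E(\chi^2_m)^k=m(m+2)\cdots(m+2k-2)$ yields $\E\|Su\|^8=O(m^4)$.

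Putting these estimates together gives $\E\|SA\|_F^8\leq c\,r^3 m^4\|A\|_F^8$, hence $\sqrt{\E\|SA\|_F^8}\leq c\,r^{3/2}m^2\|A\|_F^4\leq c\,r^2 m^2\|A\|_F^4$, and Cauchy--Schwarz closes the proof. The only mildly delicate step is identifying the covariance of $Su$ and invoking the Gaussian quadratic-form moment formula; everything else is algebraic bookkeeping. Observe that the argument actually yields an $r^{3/2}$ dependence, which is comfortably absorbed into the $r^2$ claimed in the lemma, giving some slack for the universal constant $c$.
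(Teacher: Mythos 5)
Your proof is correct and follows the same overall strategy as the paper: apply Cauchy--Schwarz to separate $\langle B,\omega\rangle$ from the matrix factor, compute $\E\langle B,\omega\rangle^8 = 105\|B\|_F^8$ directly, and then bound the eighth Frobenius moment of $(\omega+\omega^T)A$. The two proofs diverge only in how this last moment is handled. The paper rotates so that $A$ is diagonal, observes $\|\omega A\|_F^2 \le \|A\|_F^2\sum_{i\le m,\,j\le r}\omega_{ij}^2 \sim \|A\|_F^2\chi^2_{mr}$, and reads off $\E\|\omega A\|_F^8 = O(m^4 r^4)\|A\|_F^8$ from the chi-squared moment formula. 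You instead use the SVD of $A$ to write $\|SA\|_F^2=\sum_\ell\sigma_\ell^2\|Su_\ell\|^2$, apply a power-mean inequality, and identify each $\|Su_\ell\|^2$ as a Gaussian quadratic form with covariance $2I_m+2u_\ell u_\ell^T$. This is slightly sharper in $r$ (you obtain $r^3$ inside the square root rather than $r^4$, hence $r^{3/2}$ rather than $r^2$ outside), though both are absorbed into the stated $r^2$ bound. One small slip: the eigenvalues of $2I_m+2uu^T$ are $4$ (once) and $2$ (with multiplicity $m-1$), so $\|Su\|^2$ has the law of $4\chi^2_1 + 2\chi^2_{m-1}$, not $2\chi^2_m + 2\chi^2_1$; both are $O(m)$ in distribution so $\E\|Su\|^8 = O(m^4)$ holds either way and the final bound is unaffected.
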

\begin{proof}[Proof of Lemma~\ref{lem:sparsematrix}]
	Since the distribution of $\omega$ and matrix inner product are invariant under rotation, so we can assume $A$ is diagonal. Let $\lambda_1,\ldots,\lambda_r$ be the diagonal entries of $A$. Then 
	\[
	\langle A,\omega\rangle =\sum_{i=1}^r  \lambda_i \omega_{i,i}\sim \mathcal{N}(0, \|A\|_F^2),\quad \E \langle A,\omega\rangle^8=105\|A\|_F^8. 
	\]
	
	\[
	\E \|\langle B, \omega\rangle (\omega+\omega^T)A\|^4_F\leq \sqrt{\E \langle B, \omega\rangle^8}\sqrt{ \E \|(\omega+\omega^T)A\|^8_F}\leq 16\sqrt{\E \langle B, \omega\rangle^4}\sqrt{ \E \|\omega A\|^8_F}
	\]
	Next we note that with $A$ being diagonal, then 
	\[
	\|\omega A\|^2_F=\sum_{i,j} \lambda^2_j |\omega_{i,j}|^2\leq \|A\|^2_F \sum_{i,j}  |\omega_{i,j}|^2
	\]
	Since $\sum_{i,j}  |\omega_{i,j}|^2\sim \chi_{mr}^2$, so $\E \|\omega A\|^8_F=mr(mr+2)(mr+4)(mr+6)\leq 2m^4 r^4$ for large $m$. 
	%
\end{proof}

\begin{proof}[Proof of Proposition~\ref{prop:pca}]
	The gradient is given by 
	\[
	\nabla F(X)=(XX^T-M)X,\quad \nabla f(X,\omega)= (XX^T-M-\omega )X
	\]
	So the gradient noise is given by 
	\[
	\xi_{n+1}=\nabla f(X_n,\omega_{n+1})-\nabla F(X_{n})=\omega_{n+1} X_n. 
	\]
	The constants $C_2,D_4, c_5, D_5$ concern only of the population loss function $F(X)$, which is identical to the previous proposition. We just need to verify $B_2$ and $B_1$. 
	
	Let $(\lambda_j,v_j)$ be the (eigenvalues, eigenvectors) of  $M$, then the data sample share the same distribution as
	\[
	x_i\sim \sum_{j=1}^d\sqrt{\lambda_j} z_j v_j,\quad z_j\sim \mathcal{N}(0,1)
	\]
	Note that for any combination of $i,j,k,n$,  $\E z_i^2z_j^2z_k^2z_n^2\leq \E z_i^8=105$.
	\[
	\E \|x_ix_i^T\|^4_F=\E (\sum_{j=1}^m \lambda_j z_j^2)^4\leq 105 \tr(M)^4. 
	\]
	So there is a universal $c$ such that 
	\[
	\E \|\omega_i\|^4_F\leq 8\E \|x_ix_i^T\|^4_F+8\|M\|_F^4\leq c\tr(M)^4. 
	\]
	So we can choose $B_2=\tr(M)^2$. 
	
	Next we check $B_1$ and $C_2$. Recall that
	\[
	\langle Z,\nabla^2 F(X)Z \rangle =\|XZ^T+ZX^T\|^2_F+2\langle XX^T, ZZ^T\rangle- 2\langle M, ZZ^T\rangle
	\] 
	So 
	\begin{align*}
	B_1&=\E_n \langle \xi_{n+1}, \nabla^2 F(X_n) \xi_{n+1}\rangle \\
	&=4\E_n\|X_n \omega_{n+1} X^T_n+ \omega_{n+1}X_n^TX_n\|^2_F+8\langle X_n X_n^T-M, \E_n \omega_{n+1}X_n X_n^T\omega^T_{n+1}\rangle .
	\end{align*}
	Then 
	\[
	\|X_n \omega_{n+1} X_n^T\|^2_F=\langle \omega_{n+1}, X_n^TX_n \omega_{n+1} X_n^T X_n\rangle= \| X_n^TX_n \omega_{n+1} \|_F^2\leq \| X_n^TX_n\|^2_F \|\omega_{n+1}\|^2_F
	\]
	The conditional expectation is bounded by $c\Gamma^2 \tr(M)^2$, and likewise for $\|\omega_{n+1}X_n^TX_n\|^2_F$
	So 
	\[
	B_1\leq  \E_n\left(8  \| \omega_{n+1} X^T_nX_n\|^2_F+8 \|\omega_{n+1}X_n^TX_n\|^2_F+8\|\omega_{n+1}X_n X_n^T\|^2_F+8\langle M, \omega_{n+1}X_n X_n^T\omega^T_{n+1}\rangle \right)\leq c\Gamma^2 \tr(M)^2. 
	\]
	As for the empirical loss function, we notice the gradient and Hessian of it satisfies 
	\[
	\partial_{v_i} f(X,\omega)= 2\langle (XX^T-xx^T) X,v_i\rangle
	\]
	\[
	\partial_{v_j}\partial_{v_i} f(X,\omega)= 2\langle (v_jX^T+Xv_j^T) X+(XX^T-M-\omega) v_j,v_i\rangle
	\]
	\[
	\partial_{v_k}\partial_{v_j}\partial_{v_i} f(X,\omega)= 2\langle (v_jv_k^T+v_kv_j^T) X+(v_jX^T+Xv_j^T)  v_k+(v_kX^T+Xv_k^T) v_j,v_i\rangle
	\]
	Under our condition, it is clear that for any $v_i, v_j, v_k$ of Frobenious norm $1$,
	\[
	|\partial_{v_i} f(X,\omega)|\leq (\Gamma+\|\omega\|_F)\Gamma^\frac12,\quad |\partial_{v_j}\partial_{v_i} f(X,\omega)|\leq (\Gamma+\|\omega\|_F),\quad
	|\partial_{v_k}\partial_{v_j}\partial_{v_i} f(X,\omega)|\leq \Gamma^\frac12. 
	\]
\end{proof}

	\bibliography{refs}
	
\end{document}